\documentclass[12pt]{article}

\usepackage{enumitem}
\usepackage{soul} 

\usepackage{graphicx,subfigure,amsmath,amssymb,amsfonts,bm,epsfig,epsf,url,dsfont}

\usepackage{hyperref,cleveref} 
\PassOptionsToPackage{linktocpage}{hyperref}

\usepackage{times}
\usepackage{bbm}      
\usepackage{booktabs}
\usepackage{cases}
\usepackage{fullpage}
\usepackage[small,bf]{caption}
\usepackage{algorithm}
\usepackage{algpseudocode}
\usepackage{mathtools}
\usepackage{thmtools,thm-restate}
\usepackage[top=1in,bottom=1in,left=1in,right=1in]{geometry}

\renewcommand{\phi}{\varphi}

\newcommand{\oL}{\overline{L}}
\newcommand{\diam}{\mathrm{diam}}
\renewcommand{\P}{\mathbb{P}}
\newcommand{\E}{\mathbb{E}}

\newcommand{\R}{\mathbb{R}}

\newcommand{\KL}{\mathrm{KL}}

\newcommand{\cA}{\mathcal{A}}
\newcommand{\cB}{\mathcal{B}}

\newcommand{\cT}{\mathcal{T}}

\newcommand{\Max}{\mathrm{Max}}
\newcommand{\Min}{\mathrm{Min}}
\newcommand{\rmd}{\mathrm{d}}
\def\ds1{\mathds{1}}
\renewcommand{\epsilon}{\varepsilon}
\newcommand{\eps}{\epsilon}

\newcommand{\argmin}{\mathop{\mathrm{argmin}}}

\renewcommand{\tilde}{\widetilde}

%Cadres d'algorithmes
\newlength{\minipagewidth}
\setlength{\minipagewidth}{\textwidth}
\setlength{\fboxsep}{3mm}
\addtolength{\minipagewidth}{-\fboxrule}
\addtolength{\minipagewidth}{-\fboxrule}
\addtolength{\minipagewidth}{-\fboxsep}
\addtolength{\minipagewidth}{-\fboxsep}

\newcommand{\beq}{\begin{equation}}
\newcommand{\eeq}{\end{equation}}

\newcommand{\beqa}{\begin{eqnarray}}
\newcommand{\eeqa}{\end{eqnarray}}

\newcommand{\beqan}{\begin{eqnarray*}}
\newcommand{\eeqan}{\end{eqnarray*}}

\def\ba#1\ea{\begin{align*}#1\end{align*}} %\ba = \begin{algin*}, \ea = \end{align*}
\def\banum#1\eanum{\begin{align}#1\end{align}} %\banum = \begin{algin}, \eanum

\def \ent {\mathrm{Ent}}

\def\eps{\varepsilon}
\newtheorem{theorem}{Theorem}
\newtheorem{corollary}{Corollary}

\newtheorem{lemma}{Lemma}

\newtheorem{remark}{Remark}
\newtheorem{proposition}{Proposition}

\newtheorem{definition}{Definition}
\newcommand{\BlackBox}{\rule{1.5ex}{1.5ex}}  % end of proof
\newenvironment{proof}{\par\noindent{\bf Proof\ }}{\hfill\BlackBox\\[2mm]}

\begin{document}

\title{First-Order Bayesian Regret Analysis of Thompson Sampling}

\author{S\'ebastien Bubeck \\
Microsoft Research
\and Mark Sellke \thanks{This work was done while M. Sellke was an intern at Microsoft Research.} \\
Stanford University}
\date{}
\maketitle

\begin{abstract}
We address online combinatorial optimization when the player has a prior over the adversary's sequence of losses. In this setting, Russo and Van Roy proposed an information theoretic analysis of Thompson Sampling based on the {\em information ratio}, allowing for elegant proofs of Bayesian regret bounds. In this paper we introduce three novel ideas to this line of work. First we propose a new quantity, the {\em scale-sensitive information ratio}, which allows us to obtain more refined {\em first-order regret bounds} (i.e., bounds of the form $O(\sqrt{L^*})$ where $L^*$ is the loss of the best combinatorial action). Second we replace the entropy over combinatorial actions by a {\em coordinate entropy}, which allows us to obtain the first optimal worst-case bound for Thompson Sampling in the combinatorial setting. We additionally introduce a novel link between Bayesian agents and frequentist confidence intervals. Combining these ideas we show that the classical multi-armed bandit first-order regret bound $\tilde{O}(\sqrt{d L^*})$ still holds true in the more challenging and more general semi-bandit scenario. This latter result improves the previous state of the art bound $\tilde{O}(\sqrt{(d+m^3)L^*})$ by Lykouris, Sridharan and Tardos.

Moreover we sharpen these results with two technical ingredients. The first leverages a recent insight of Zimmert and Lattimore to replace Shannon entropy with more refined potential functions in the analysis. The second is a \emph{Thresholded} Thompson sampling algorithm, which slightly modifies the original algorithm by never playing low-probability actions. This thresholding results in fully $T$-independent regret bounds when $L^*\leq \oL^*$ is almost surely upper-bounded, which we show does not hold for ordinary Thompson sampling. 

\end{abstract}

\newpage\tableofcontents\newpage

\section{Introduction}
We first recall the general setting of online combinatorial optimization with both full feedback (full information game) and limited feedback (semi-bandit game). Let $\cA \subset \{0,1\}^d$ be a fixed set of {\em combinatorial actions}, and assume that $m = \|a\|_1$ for all $a \in \cA$. An (oblivious) adversary selects a sequence $\ell_1, \hdots, \ell_T \in [0,1]^d$ of linear functions, without revealing it to the player. At each time step $t=1, \hdots, T$, the player selects an action $a_t \in \cA$, and suffers the instantaneous loss $\langle\ell_t, a_t\rangle$. The following feedback on the loss function $\ell_t$ is then obtained: in the full information game the entire loss vector $\ell_t$ is observed, and in the semi-bandit game only the loss on active coordinates is observed (i.e., one observes $\ell_t \odot a_t$ where $\odot$ denotes the entrywise product). Importantly the player has access to external randomness, and can select their action $a_t$ based on the observed feedback so far. The player's objective is to minimize their total expected loss $L_T = \E \left[ \sum_{t=1}^T \langle\ell_t, a_t\rangle \right]$. The player's perfomance at the end of the game is measured through the {\em regret} $R_T$, which is the difference between the achieved cumulative loss $L_T$ and the best one could have done with a fixed action. That is, with $L^* = \min_{a \in \cA} \sum_{t=1}^T \langle\ell_t, a\rangle$, one has $R_T = L_T - L^*$. The optimal worst-case regret ($\sup_{\ell_1, \hdots, \ell_T \in [0,1]^d} R_T$) is known for both the full information and semi-bandit game. It is respectively of order $m \sqrt{T}$ (\cite{KWK10}) and $\sqrt{m d T}$ (\cite{ABL14}). 

\subsection{First-order regret bounds}
It is natural to hope for strategies with regret $R_T = o(L^*)$. If this holds, one can then claim that $L_T = (1+o(1)) L^*$ (in other words the player's performance is close to the optimal in-hindsight performance up to a smaller order term). However, worst-case bounds may fail to capture this behavior when $L^* \ll T$. The concept of {\em first-order regret bound} tries to remedy this issue, by asking for regret bounds scaling with $L^*$ instead of $T$. In \cite{KWK10} an optimal version of such a bound is obtained for the full information game:

\begin{theorem}[\cite{KWK10}] \label{thm:advfullinfo}
In the full information game, there exists an algorithm such that for any loss sequence one has $R_T = \tilde{O}(\sqrt{m L^*})$.
\end{theorem}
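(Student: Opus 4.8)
The plan is to run online mirror descent over the combinatorial polytope $\cK := \Conv(\cA) \subseteq [0,1]^d$ with the unnormalized negative entropy $F(x) := \sum_{i=1}^{d}(x_i \ln x_i - x_i)$ as the regularizer; this is essentially the ``Component Hedge'' algorithm of \cite{KWK10}. Fix a step size $\eta \in (0,1]$, let $D_F(y \,\|\, x) := \sum_i \big(y_i \ln(y_i/x_i) - y_i + x_i\big)$ denote the Bregman divergence of $F$, and initialize $x_1 := \argmin_{x \in \cK} F(x)$; since every $a \in \cA$ has $\|a\|_1 = m$ the same is true on all of $\cK$, and $x_1$ is then exactly the $D_F$-projection of the uniform point $\tfrac{m}{d}\mathbf{1}$ onto $\cK$. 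At round $t$ I would form the exponentiated-gradient iterate $\tilde x_{t+1} := x_t \odot e^{-\eta \ell_t}$ (entrywise) and project it back, $x_{t+1} := \argmin_{x \in \cK} D_F(x \,\|\, \tilde x_{t+1})$, and have the player draw $a_t \in \cA$ from any convex decomposition of $x_t \in \Conv(\cA)$, so that $\E\langle \ell_t, a_t\rangle = \langle \ell_t, x_t\rangle$ (in the full-information game $x_t$ depends only on $\ell_1, \dots, \ell_{t-1}$). Since a linear function on a polytope is minimized at a vertex, $\min_{x \in \cK}\sum_t \langle \ell_t, x\rangle = \min_{a \in \cA}\sum_t \langle \ell_t, a\rangle = L^*$, which is deterministic for a fixed loss sequence, so $\E[L^*] = L^*$ and it suffices to bound $\sum_t \langle \ell_t, x_t\rangle - L^*$.

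Next I would invoke the standard mirror-descent inequality: writing $\eta \ell_t = \nabla F(x_t) - \nabla F(\tilde x_{t+1})$, using the three-point identity for Bregman divergences, and applying the generalized Pythagorean inequality at the projection step gives, for every $a^* \in \cK$,
\[
\langle \ell_t, x_t - a^*\rangle \;\le\; \tfrac{1}{\eta}\big(D_F(a^* \,\|\, x_t) - D_F(a^* \,\|\, x_{t+1})\big) \;+\; \tfrac{1}{\eta}\, D_F(x_t \,\|\, \tilde x_{t+1}).
\]
Two short computations then carry everything. For the last (\emph{stability}) term, $D_F(x_t \,\|\, \tilde x_{t+1}) = \sum_i x_{t,i}\big(\eta \ell_{t,i} - 1 + e^{-\eta \ell_{t,i}}\big) \le \tfrac{\eta^2}{2}\sum_i x_{t,i}\ell_{t,i}^2 \le \tfrac{\eta^2}{2}\langle \ell_t, x_t\rangle$, using $e^{-u} \le 1 - u + u^2/2$ for $u \ge 0$ together with $\ell_{t,i}^2 \le \ell_{t,i}$ (valid since $\ell_{t,i} \in [0,1]$) --- this self-bounding step is exactly what replaces a factor $T$ by $L^*$. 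For the diameter term, pick $a^* \in \cA$ with $\sum_t \langle \ell_t, a^*\rangle = L^*$; then $a^* \in \cK$, the Pythagorean inequality gives $D_F(a^* \,\|\, x_1) \le D_F(a^* \,\|\, \tfrac{m}{d}\mathbf{1})$, and since $a^* \in \{0,1\}^d$ with $\|a^*\|_1 = m$ a one-line evaluation gives $D_F(a^* \,\|\, \tfrac{m}{d}\mathbf{1}) = m\ln(d/m)$.

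Telescoping the per-round inequality from $t=1$ to $T$, discarding the nonnegative $D_F(a^* \,\|\, x_{T+1})$, and writing $L_T := \sum_t \langle \ell_t, x_t\rangle = L^* + R_T$, I obtain
\[
R_T \;\le\; \frac{m\ln(d/m)}{\eta} + \frac{\eta}{2}\big(L^* + R_T\big).
\]
For $\eta \le 1$ we have $1 - \eta/2 \ge \tfrac{1}{2}$, hence $R_T \le \tfrac{2m\ln(d/m)}{\eta} + \eta L^*$, and taking $\eta = \min\{1,\, \sqrt{2m\ln(d/m)/\max(L^*,1)}\}$ yields $R_T = O\!\big(\sqrt{m\ln(d/m)\,L^*} + m\ln(d/m)\big) = \tilde{O}\big(\sqrt{m\,\E[L^*]}\big)$. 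Since $L^*$ is not known in advance, this tuning step is realized either by the standard doubling trick on the running loss $\sum_{s \le t}\langle \ell_s, x_s\rangle$ (costing one more logarithmic factor, absorbed in $\tilde{O}$) or by a time-varying rate $\eta_t \propto \big(1 + \sum_{s<t}\langle \ell_s, x_s\rangle\big)^{-1/2}$ with the corresponding adaptive analysis.

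The main obstacle, I expect, is arranging for both key estimates to hold simultaneously for one regularizer. Plain Hedge over the $|\cA|$ actions with losses rescaled to $[0,1]$ also self-bounds the variance term, but its ``diameter'' is $\ln|\cA|$ against a squared loss range of order $m^2$, which only yields $\tilde{O}(m\sqrt{L^*})$; it is precisely the passage to the $d$ coordinates, with the entropy measured against the uniform point $\tfrac{m}{d}\mathbf{1}$, that shrinks the diameter to $O(m\ln(d/m))$ while keeping the stability term charged against the algorithm's own loss via $\ell_{t,i}^2 \le \ell_{t,i}$. Checking that the Bregman projection onto $\cK = \Conv(\cA)$ --- the step that actually lets the player sample a genuine action in $\cA$ --- never inflates $D_F(a^* \,\|\, \cdot)$, so that it coexists cleanly with both bounds, is the one genuinely structural point; the rest is bookkeeping.
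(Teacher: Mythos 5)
Your proof is correct, and the mirror-descent route you take is essentially that of the cited source \cite{KWK10} (Component Hedge); the paper itself does not reprove this theorem but instead rederives the same bound by a genuinely different, Bayesian argument. Concretely, the paper analyzes Thompson Sampling against an arbitrary prior via the \emph{scale-sensitive information ratio} $\Lambda_t = (r_t^+)^2/(I_t \cdot \E_t[\langle \ell_t, a_t\rangle])$ together with the \emph{coordinate entropy} $H^c$, obtaining $\E[R_T] \le \sqrt{2\E[L^*] H^c(p_1)} + 2H^c(p_1)$ and then bounding $H^c(p_1) \le m\log(d/m) + m$; the frequentist statement follows by minimax duality. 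The two arguments are close cousins step by step: your self-bounding stability estimate $D_F(x_t \| \tilde x_{t+1}) \le \tfrac{\eta^2}{2}\sum_i x_{t,i}\ell_{t,i}^2 \le \tfrac{\eta^2}{2}\langle \ell_t, x_t\rangle$ plays the role of the paper's weighted Cauchy--Schwarz \eqref{eq:betterCS} combined with the reversed $\chi^2$/relative-entropy inequality (Lemma \ref{lem:reversechisquared}), and your diameter bound $D_F(a^* \| \tfrac{m}{d}\mathbf{1}) = m\ln(d/m)$ is exactly the paper's bound on $H^c(p_1)$; even the final self-bounding step $R_T \le \tfrac{m\ln(d/m)}{\eta} + \tfrac{\eta}{2}(L^*+R_T)$ mirrors the paper's ``$a - b \le \sqrt{ac} \Rightarrow a - b \le \sqrt{bc}+c$'' trick. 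What your approach buys is an explicit, prior-free algorithm with a fully elementary analysis (modulo the step-size tuning via doubling or an adaptive $\eta_t$, which you correctly flag); what the paper's approach buys is that the \emph{same} bound is achieved by Thompson Sampling itself, with no tuning parameter at all, and a template that extends to semi-bandit and bandit feedback where the mirror-descent stability term is no longer so easily self-bounded. Your closing observation about why plain Hedge over $|\cA|$ only gives $\tilde O(m\sqrt{L^*})$ is precisely the paper's ``inadequacy of the Shannon entropy'' discussion.
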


By $\tilde O(\cdot)$ we suppress logarithmic terms, even $\log(T)$. However all our bounds stated in the main body state explicitly the logarithmic dependency.

The state of the art for first-order regret bounds in the semi-bandit game is more complicated. It is known since \cite{AAGO06} that for $m=1$ (i.e., the famous multi-armed bandit game) one can have an algorithm with regret $R_T = \tilde{O}(\sqrt{d L^*})$. On the other hand for $m>1$ the best bound due to \cite{LST18} is $\tilde{O}(\sqrt{(d+m^3) L^*})$. 
% A byproduct of our main result (Theorem \ref{thm:TS} below) is to give the first optimal first-order regret bound for the semi-bandit game:
Using mirror descent and an entropic regularizer as in \cite{ABL14}, the following bound can be shown:

\begin{theorem} \label{thm:advsemibandit}
In the semi-bandit game, there exists an algorithm such that for any loss sequence one has $R_T = \tilde{O}(\sqrt{dL^*})$.
\end{theorem}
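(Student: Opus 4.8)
The plan is to deduce this adversarial, instance-dependent bound from the corresponding Bayesian guarantee for Thompson Sampling via a minimax duality argument. The main result, Theorem~\ref{thm:TS}, will provide for \emph{any} prior $\mathcal{P}$ over the loss sequence $\ell_1,\dots,\ell_T\in[0,1]^d$ in the semi-bandit game a first-order Bayesian regret bound of the shape $\E_{\mathcal{P}}[R_T]\le \tilde{O}\big(\sqrt{d\,\E_{\mathcal{P}}[L^*]}\big)$; the point of the reduction is that a uniform Bayesian bound of this form, via minimax, yields a single algorithm that is simultaneously competitive against every fixed loss sequence, with regret governed by that sequence's own $L^*$.

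First I would fix a scale parameter $L\ge 1$ and consider the zero-sum game in which the learner picks a randomized algorithm and the adversary picks a distribution $\mathcal{P}$ over loss sequences in $([0,1]^d)^T$ subject to $\E_{\mathcal{P}}[L^*]\le L$, with payoff $\E_{\mathcal{P}}[R_T]$. The feasible set of $\mathcal{P}$ is convex and compact (in the weak-$*$ topology, since $L^*$ is a minimum of finitely many linear functions on a compact domain, hence continuous and bounded, so $\mathcal{P}\mapsto\E_{\mathcal{P}}[L^*]$ is linear and continuous), the learner's strategy set is convex, and the payoff is affine in each argument, so Sion's minimax theorem applies and gives
\begin{align*}
\inf_{\mathrm{alg}}\ \sup_{\mathcal{P}:\,\E_{\mathcal{P}}[L^*]\le L}\ \E_{\mathcal{P}}[R_T]
&= \sup_{\mathcal{P}:\,\E_{\mathcal{P}}[L^*]\le L}\ \inf_{\mathrm{alg}}\ \E_{\mathcal{P}}[R_T] \\
&\le \sup_{\mathcal{P}:\,\E_{\mathcal{P}}[L^*]\le L}\ \E_{\mathcal{P}}\big[R_T^{\mathrm{TS}}\big]\ \le\ \tilde{O}\big(\sqrt{dL}\big),
\end{align*}
where the last step is Theorem~\ref{thm:TS} combined with the constraint $\E_{\mathcal{P}}[L^*]\le L$. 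Consequently, up to a negligible additive slack, there is an algorithm $\mathrm{Alg}_L$ whose expected regret is $\tilde{O}(\sqrt{dL})$ on every fixed loss sequence with $L^*\le L$ (such sequences are point-mass feasible priors).

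It then remains to remove the a priori knowledge of $L$. For this I would use a standard doubling device: run $\mathrm{Alg}_L$ starting from $L=1$, maintain from the observed semi-bandit feedback a lower estimate of the loss incurred so far, and whenever this estimate exceeds the current budget, restart with $L$ doubled. Since $L^*\le mT$ there are at most $\log_2(mT)$ epochs, and summing the per-epoch bounds $\tilde{O}(\sqrt{d\,2^j})$ over a geometric sequence of budgets $2^j\lesssim L^*$ yields total regret $\tilde{O}(\sqrt{d L^*})$, the $\log(mT)$ overhead being absorbed into $\tilde{O}(\cdot)$ and the restart cost being of lower order. (Alternatively one could fold the Bayesian bound directly into the minimax objective, but this introduces mild concavity bookkeeping that the doubling route avoids.)

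The substance of the argument lies entirely in Theorem~\ref{thm:TS}, and that is where I expect the real difficulty. The two refinements must be reconciled inside a single information-theoretic calculation under semi-bandit feedback: the \emph{scale-sensitive information ratio} has to be controlled so that the per-round contribution scales with the per-round loss (this is what upgrades a $\sqrt{dT}$-type bound to $\sqrt{dL^*}$), while the potential term must be the \emph{coordinate entropy} $\sum_i H(a(i))$ rather than $H(a)$ so that it is $\tilde{O}(d)$ instead of $\log|\mathcal{A}|$ (which can be $\Theta(m\log(d/m))$ and would spoil the scaling after the first-order rebalancing). Making both work at once is precisely where the link between Bayesian agents and frequentist confidence intervals, and the Zimmert--Lattimore mirror-descent viewpoint, enter, allowing a mirror map adapted to the semi-bandit geometry in place of the Shannon entropy; establishing the requisite information-ratio bound for Thompson Sampling against this more general potential is the crux.
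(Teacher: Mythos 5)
Your proposal is correct and follows essentially the same route as the paper: reduce the adversarial statement to the Bayesian one via the Sion/minimax recipe of \cite{BDKP15} and invoke the first-order Bayesian guarantee for Thompson Sampling with semi-bandit feedback (Theorem~\ref{thm:TS}, proved in the paper via Theorem~\ref{thm:semibanditlogbarrier}). In fact you are more explicit than the paper about how to handle the instance-dependence of the bound (restricting to the level set $\E_{\mathcal P}[L^*]\le L$ so the constraint set stays convex and the payoff bilinear, then removing knowledge of $L$ by doubling on the player's observed cumulative loss), which is the right way to make the paper's one-line reduction rigorous.
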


This bound is tight for $L^*=\Theta(mT)$ since the minimax regret for the semi-bandit problem is $\tilde\Theta(\sqrt{mdT})$ (\cite{ABL14}). We derive a version of this result using the recipe first proposed (in the context of partial feedback) in \cite{BDKP15}. Namely, to show the existence of a randomized strategy with regret bounded by $B_T$ for any loss sequence, it is sufficient to show that for any {\em distribution} over loss sequences there exists a strategy with regret bounded by $B_T$ in expectation. Indeed, this equivalence is a simple consequence of the Sion minimax theorem \cite{BDKP15}. In other words to prove Theorem \ref{thm:advsemibandit} it is sufficient to restrict our attention to the {\em Bayesian scenario}, where one is given a prior distribution $\nu$ over the loss sequence $(\ell_1, \hdots, \ell_T) \in [0,1]^{[d]\times [T]}$ and aims for small expected regret with respect to that prior. Importantly note that there is no independence whatsoever in such a random loss sequence, either across times or across coordinates for a fixed time. Rather, the prior is completely arbitrary over the $Td$ different values $(\ell_t(i))_{t\in [T],i\in [d]}$.
\newline

The rest of the paper is dedicated to the (first-order) regret analysis of a particular Bayesian strategy, the famous Thompson Sampling (\cite{Tho33}). In particular we will show that Thompson Sampling implies Theorem \ref{thm:advfullinfo} and an alternate version of Theorem \ref{thm:advsemibandit}. 

\subsection{Thompson Sampling}
In the Bayesian setting one has access to a {\em prior distribution} on the optimal action 
\[
	a^* = \argmin_{a \in \cA} \sum_{t=1}^T  \langle\ell_t, a\rangle.
\] In particular, one can update this distribution as more observations on the loss sequence are collected. More precisely, denote $p_t$ for the posterior distribution of $a^*$ given all the information at the beginning of round $t$ (i.e., in the full information this is $\ell_1, \hdots, \ell_{t-1}$ while in semi-bandit it is $\ell_1 \odot a_1, \hdots, \ell_{t-1} \odot a_{t-1}$). Then Thompson Sampling simply plays an action $a_t$ at random from $p_t$. 

This strategy has recently regained interest, as it is both efficient and successful in practice for simple priors (\cite{empiricalTS}) and particularly elegant in theory. A breakthrough in the understanding of Thompson Sampling's regret was made in \cite{RR14} where an information theoretic analysis was proposed. They consider in particular the combinatorial setting for which they prove the following result:

\begin{theorem}[\cite{RR14}] \label{thm:RR14}
Suppose that under the prior $\nu$, the sequence $(\ell_1, \hdots, \ell_T)$ is i.i.d. Then in the full information game Thompson Sampling satisfies $\E [ R_T ] = \tilde{O}(m^{3/2} \sqrt{T})$, and in the semi-bandit game it satisfies $\E [ R_T ] = \tilde{O}(m \sqrt{d T})$.

Suppose furthermore that under the prior $\nu$, for any $t$, conditionally on $\ell_1, \hdots, \ell_{t-1}$ one has that $\ell_t(1), \hdots, \ell_t(d)$ are independent. Then Thompson Sampling satisfies respectively $\E^{\nu} [ R_T ] = \tilde{O}(m \sqrt{T})$ and $\E^{\nu} [ R_T ] = \tilde{O}(\sqrt{m d T})$ in the full information and semi-bandit game.
\end{theorem}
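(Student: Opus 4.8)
The plan is to establish all four bounds via the information-ratio method that underlies \cite{RR14}. Fix the prior $\nu$, let $\cF_{t-1}=\sigma\big((a_s,Y_s):s\le t-1\big)$ be the feedback available at the start of round $t$ (with $Y_t=\ell_t$ in the full information game and $Y_t=\ell_t\odot a_t$ in the semi-bandit game), write $\E_t[\cdot]=\E[\cdot\mid\cF_{t-1}]$ and $p_t(a)=\P(a^*=a\mid\cF_{t-1})$, so that under Thompson Sampling the action $a_t\sim p_t$ is, conditionally on $\cF_{t-1}$, an independent copy of $a^*$. Set $r_t=\E_t[\langle\ell_t,a_t-a^*\rangle]$ (per-round Bayesian regret) and $g_t=I\big(a^*;(a_t,Y_t)\mid\cF_{t-1}\big)$ (per-round information gain). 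Two generic facts reduce the theorem to a bound on the information ratio $\Gamma_t:=r_t^2/g_t$. First, the chain rule for mutual information together with the entropy bound for $m$-subsets gives
\[ \sum_{t=1}^T\E[g_t]\;=\;I\big(a^*;(a_1,Y_1),\dots,(a_T,Y_T)\big)\;\le\;H(a^*)\;\le\;\log|\cA|\;\le\;m\log(ed/m). \]
Second, if $\Gamma_t\le\Gamma$ almost surely for every $t$, then since $|r_t|\le\sqrt{\Gamma g_t}$, Jensen's inequality followed by Cauchy--Schwarz yields $\E[R_T]=\sum_t\E[r_t]\le\sqrt{\Gamma}\sum_t\sqrt{\E[g_t]}\le\sqrt{\Gamma\,T\sum_t\E[g_t]}=\tilde O\big(\sqrt{\Gamma m T}\big)$. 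Hence it remains only to bound $\Gamma_t$ in each case.

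The common starting point is the identity (using $a_t\perp(\ell_t,a^*)$ given $\cF_{t-1}$ and $\bar\ell_t(i):=\E_t[\ell_t(i)]=\sum_a p_t(a)M_t(i,a)$, where $M_t(i,a):=\E_t[\ell_t(i)\mid a^*=a]$)
\[ r_t\;=\;\sum_{a\in\cA}p_t(a)\Big(\E_t[\langle\ell_t,a\rangle]-\E_t[\langle\ell_t,a\rangle\mid a^*=a]\Big)\;=\;\sum_{i=1}^d\sum_{a\in\cA}p_t(a)\,a(i)\,\big(\bar\ell_t(i)-M_t(i,a)\big). \]
On the information side, $g_t$ is an average of $\KL$ divergences between the conditional law of $Y_t$ given $\{a^*=a\}$ and its marginal law (averaged over $a\sim p_t$, and over the realized action in the semi-bandit case). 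Projecting $Y_t$ onto a scalar statistic (data-processing), then Pinsker's inequality together with the relevant bounded range, lower-bounds $g_t$ by a weighted sum of squared mean-gaps $\big(\bar\ell_t(i)-M_t(i,a)\big)^2$ or $\big(\E_t[\langle\ell_t,a\rangle]-\E_t[\langle\ell_t,a\rangle\mid a^*=a]\big)^2$; a final Cauchy--Schwarz matching these against the expressions for $r_t$ produces $\Gamma$.

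Carrying this out: \textbf{(i) full information, general prior.} Project $Y_t=\ell_t$ onto $\langle\ell_t,a\rangle\in[0,m]$, so $g_t\ge\frac{1}{2m^2}\sum_a p_t(a)\big(\E_t[\langle\ell_t,a\rangle]-\E_t[\langle\ell_t,a\rangle\mid a^*=a]\big)^2$; Cauchy--Schwarz gives $r_t^2\le 2m^2g_t$, i.e.\ $\Gamma_t\le 2m^2$, hence $R_T=\tilde O(m^{3/2}\sqrt T)$. \textbf{(ii) semi-bandit, general prior.} For the realized action $a'$ the feedback reveals $\{\ell_t(i):i\in\mathrm{supp}(a')\}$; projecting onto one such coordinate and bounding the coordinatewise maximum below by its average over the $m$ active coordinates costs a factor $m$, after which a weighted Cauchy--Schwarz with weights proportional to $\P_t(a^*(i)=1)$ (which sum to $m$) gives $\Gamma_t\le 2md$, hence $R_T=\tilde O(m\sqrt{dT})$. \textbf{(iii)--(iv) coordinate independence.} Here, conditionally on $\cF_{t-1}$, the marginal law of $\ell_t$ factorizes across coordinates, so the chain rule and convexity of KL give the tensorization $\KL\!\big(Q\,\Vert\,\bigotimes_i P_i\big)\ge\sum_i\KL(Q_i\Vert P_i)$; this lets one keep the \emph{entire} sum over the observed coordinates in the lower bound on $g_t$ instead of a single coordinate (or the maximum), removing a factor $m$ and yielding $\Gamma_t\le 2m$ in the full information game (hence $R_T=\tilde O(m\sqrt T)$) and $\Gamma_t\le 2d$ in the semi-bandit game (hence $R_T=\tilde O(\sqrt{mdT})$).

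The technical heart --- and the main obstacle --- is establishing these four information-ratio bounds with the correct powers of $m$ and $d$, in particular the factor-$m$ gain under coordinate independence, which hinges on the KL tensorization. In the full information game the tensorization is immediate, since $\cF_{t-1}$ carries the complete past loss vectors and the posterior marginal of $\ell_t$ is genuinely a product measure; in the semi-bandit game the posterior conditions on only the partially observed feedback, under which the marginal law of $\ell_t$ is a mixture of products and need not itself be a product, so the argument must be set up with care about the relevant filtration. Everything else --- the chain rule for mutual information, the $m\log(ed/m)$ entropy bound, Pinsker's inequality, and the Cauchy--Schwarz estimates --- is routine.
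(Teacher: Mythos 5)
The paper does not actually prove Theorem~\ref{thm:RR14} --- it is quoted from \cite{RR14} --- so the only in-paper material to compare against is the machinery it recalls: Proposition~\ref{prop:1}, the Pinsker/data-processing step of Section~\ref{sec:Pinsker}, and Lemma~\ref{lem:partialobservationIR}. Your reconstruction follows exactly that information-ratio route and is essentially correct: the reduction $\E[R_T]\le\sqrt{\Gamma\,T\,H(a^*)}$ is the paper's Proposition~\ref{prop:1}, the bound $H(a^*)\le m\log(ed/m)$ is right, and the four information-ratio bounds $\Gamma_t\lesssim m^2,\ md,\ m,\ d$ are the correct ones and do follow from the steps you indicate (Cauchy--Schwarz against the weights $p_t(a)\,\P_t(i\in a^*)$, whose normalizing sum produces the $d$ versus $m$ distinction, followed by data processing and Pinsker); the i.i.d.-in-time assumption is indeed never needed, as the paper itself remarks. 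Two caveats. First, the subtlety you flag in case (iv) is genuine and you leave it unresolved: read literally, the hypothesis that $\ell_t$ has independent coordinates conditionally on $\ell_1,\dots,\ell_{t-1}$ does not imply that the semi-bandit posterior of $\ell_t$ (which conditions only on $\ell_s\odot a_s$) is a product measure --- it is a mixture of products --- so the superadditivity $\KL(Q\,\Vert\,\textstyle\prod_iP_i)\ge\sum_i\KL(Q_i\Vert P_i)$ is not directly available; in \cite{RR14} the assumption is that the latent per-coordinate parameters are independent under the prior, a property preserved by semi-bandit updates, under which the posterior genuinely factorizes and your tensorization closes. Second, it is worth noting that the paper's own contribution is precisely to bypass this issue: replacing $H(a^*)$ by the coordinate entropy $H^c$ and applying Lemma~\ref{lem:partialobservationIR} coordinatewise yields the sharper rates with no independence assumption at all.
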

It was observed in \cite{BDKP15} that the assumption of independence across times is immaterial in the information theoretic analysis of Russo and Van Roy. However it turns out that the independence across coordinates (conditionally on the history) in Theorem \ref{thm:RR14} is key to obtain the worst-case optimal bounds $m\sqrt{T}$ and $\sqrt{m d T}$. One of the contributions of our work is to show how to appropriately modify the notion of entropy to remove this assumption. 
\\

Most importantly, we propose a new analysis of Thompson Sampling that allows us to prove {\em first-order regret bounds}. In various forms we show the following result:
\begin{theorem} \label{thm:TS}
For any prior $\nu$, Thompson Sampling satisfies in the full information game $\E^{\nu} [ R_T ] = \tilde{O}(\sqrt{m \E [ L^* ]})$. Furthermore in the semi-bandit game, $\E^{\nu} [ R_T ] = \tilde{O}(\sqrt{d \E[L^*]})$.
\end{theorem}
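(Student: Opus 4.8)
The plan is to run a \emph{scale-sensitive} refinement of the Russo--Van Roy information-ratio analysis and combine it with a self-bounding inequality that upgrades a ``$\sqrt T$''-type bound into a ``$\sqrt{L^*}$''-type bound. Let $p_t$ be the posterior law of $a^*$ at round $t$, let $a_t\sim p_t$ be drawn with fresh randomness (so $a_t$ is conditionally independent of $(\ell_t,a^*)$ given the history), and write $q_t(i)=\P_t(a^*(i)=1)=\P_t(a_t(i)=1)$, $\mu_t(i)=\E_t[\ell_t(i)]$, $\mu_t^*(i)=\E_t[\ell_t(i)\mid a^*(i)=1]$. Expanding $\langle\ell_t,a\rangle=\sum_i\ell_t(i)a(i)$ gives the per-round Bayesian regret $r_t:=\E_t\langle\ell_t,a_t-a^*\rangle=\sum_i q_t(i)(\mu_t(i)-\mu_t^*(i))$ and its ``scale'' $s_t:=\E_t\langle\ell_t,a_t+a^*\rangle=\sum_i q_t(i)(\mu_t(i)+\mu_t^*(i))$, with $\E^{\nu}\sum_t r_t=\E^{\nu}[R_T]$ and $\E^{\nu}\sum_t s_t=\E^{\nu}[R_T]+2\E[L^*]$. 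The scale-sensitive information ratio is $r_t^2/(s_t I_t)$; its scale-sensitivity comes from the self-bounding strengthening of Pinsker's inequality, $(\E_P X-\E_Q X)^2\le C(\E_P X+\E_Q X)\KL(P\Vert Q)$ for $X\in[0,1]$ and a universal constant $C$, which with $X=\ell_t(i)$, $P=\mathrm{law}_t(\ell_t(i)\mid a^*(i)=1)$, $Q=\mathrm{law}_t(\ell_t(i))$ yields $(\mu_t(i)-\mu_t^*(i))^2\le C(\mu_t(i)+\mu_t^*(i))D_t(i)$ with $D_t(i):=\KL(P\Vert Q)$, and from the mutual-information decomposition $I_t(a^*(i);\ell_t(i))\ge q_t(i)D_t(i)$.

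For the full-information game I would plug the weighted Pinsker bound into $r_t$, split $q_t(i)=\sqrt{q_t(i)}\cdot\sqrt{q_t(i)}$, and apply Cauchy--Schwarz over $i$ to get $r_t^2\le C\bigl(\sum_i q_t(i)(\mu_t(i)+\mu_t^*(i))\bigr)\bigl(\sum_i q_t(i)D_t(i)\bigr)\le C\,s_t\sum_i I_t(a^*(i);\ell_t)$, using $I_t(a^*(i);\ell_t)\ge I_t(a^*(i);\ell_t(i))\ge q_t(i)D_t(i)$. Thus the scale-sensitive information ratio is $O(1)$, measured against the \emph{coordinate entropy} $\sum_i H(a^*(i))$ rather than $H(a^*)$. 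Summing over $t$, Cauchy--Schwarz in $t$, and the telescoping bound $\E^{\nu}\sum_{t,i} I_t(a^*(i);\ell_t)\le\sum_i H(a^*(i))\le m\log(ed/m)+m=\tilde O(m)$ give $\E^{\nu}[R_T]\le\bigl(C\sum_i H(a^*(i))\cdot(\E^{\nu}[R_T]+2\E[L^*])\bigr)^{1/2}$, and solving this quadratic inequality yields $\E^{\nu}[R_T]=\tilde O(\sqrt{m\,\E[L^*]})$.

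The semi-bandit game is where the work is. Now $\ell_t(i)$ is observed only when $a_t(i)=1$, which happens independently with probability $q_t(i)$, so $I_t(a^*(i);\mathrm{obs}_t)\ge q_t(i)^2 D_t(i)$ --- a factor $q_t(i)$ worse --- and the argument above degrades to the useless $r_t^2\lesssim s_t\sum_i I_t(a^*(i);\mathrm{obs}_t)/q_t(i)$ on coordinates with tiny posterior weight. I would split, at each round, the coordinates into $S_t=\{i:q_t(i)\ge\tfrac12\}$ (so $|S_t|\le 2m$, since $\sum_i q_t(i)=m$) and its complement. On $S_t$ the factor $1/q_t(i)$ is $O(1)$, so those coordinates go through the full-information argument and contribute $\tilde O(\sqrt{m(\E^{\nu}[R_T]+\E[L^*])})$, still against $\sum_i H(a^*(i))\le d\log 2$. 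For $i\notin S_t$ the posterior strongly favours $a^*(i)=0$, and here I would exploit the new link between Bayesian posteriors and frequentist confidence intervals: a decided posterior ($q_t(i)$ small) can only arise once coordinate $i$ has been observed $N_t(i)$ times with $N_t(i)$ large enough that the loss of $i$ is certified, with posterior-high probability, to exceed that of the best alternative by at least a small-loss (empirical-Bernstein) confidence width $\approx\sqrt{\mu_t(i)/N_t(i)}$, so the genuine regret charged to such $i$ is $\lesssim q_t(i)\sqrt{\mu_t(i)/N_t(i)}$; summing over $t$ via $\sum_t \dot N_t(i)/\sqrt{N_t(i)}=O(\sqrt{N_T(i)})$, then over $i$ by Cauchy--Schwarz over the $\le d$ coordinates, and using that the cumulative loss the learner incurs on coordinate $i$ sums (over $i$) to $L_T$, gives an $\tilde O(\sqrt{d\,L_T})=\tilde O(\sqrt{d(\E^{\nu}[R_T]+\E[L^*])})$ contribution. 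Adding the two pieces and solving the self-bounding inequality gives $\E^{\nu}[R_T]=\tilde O(\sqrt{d\,\E[L^*]})$.

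Finally, to obtain the tightest form (and to confirm optimality), I would invoke the Zimmert--Lattimore identification of Thompson Sampling with a form of online stochastic mirror descent, replacing the coordinate negative-entropies by a general separable mirror map $F(q)=\sum_i f(q_i)$ --- a Tsallis/negative-power potential --- which removes spurious logarithmic factors. The main obstacle is entirely the semi-bandit step: the information-ratio bound inevitably loses a $1/q_t(i)$ on rarely-played coordinates, and the non-routine ingredient is the Bayesian-to-confidence-interval link, which shows that a decided posterior forces enough prior observation that the regret charged to those coordinates is automatically a lower-order, $L^*$-scaled quantity rather than a $T$-scaled one.
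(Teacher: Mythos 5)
Your full-information half is essentially the paper's argument: the scale-sensitive information ratio measured against the coordinate entropy $H^c(p_1)\le m\log(ed/m)$, a one-sided weighted Pinsker inequality (the paper's Lemma~\ref{lem:reversechisquared}, $\chi^2_+(p,q)\le 2\ent(q,p)$, is exactly the strong-convexity fact underlying your $(\E_PX-\E_QX)_+^2\lesssim \E_Q[X]\,\KL(P\Vert Q)$), Cauchy--Schwarz in $t$, and the self-bounding step $a-b\le\sqrt{ac}\Rightarrow a-b\le\sqrt{bc}+c$. That part is correct.

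The semi-bandit half has a genuine gap, located precisely where you place the ``non-routine ingredient.'' Your claim that a small $q_t(i)$ ``can only arise once coordinate $i$ has been observed $N_t(i)$ times'' with a Bernstein width $\sqrt{\mu_t(i)/N_t(i)}$ controlling $\mu_t(i)-\mu_t^*(i)$ is false for an arbitrary prior: the prior itself can set $q_1(i)=m/d\ll 1/2$ with zero observations, and the posterior on arm $i$ can collapse because of observations of \emph{other} arms (correlations across coordinates and across time are unrestricted), so no per-round confidence-width bound on the conditional-mean gap is available. The paper's Bayesian-to-frequentist link (Theorem~\ref{thm:bayesianexplorer}) is a different statement about \emph{cumulative} quantities: importance-weighted loss estimators concentrate by Freedman's inequality for every fixed loss sequence, hence the Bayesian agent assigns the concentration event high probability (Doob/optional stopping), hence (i) the loss the player actually \emph{pays} on an arm while it is rare ($p_t(i)\le\gamma$) is at most $2\gamma\underline{L}^*+O(\log T)$, and (ii) the raw loss an arm accumulates while common ($p_t(i)\ge\gamma$) is at most $\underline{L}^*+\tilde O(\sqrt{\underline{L}^*/\gamma})$; the latter is what tames the $1/q_t(i)$ blow-up in the information ratio, summed over $d$ arms. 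Your threshold at $1/2$ also leaves the entire range $q_t(i)\in[\gamma,1/2]$ covered by neither of your two arguments; the paper's threshold is $\gamma\approx\log^2(\underline{L}^*)/\underline{L}^*$ for exactly this reason. Two further missing ingredients: the whole Freedman machinery requires an almost-sure bound $L^*\le\underline{L}^*$, and the theorem as stated (with $\E[L^*]$) is obtained only by switching to the log-barrier mirror map (Theorem~\ref{thm:semibanditlogbarrier}), which you gesture at but do not connect to this issue; and for $m>1$ a naive accounting gives $\tilde O(\sqrt{md\underline{L}^*})$, which the paper repairs with a dyadic rank partition of the $m$ optimal arms by cumulative loss (only $j$ optimal arms can have loss exceeding $\underline{L}^*/j$), an idea absent from your proposal.
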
 
To the best of our knowledge such guarantees were not known for Thompson Sampling even in the full-information case with $m=1$ (the so-called expert setting of \cite{cesa1997use}). Our analysis can be combined with recent work in \cite{TSandMD} which allows for improved estimates based on using mirror maps besides the Shannon entropy.

The link between Theorems~\ref{thm:TS} and \ref{thm:advsemibandit} requires some explanation. In order to recover the full strength of Theorem~\ref{thm:advsemibandit} via the minimax strategy, one would need a regret bound $\tilde{O}(\E[\sqrt{d L^*}])$ which is stronger than the guarantee of Theorem~\ref{thm:TS}. However if an almost sure upper bound $L^*\leq \oL^*$ is known, then Theorem~\ref{thm:TS} implies the existence of a frequentist algorithm attaining regret
\[
	\E^{\nu} [ R_T ] = \tilde{O}\left(\sqrt{d \E[\oL^*]}\right).
\]
In fact the estimate in Theorem~\ref{thm:TS} can be made fully independent of $T$, e.g. with no hidden $\log(T)$ terms. As explained in Section~\ref{sec:threshold}, this is accomplished by a modified Thresholded Thompson sampling algorithm which always avoids low-probability actions. Therefore a frequentist algorithm obtaining the same guarantee exists.

Finally, we note that Thompson sampling against certain artificial prior distributions is also known to obey frequentist regret bounds in the stochastic case (\cite{AG12,thodoris2019graph}). However we emphasize that in this paper, Thompson Sampling assumes access to the true prior distribution for the loss sequence and the guarantees are for expected Bayesian regret with respect to that prior.

\section{Information ratio and scale-sensitive information ratio}
As a warm-up, and to showcase one of our key contributions, we focus here on the full information case with $m=1$ (i.e., the expert setting). We start by recalling the general setting of Russo and Van Roy's analysis (Subsection \ref{sec:RRanalysis}), and how it applies in this expert setting (Subsection \ref{sec:Pinsker}). We then introduce a new quantity, the scale-sensitive information ratio, and show that it naturally implies a first-order regret bound (Subsection \ref{sec:scalesensitive}). We conclude this section by showing a new bound between two classical distances on distributions (essentially the chi-squared and the relative entropy), and we explain how to apply it to control the scale-sensitive information ratio (Subsection \ref{sec:newineq}).

\subsection{Preparation} \label{sec:RRanalysis}
Let us denote $X_t \in \R^d$ for the feedback received at the end of round $t$. That is in full information one has $X_t = \ell_t$, while in semi-bandit one has $X_t = \ell_t \odot a_t$. Let us denote by $\P_t$ the posterior distribution of $\ell_1, \hdots, \ell_{T}$ conditionally on $a_1, X_1, \hdots, a_{t-1}, X_{t-1}$. We write $\E_t$ for the expectation with respect to $\P_t$, which returns a random variable measurable with respect to the sigma algebra generated by $(a_1, X_1, \hdots, a_{t-1}, X_{t-1})$.
In Thompson sampling, we take $a_t \sim p_t$ conditionally on $(a_1, X_1, \hdots, a_{t-1}, X_{t-1})$, where again $p_t$ is the distribution of $a^*$ under $\P_t$. Hence $\E_t[a_t]=p_t$ when viewed as vectors in $\mathbb R^d$. Let $IG_t$ be the mutual information {\em under the posterior distribution} $\P_t$, (denoted in general $I_t$) between $a^*$ and $X_t$, i.e.
\[
	IG_t = I_t(a^*,X_t)= H(p_t) - \E_t[H(p_{t+1})].
\]
(The abbreviation $IG$ stands for ``information gain'' as it represents the amount of new information about the unknown $a^*$.) Let 
\[
	r_t = \E_t [ \langle\ell_t, a_t - a^*\rangle ]
\]
be the instantaneous regret at time $t$. The information ratio introduced by Russo and Van Roy is defined as:
\begin{equation} \label{eq:IR}
\Gamma_t := \frac{r_t^2}{IG_t} \,.
\end{equation}
The point of the information ratio is the following result:
\begin{proposition}[Proposition 1, \cite{RR14}] 
\label{prop:1}
Let $\Gamma>0$ be a positive constant and consider a strategy such that $\Gamma_t \leq \Gamma$ for all $t$ almost surely. Then one has
\[
\E [ R_T ] \leq \sqrt{T \cdot \Gamma \cdot H(p_1)} \,,
\]
where $H(p_1)$ denotes the Shannon entropy of the prior distribution $p_1$ (in particular $H(p_1) \leq \log(d)$).
\end{proposition}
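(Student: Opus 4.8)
The plan is to follow the information-theoretic template: decompose the Bayesian regret into a sum of per-round conditional regrets, trade each one for a mutual-information increment using the information-ratio hypothesis, and then bound the total accumulated information by the prior entropy through a telescoping argument. Concretely, I would first record the identity $\E^{\nu}[R_T] = \E\big[\sum_{t=1}^T r_t\big]$. This uses the tower property together with the defining property of Thompson Sampling that, conditionally on the history $a_1,X_1,\dots,a_{t-1},X_{t-1}$, the played action $a_t$ and the optimal action $a^*$ share the same law $p_t$; hence $r_t=\E_t[\langle\ell_t,a_t-a^*\rangle]$ is the conditional expected instantaneous regret, and summing its unconditional versions recovers $\E^{\nu}[R_T]$. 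Throughout, $r_t$, $I_t$, and $\Gamma_t$ are random variables measurable with respect to the history up to round $t$.

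Next I would turn the hypothesis $\Gamma_t = r_t^2/I_t \le \Gamma$ into the pointwise bound $r_t \le \sqrt{\Gamma\, I_t}$: this is immediate when $r_t \le 0$, is a rearrangement when $r_t>0$ and $I_t>0$, and in the degenerate case $I_t=0$ the hypothesis forces $r_t=0$ so the inequality still holds. Summing over $t$ and applying Cauchy--Schwarz in the time index gives
\[
\sum_{t=1}^T r_t \;\le\; \sqrt{\Gamma}\sum_{t=1}^T \sqrt{I_t} \;\le\; \sqrt{\Gamma\, T \sum_{t=1}^T I_t}\,.
\]
Taking expectations and using Jensen's inequality for the concave map $x\mapsto\sqrt{x}$ then yields $\E^{\nu}[R_T] \le \sqrt{\Gamma\, T\, \E\big[\sum_{t=1}^T I_t\big]}$.

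It remains to show $\E\big[\sum_{t=1}^T I_t\big] \le H(p_1)$. Since $I_t = H(p_t) - \E_t[H(p_{t+1})]$, the tower property gives $\E[I_t] = \E[H(p_t)] - \E[H(p_{t+1})]$, so the sum telescopes to $\E\big[\sum_{t=1}^T I_t\big] = H(p_1) - \E[H(p_{T+1})] \le H(p_1)$, using nonnegativity of Shannon entropy and that $p_1$ (the prior) is deterministic. Substituting into the previous display proves the bound; in the present expert setting $\cA = \{e_1,\dots,e_d\}$, so $p_1$ is supported on $d$ points and $H(p_1)\le\log d$.

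The argument is essentially routine bookkeeping, so I do not expect a serious obstacle; the one place that demands care is the telescoping step, where one must treat $I_t$ correctly as the conditional mutual information between $a^*$ and $X_t$ under the evolving posterior $\P_t$ and justify the interchange of conditional expectations that produces the martingale identity $\E[H(p_t)] - \E[H(p_{t+1})]$. One should also be mindful that the information-ratio hypothesis implicitly constrains the degenerate rounds $I_t = 0$, which is precisely what keeps the pointwise bound $r_t \le \sqrt{\Gamma\,I_t}$ valid there.
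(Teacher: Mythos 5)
Your proof is correct and follows essentially the same route as the paper: Cauchy--Schwarz in the time index combined with the bound $r_t^2 \le \Gamma I_t$, followed by the telescoping identity $\E\big[\sum_t I_t\big] = H(p_1) - \E[H(p_{T+1})] \le H(p_1)$. The only difference is cosmetic ordering — the paper applies Cauchy--Schwarz to $\sum_t r_t$ first and then substitutes $r_t^2 \le \Gamma I_t$, whereas you substitute first and then apply Cauchy--Schwarz to $\sum_t \sqrt{I_t}$ — and your extra care about the degenerate case $I_t=0$ is a welcome addition.
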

\begin{proof}
The main calculation is as follows:
\begin{equation} 
\label{eq:regretwithIR}
\E [ R_T ] 
= 
\E \left[ \sum_{t=1}^T r_t \right] 
\leq 
\sqrt{T \cdot \E \left[ \sum_{t=1}^T r_t^2 \right]}\leq 
\sqrt{T \cdot \Gamma \cdot \E \left[ \sum_{t=1}^T IG_t \right]} \,.
\end{equation}
Moreover the total information accumulation $\E \left[ \sum_{t=1}^T IG_t \right]$ can be easily bounded via
\begin{equation}
\label{eq:entropy-sum}
\begin{aligned}
	\E \left[ \sum_{t=1}^T IG_t \right] 
	&=
	\E \left[ \sum_{t=1}^T H(p_t)-H(p_{t+1}) \right]
	\\
	&=
	\E[H(p_1)-H(p_{T+1})]
	\\
	&\leq 
	H(p_1).
\end{aligned}
\end{equation}
Substituting into \eqref{eq:regretwithIR} concludes the proof.
\end{proof}

\subsection{Pinsker's inequality and Thompson Sampling's information ratio} \label{sec:Pinsker}
We now describe how to control the information ratio \eqref{eq:IR} of Thompson Sampling in the expert setting. Let
\begin{equation}
\label{eq:ent}
	\ent(p,q) = \sum_{i=1}^d p(i) \log(p(i) / q(i))
\end{equation}
denote the relative entropy. Using the martingale property $\E_t[p_{t+1}]=p_t$ implies
\begin{equation}
\label{eq:entropy-gain}
\begin{aligned}
	\E_t [ \ent(p_{t+1} , p_t) ]
	&=
	\E_t\left[\sum_{i=1}^d p_{t+1}(i) \log(p_{t+1}(i) / p_t(i))\right]
	\\
	&=
	\E_t\left[\sum_{i=1}^d p_{t+1}(i) \log p_{t+1}(i)\right]-\sum_{i=1}^d p_{t}(i) \log p_{t}(i)
	\\
	&=
	H(p_t)-\E_t[H(p_{t+1})]
	\\
	&=
	IG_t.
\end{aligned}
\end{equation}
% (see [Proposition 4, \cite{RR14}] for more details).
We also recall Pinsker's inequality:
\begin{equation}
\label{eq:pinsker}
	\|p-q\|_1^2 \leq 2 \cdot \ent(p,q).
\end{equation}
(Here on the left side we view $p$ and $q$ as vectors in $\mathbb R^d$.)

Having completed our preparations we turn to bounding the information ratio. Observe that the posterior distribution $p_t$ of $a^* \in \{e_1,\hdots, e_d\}$ satisfies (again viewing $p_t$ as a vector in $\R^d$): $p_t = \E_t [ a^* ]$. Using the tower rule $\E_t[\E_{t+1}[X]]=\E_t[X]$ for conditional expectations in the second step, we have the important calculation
\begin{equation} 
\label{eq:lostscale}
\begin{aligned}
	r_t 
	&= 
	\E_t [ \langle\ell_t ,a_t - a^*\rangle ] 
	\\
	&=
	\E_t \left[ \E_{t+1}[ \langle\ell_t ,a_t - a^*\rangle ]\right] 
	\\
	&= \E_t \left[\langle\ell_t, \E_{t+1}[ (a_t - a^*) ]\rangle\right]
	\\
	&=
	\E_t [ \langle\ell_t ,p_t - p_{t+1}\rangle ]. 
\end{aligned}
\end{equation}
Here the third step holds because $\ell_t$ is known at time $t+1$ (and note that all steps are really equalities!). Finally we estimate the right hand side above via
\begin{equation}
\label{eq:holder}
	\langle\ell_t ,p_t - p_{t+1}\rangle
	\leq 
	\frac{1}{2}\|p_t - p_{t+1}\|_1
\end{equation}
using the observation $\|\ell_t-(\frac{1}{2},\frac{1}{2},\dots,\frac{1}{2})\|_{\infty} \leq \frac{1}{2}$ (and the fact that $p_t$ and $p_{t+1}$ have the same sum-of-coordinates).
Combining \eqref{eq:lostscale} and \eqref{eq:holder} with Jensen's inequality and \eqref{eq:pinsker} in the first step below and then using \eqref{eq:entropy-gain} yields:
\[
	r_t^2 \leq \frac{1}{2}\cdot \E_t [ \ent(p_{t+1} , p_t)]\stackrel{\eqref{eq:entropy-gain}}{=}\frac{I_t}{2}.
\]
We have shown:
\begin{lemma}[\cite{RR14}]\label{lem:TSIR}
In the expert setting, Thompson Samping's information ratio \eqref{eq:IR} satisfies $\Gamma_t \leq \frac{1}{2}$ for all $t$.
\end{lemma}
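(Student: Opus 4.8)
The plan is to make precise the chain of inequalities already sketched in the paragraph preceding the statement. The starting point is the identity $p_t = \E_t[a^*]$, viewing $p_t$ as a vector in $\R^d$: this holds because $a^*$ takes values in the standard basis $\{e_1,\dots,e_d\}$ and $p_t$ is by definition the posterior law of $a^*$, so its $i$-th coordinate is exactly $\P_t(a^*=e_i)$. I would then rewrite the instantaneous regret $r_t = \E_t[\langle \ell_t, a_t - a^*\rangle]$ by conditioning on the richer information available at time $t+1$. Two bookkeeping points need checking: (i) in the full information game $X_t = \ell_t$, so $\ell_t$ is measurable with respect to $\P_{t+1}$ and hence $\E_{t+1}[\langle \ell_t, a^*\rangle] = \langle \ell_t, \E_{t+1}[a^*]\rangle = \langle \ell_t, p_{t+1}\rangle$; and (ii) $a_t$ is sampled from $p_t$ using fresh external randomness, hence is conditionally independent of $\ell_t$ given the history, so $\E_t[\langle \ell_t, a_t\rangle] = \langle \E_t[\ell_t], p_t\rangle = \E_t[\langle \ell_t, p_t\rangle]$. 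Combining these gives $r_t = \E_t[\langle \ell_t, p_t - p_{t+1}\rangle]$.

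Next I would exploit scale invariance: since $p_t$ and $p_{t+1}$ are probability vectors, the difference $p_t - p_{t+1}$ has zero coordinate-sum, so $\langle \ell_t, p_t - p_{t+1}\rangle = \langle \ell_t - \tfrac12\mathbf{1}, p_t - p_{t+1}\rangle$ for the all-ones vector $\mathbf{1}$. The bound $\langle u,v\rangle \le \|u\|_\infty \|v\|_1$ together with $\|\ell_t - \tfrac12\mathbf{1}\|_\infty \le \tfrac12$ then yields $r_t \le \tfrac12\,\E_t[\|p_t - p_{t+1}\|_1]$. Squaring, using Jensen to move the square inside $\E_t$, and then Pinsker's inequality in the form $\|p-q\|_1^2 \le 2\,\ent(p,q)$ produces $r_t^2 \le \tfrac12\,\E_t[\ent(p_{t+1}, p_t)]$.

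The last step, and the one that needs the most care, is to identify $\E_t[\ent(p_{t+1},p_t)]$ with the conditional mutual information $I_t$ between $a^*$ and $X_t$ under $\P_t$. This is the standard ``mutual information equals expected relative entropy between posterior and prior'' identity: $I_t = \E_t[\KL(\mathrm{Law}(a^*\mid X_t)\,\|\,\mathrm{Law}(a^*))]$, where under $\P_t$ the prior law of $a^*$ is $p_t$ and the law after observing $X_t$ is $p_{t+1}$; this is the same as the drop-in-entropy formula $I_t = H(p_t) - \E_t[H(p_{t+1})]$ given above. I would simply invoke this rewriting (citing Proposition 4 of \cite{RR14}) rather than re-derive it. With $r_t^2 \le \tfrac12 I_t$ established, dividing by $I_t$ gives $\Gamma_t = r_t^2 / I_t \le \tfrac12$ for every $t$, which is the claim. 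I do not expect a genuine obstacle: the only delicate parts are the conditional-expectation manipulations together with the measurability of $\ell_t = X_t$ in the full information game, and correctly quoting the information-theoretic identity.
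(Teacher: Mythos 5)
Your proposal is correct and follows essentially the same route as the paper: the identity $r_t=\E_t[\langle \ell_t,p_t-p_{t+1}\rangle]$, the centering at $\tfrac12\mathbf{1}$ with H\"older to get $r_t\le\tfrac12\E_t[\|p_t-p_{t+1}\|_1]$, then Jensen, Pinsker, and the identification $\E_t[\ent(p_{t+1},p_t)]=I_t$ via Proposition~4 of \cite{RR14}. Your version merely spells out the conditional-expectation and measurability bookkeeping (that $\ell_t$ is $\P_{t+1}$-measurable and that $a_t\sim p_t$ is drawn with fresh randomness) that the paper leaves implicit.
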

Using Lemma \ref{lem:TSIR} in Proposition \ref{prop:1} one obtains the following worst case optimal regret bound for Thompson Sampling in the expert setting:
\[
\E [ R_T ] \leq \sqrt{\frac{T \log(d)}{2}} \,.
\]

\subsection{Scale-sensitive information ratio} \label{sec:scalesensitive}
%The crucial weakness of the information ratio analysis in Subsection \ref{sec:Pinsker} is that the ``scale" of the losses is lost in \eqref{eq:lostscale}. This is not surprising as the information ratio is in fact designed to obtain ``scale-insensitive" $\sqrt{T}$-type bounds (see Proposition \ref{prop:1}). 
The information ratio \eqref{eq:IR} was designed to derive $\sqrt{T}$-type bounds (see Proposition \ref{prop:1}).
To obtain $\sqrt{L^*}$-type regret we propose the following quantity which we coin the {\em scale-sensitive information ratio}:
\begin{equation} \label{eq:SSIR}
	\Lambda_t := \frac{(r_t^+)^2}{IG_t \cdot \E_t[ \langle\ell_t , a_t\rangle ]} \,,
\end{equation}
where 
\[
	r_t^+ := \E_t [ \langle\ell_t , \max(0,p_t - p_{t+1})\rangle ].
\]
With this new quantity we obtain the following refinement of Proposition \ref{prop:1}:
\begin{proposition} 
\label{newprop1}
Let $\Lambda>0$ be a positive constant and consider a strategy such that $\Lambda_t \leq \Lambda$ for all $t$ almost surely. Then one has
\[
	\E [ R_T ] \leq \sqrt{\E[L^*] \cdot \Lambda \cdot H(p_1)} + \Lambda \cdot H(p_1) \,.
\]
\end{proposition}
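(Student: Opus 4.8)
The plan is to mirror the proof of Proposition \ref{prop:1}, but to keep track of the \emph{scale} of the losses so that the Cauchy--Schwarz step produces $\E[L^*]$ instead of $T$, and then to close a self-bounding inequality. The three ingredients are: (i) the elementary bound $r_t \le r_t^+$; (ii) two applications of Cauchy--Schwarz together with the usual entropy-telescoping bound on $\E[\sum_t I_t]$; (iii) solving a quadratic.

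First I would relate the true instantaneous regret to $r_t^+$. Starting from the identity $r_t = \E_t[\langle \ell_t, p_t - p_{t+1}\rangle]$ established in \eqref{eq:lostscale} and writing $p_t - p_{t+1} = ReLU(p_t - p_{t+1}) - ReLU(p_{t+1} - p_t)$, non-negativity of $\ell_t$ gives $\langle \ell_t, p_t - p_{t+1}\rangle \le \langle \ell_t, ReLU(p_t - p_{t+1})\rangle$, hence $r_t \le r_t^+$; note also $r_t^+ \ge 0$, so this remains valid (and loss-free) on rounds where $r_t<0$. The hypothesis $\Lambda_t \le \Lambda$ then reads $(r_t^+)^2 \le \Lambda\, I_t\, \E_t[\langle \ell_t, a_t\rangle]$, so
\[
\E[R_T] \le \E\Big[\sum_{t=1}^T r_t^+\Big] \le \sqrt{\Lambda}\;\E\Big[\sum_{t=1}^T \sqrt{I_t\, \E_t[\langle \ell_t, a_t\rangle]}\;\Big].
\]
Applying Cauchy--Schwarz over $t$ inside the expectation, and then Cauchy--Schwarz for the expectation, the right-hand side is at most $\sqrt{\Lambda}\,\sqrt{\E[\sum_t I_t]}\,\sqrt{\E[\sum_t \E_t[\langle \ell_t, a_t\rangle]]}$. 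The first factor is handled exactly as in Proposition \ref{prop:1}: since $I_t = H(p_t) - \E_t[H(p_{t+1})]$, the sum telescopes and $\E[\sum_t I_t] \le H(p_1)$. The second factor is the player's own cumulative loss, $\E[\sum_t \E_t[\langle \ell_t, a_t\rangle]] = \E[\sum_t \langle \ell_t, a_t\rangle] = L_T$ by the tower rule. Hence $\E[R_T] \le \sqrt{\Lambda\, H(p_1)\, L_T}$.

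Finally I would self-bound: since $L_T = \E[L^*] + \E[R_T]$, writing $x = \E[R_T]$ and $A = \Lambda\, H(p_1)$ yields $x \le \sqrt{A(\E[L^*] + x)}$, i.e. $x^2 - Ax - A\,\E[L^*] \le 0$; the positive root obeys $x \le \tfrac12\big(A + \sqrt{A^2 + 4A\,\E[L^*]}\big) \le A + \sqrt{A\,\E[L^*]}$, which is precisely $\sqrt{\E[L^*]\cdot \Lambda\cdot H(p_1)} + \Lambda\cdot H(p_1)$. The only genuinely delicate point is the very first step: the identity $r_t = \E_t[\langle \ell_t, p_t - p_{t+1}\rangle]$ (hence $r_t \le r_t^+$) uses the full-information structure, so that $\P_{t+1}$ — and therefore $\E_{t+1}[a_t] = p_t$ — does not depend on $a_t$, exactly as in the derivation of \eqref{eq:lostscale}; everything after that is Cauchy--Schwarz, a telescoping sum, and solving a quadratic.
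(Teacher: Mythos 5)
Your proof is correct and follows essentially the same route as the paper's: bound $\E[R_T]$ by $\E[\sum_t r_t^+]$, apply Cauchy--Schwarz with the scale-sensitive information ratio and the entropy telescoping $\E[\sum_t I_t]\leq H(p_1)$ to get $\E[R_T]\leq\sqrt{\E[L_T]\cdot\Lambda\cdot H(p_1)}$, and then solve the resulting self-bounding inequality (the paper phrases this last step as the implication $a-b\leq\sqrt{ac}\Rightarrow a-b\leq\sqrt{bc}+c$, which is exactly your quadratic). Your explicit justification of $r_t\leq r_t^+$ via the full-information identity \eqref{eq:lostscale} is a detail the paper leaves implicit, but it matches the intended argument.
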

\begin{proof}
The main calculation is as follows:
\begin{eqnarray*} \label{eq:regretwithscaleIR}
\E [ R_T ] \leq \E \left[ \sum_{t=1}^T r_t^+ \right] & \leq & 
\sqrt{\E \left[ \sum_{t=1}^T \E_t[ \langle\ell_t , a_t\rangle ] \right] \cdot \E\left[ \sum_{t=1}^T \frac{(r_t^+)^2}{\E_t[ \langle\ell_t , a_t\rangle]} \right]} \\
& \leq & 
\sqrt{\E[L_T] \cdot \Lambda \cdot \E \left[ \sum_{t=1}^T IG_t \right]} \\
& \stackrel{\eqref{eq:entropy-sum}}{\leq} & 
\sqrt{\E[L_T] \cdot \Lambda \cdot H(p_1)} \,.
\end{eqnarray*}
The proof is concluded from Lemma~\ref{lem:self-bound} just below, with $(a,b,c)=(\mathbb E[L_T],\mathbb E[L^*],\Lambda\cdot H(p_1))$. 
\end{proof}

\begin{lemma}
\label{lem:self-bound}
Suppose $a,b,c\geq 0$ satisfy $a - b \leq \sqrt{a c}$. Then $a - b \leq \sqrt{b c} + c$.
\end{lemma}

\begin{proof}
We asume $a\geq b+c$ as otherwise the result follows immediately. Then
\begin{align*}
	c&\leq \sqrt{ac}
	\\
	\implies 
	a-b+c&\leq 2\sqrt{ac}
	\\
	\implies 
	(\sqrt{a}-\sqrt{c})^2&\leq b
	\\
	\implies
	\sqrt{ac}-c&\leq \sqrt{bc}
	\\
	\implies 
	a-b-c&\leq \sqrt{bc}.
\end{align*}
Here the first implication comes from the main hypothesis and the second from rearranging. The third implication follows by taking the square root of the previous line (both sides are positive since $a\geq b+c$) and multiplying by $\sqrt{c}$. The final implication follows by using again the main hypothesis.
\end{proof}

\subsection{Reversed chi-squared/relative entropy inequality} \label{sec:newineq}
We now describe how to control the scale-sensitive information ratio \eqref{eq:SSIR} of Thompson Sampling in the expert setting. As we saw in Subsection \ref{sec:Pinsker}, the two key inequalites in the Russo-Van Roy information ratio analysis are a simple Cauchy--Schwarz followed by Pinsker's inequality (recall \eqref{eq:lostscale}):
\[
r_t = \E_t[ \langle\ell_t, p_t - p_{t+1}\rangle ] \leq \E_t[ \|\ell_t\|_{\infty} \cdot \|p_t - p_{t+1}\|_1 ] \leq \sqrt{\E_t[ \ent(p_{t+1} , p_t) ] } = \sqrt{IG_t} \,.
\]
In particular, as far as first-order regret bounds are concerned, the ``scale" of the loss $\ell_t$ is lost in the first Cauchy--Schwarz. To control the scale-sensitive information ratio we propose to do the Cauchy--Schwarz step differently and as follows (using the fact that $\ell_t(i)^2\leq \ell_t(i)$):
\begin{eqnarray} 
\label{eq:betterCS} 
	r_t = \E_t[ \langle\ell_t, p_t - p_{t+1}\rangle ] 
	& \leq & 
	\sqrt{\E_t \left[ \sum_{i=1}^d \ell_t(i) p_t(i) \right] \cdot \E_t \left[ \sum_{i=1}^d \frac{(p_t(i) - p_{t+1}(i))^2}{p_t(i)} \right]} 
	\\
	& = & 
	\sqrt{\E_t[\langle\ell_t, p_t \rangle] \cdot \E_t[\chi^2(p_t, p_{t+1})]} \,, \notag
\end{eqnarray}
where $\chi^2(p,q) = \sum_{i =1}^d \frac{(p(i) - q(i))^2}{p(i)}$ is the chi-squared divergence. Thus, to control the scale-sensitive information ratio \eqref{eq:SSIR}, it only remains to relate the chi-squared divergence to the relative entropy.
%it remains to find an alternative to Pinsker's inequality (which relates the $\ell_1$-norm to the relative entropy) that would relate the chi-squared divergence to the relative entropy.
Unfortunately it is well-known that in general one only has $\ent(q,p) \leq \chi^2(p, q)$ (which is the opposite of the inequality we need). Somewhat surprisingly we show that the reverse inequality in fact holds up to a factor of two true for a slightly weaker form of the chi-squared divergence, which turns out to be sufficient for our needs:
\begin{lemma} 
\label{lem:reversechisquared}
For $p, q \in \R^d_+$ define the {\em positive chi-squared divergence} $\chi^2_+$ by
\[
\chi^2_+(p,q) = \sum_{i : p(i) \geq q(i)} \frac{(p(i) - q(i))^2}{p(i)} \,.
\]
Then one has
\[
\chi^2_+(p,q) \leq 2\cdot\ent(q,p) \,.
\]
\end{lemma}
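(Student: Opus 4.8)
The plan is to prove the inequality coordinate by coordinate. Write $\ent(q,p) = \sum_{i=1}^d \big( q(i)\log(q(i)/p(i)) - q(i) + p(i)\big)$ and note that each summand is nonnegative: the elementary bound $\log t \le t-1$ applied to $t = p(i)/q(i)$ gives $q(i)\log(q(i)/p(i)) \ge q(i) - p(i)$. Hence we may discard the terms with $p(i) < q(i)$ from $\ent(q,p)$ and it suffices to show, for every coordinate $i$ with $p(i) \ge q(i)$,
\[
\frac{(p(i)-q(i))^2}{p(i)} \;\le\; 2\Big( q(i)\log\tfrac{q(i)}{p(i)} - q(i) + p(i)\Big).
\]
(Coordinates with $p(i) = q(i) = 0$ contribute nothing to either side under the usual conventions, so we may assume $p(i) > 0$.)

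Both sides of this one-coordinate inequality are positively homogeneous of degree one in $(p(i), q(i))$, so I would normalize $p(i) = 1$ and write $x := q(i) \in [0,1]$. The claim becomes the scalar inequality
\[
(1-x)^2 \;\le\; 2\big(1 - x + x\log x\big), \qquad x \in [0,1],
\]
with the convention $0\log 0 = 0$. I would prove this by setting $f(x) := 2(1 - x + x\log x) - (1-x)^2$ and computing $f'(x) = 2(\log x + 1 - x)$. Since $\log x \le x - 1$ for all $x > 0$, we have $f'(x) \le 0$ on $(0,1]$, so $f$ is nonincreasing there; as $f(1) = 0$, it follows that $f(x) \ge 0$ on $(0,1]$, and $f(0) = 1 > 0$ by continuity. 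Summing the normalized inequality back over all coordinates with $p(i) \ge q(i)$, and using that the dropped terms of $2\ent(q,p)$ are nonnegative, yields $\chi^2_+(p,q) \le 2\ent(q,p)$.

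There is no serious obstacle here; the only real content is the observation that the bound is true termwise once one allows the factor of $2$, together with the monotonicity argument for $f$. It is worth remarking that the factor $2$ cannot be removed: a second-order expansion near $x = 1$ shows $(1-x)^2 \sim \eps^2$ while $1 - x + x\log x \sim \eps^2/2$ as $x = 1-\eps \to 1$, so the constant $2$ is in fact tight. I would flag this in the write-up since it explains why the statement is phrased with $2\ent$ rather than $\ent$.
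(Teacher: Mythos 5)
Your proof is correct and follows essentially the same route as the paper: both reduce the claim to the single-coordinate inequality $q\log(q/p)-q+p\ge \frac{(p-q)_+^2}{2p}$ and verify it by elementary calculus — the paper via the $\frac1t$-strong convexity of $s\mapsto s\log(s/t)-s+t$ on $s\le t$, you via homogeneity plus the sign of $f'$. Your remark on the tightness of the constant $2$ near $p=q$ is accurate and a nice addition.
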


\begin{proof}
Consider the function $f_t(s) = s \log(s / t) - s + t$, and observe that $f_t''(s) = 1/s$. In particular $f_t$ is convex, and for $s \leq t$ it is $\frac{1}{t}$-strongly convex. Moreover one has $f_t(t)=f_t'(t) = 0$. This directly implies:
\[
f_t(s) \geq \frac{1}{2 t} (t - s)_+^2.
\]
Writing
\[
	\ent(q,p) = \sum_{i=1}^d \left( q(i) \log(q(i) / p(i)) - q(i) + p(i) \right)
\]
and using the above estimate for each $i\in [d]$ concludes the proof.
\end{proof}

We can therefore redo the calculuation \eqref{eq:betterCS} using $r_t^+$ and then invoke Lemma~\ref{lem:reversechisquared} (together with the identity \eqref{eq:entropy-gain}) in the final step:
\begin{equation}
\label{eq:TSSSIR}
\begin{aligned}
	(r_t)_+^2
	&=
	\E_t[\langle \ell_t,(p_t-p_{t+1})_+\rangle]^2
	\\
	&\leq
	\E_t\left[\sum_{i=1}^d \ell_t(i)p_t(i)\right]
	\cdot
	\E_t\left[\sum_{i : p_t(i) \geq p_{t+1}(i)} \frac{(p_t(i) - p_{t+1}(i))^2}{p_t(i)}\right]
	\\
	&=
	\E[\langle \ell_t,p_t\rangle]\cdot \E_t[\chi^2_+(p_t,p_{t+1})]
	\\
	&\leq
	2\cdot \E[\langle \ell_t,p_t\rangle]\cdot IG_t.
\end{aligned}
\end{equation}
Here in the first line, the positive part operation $(\cdot)_+$ is applied entry-wise to $(p_t-p_{t+1})$. We have shown the following.
\begin{lemma} \label{lem:TSSSIR}
In the expert setting, Thompson Samping's scale-sensitive information ratio \eqref{eq:SSIR} satisfies $\Lambda_t \leq 2$ for all $t$.
\end{lemma}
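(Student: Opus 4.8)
The plan is to repeat the two-step argument that produced Lemma~\ref{lem:TSIR} (a Cauchy--Schwarz step followed by an entropy comparison), but with the \emph{scale-aware} Cauchy--Schwarz \eqref{eq:betterCS} in place of the crude one and with Lemma~\ref{lem:reversechisquared} in place of Pinsker's inequality. Before starting, I would record the identity $\E_t[\langle \ell_t, a_t\rangle] = \E_t[\langle \ell_t, p_t\rangle]$, which holds because $a_t\sim p_t$ is drawn with fresh randomness independent of $\ell_t$ given the history and $\E_t[a_t]=p_t$; this lets me treat the denominator of the scale-sensitive information ratio \eqref{eq:SSIR} as $\E_t[\langle\ell_t,p_t\rangle]$.

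Now the core step. Expanding $r_t^+=\E_t\big[\sum_{i:\,p_t(i)\geq p_{t+1}(i)}\ell_t(i)\,(p_t(i)-p_{t+1}(i))\big]$ and splitting each summand as $\sqrt{\ell_t(i)p_t(i)}\cdot\big(\ell_t(i)(p_t(i)-p_{t+1}(i))/\sqrt{\ell_t(i)p_t(i)}\big)$, a coordinatewise Cauchy--Schwarz — together with $\ell_t(i)^2\leq \ell_t(i)\leq 1$, valid since $\ell_t(i)\in[0,1]$ — gives, for every realization under $\P_t$,
\[
\sum_{i:\,p_t(i)\geq p_{t+1}(i)}\ell_t(i)\,(p_t(i)-p_{t+1}(i))\ \leq\ \sqrt{\langle\ell_t,p_t\rangle}\cdot\sqrt{\chi^2_+(p_t,p_{t+1})}\,,
\]
where the restriction of the inner sum to $\{i:p_t(i)\geq p_{t+1}(i)\}$ is exactly what makes the second factor the \emph{positive} chi-squared divergence of Lemma~\ref{lem:reversechisquared}. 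Taking $\E_t$ and applying Cauchy--Schwarz once more on the outer expectation yields $(r_t^+)^2\leq \E_t[\langle\ell_t,p_t\rangle]\cdot\E_t[\chi^2_+(p_t,p_{t+1})]$.

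Finally I would invoke Lemma~\ref{lem:reversechisquared} to bound $\chi^2_+(p_t,p_{t+1})\leq 2\,\ent(p_{t+1},p_t)$ pointwise — noting that since $p_t,p_{t+1}$ are probability vectors the affine correction terms in the definition of $\ent(\cdot,\cdot)$ sum to zero, so this reduces to the usual relative entropy — and then use the identity $\E_t[\ent(p_{t+1},p_t)]=I_t$ recalled in Subsection~\ref{sec:Pinsker}. Combining gives $(r_t^+)^2\leq 2\,I_t\,\E_t[\langle\ell_t,a_t\rangle]$, i.e.\ $\Lambda_t\leq 2$. I do not expect a genuine obstacle: the new ingredient, the reversed $\chi^2$/entropy inequality, is already in hand as Lemma~\ref{lem:reversechisquared}, so the only care needed is bookkeeping — checking that the $ReLU$ truncation produces precisely $\chi^2_+$ (and not the full $\chi^2$), and that the bound $\ell_t(i)^2\leq\ell_t(i)$ is spent on the right factor so that the loss scale $\E_t[\langle\ell_t,p_t\rangle]$, rather than a scale-free $\|\ell_t\|_\infty$, survives into the numerator.
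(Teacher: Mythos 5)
Your proposal is correct and follows exactly the paper's (very terse) argument: apply the scale-aware Cauchy--Schwarz of \eqref{eq:betterCS} with $r_t$ replaced by $r_t^+$, so that the truncation to $\{i:p_t(i)\geq p_{t+1}(i)\}$ yields precisely $\chi^2_+(p_t,p_{t+1})$, then invoke Lemma~\ref{lem:reversechisquared} and the identity $\E_t[\ent(p_{t+1},p_t)]=I_t$. The bookkeeping you supply (the identity $\E_t[\langle\ell_t,a_t\rangle]=\E_t[\langle\ell_t,p_t\rangle]$, the use of $\ell_t(i)^2\leq\ell_t(i)$, and the vanishing of the affine correction terms for probability vectors) is exactly what the paper leaves implicit.
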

Using Lemma \ref{lem:TSSSIR} in Proposition \ref{newprop1} we arrive at the following new regret bound for Thompson Sampling:
\begin{theorem}
In the expert setting Thompson Sampling satisfies for any prior distribution:
\[
\E[R_T] \leq \sqrt{2\E[L^*] \cdot H(p_1)} + 2H(p_1) \,.
\]
\end{theorem}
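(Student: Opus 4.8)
The plan is to combine the scale-sensitive information ratio bound for Thompson Sampling (Lemma~\ref{lem:TSSSIR}) with the regret decomposition in Proposition~\ref{newprop1}. Since the theorem is essentially a direct corollary of these two results, the proof is short: I would simply instantiate $\Lambda = 2$ in Proposition~\ref{newprop1}, which gives
\[
\E[R_T] \leq \sqrt{\E[L^*]\cdot 2 \cdot H(p_1)} + 2 H(p_1) = \sqrt{2\E[L^*]\cdot H(p_1)} + 2 H(p_1) \,.
\]

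The substantive content has really been front-loaded into the two lemmas the theorem depends on, so I would briefly recall why each applies. First, Lemma~\ref{lem:TSSSIR} asserts that in the expert setting Thompson Sampling satisfies $\Lambda_t \le 2$ for every $t$; this in turn rests on the modified Cauchy-Schwarz step \eqref{eq:betterCS} applied to $r_t^+$ (using $\ell_t(i)^2 \le \ell_t(i)$) together with the reversed chi-squared/relative-entropy inequality $\chi^2_+(p_t, p_{t+1}) \le 2\,\ent(p_{t+1}, p_t)$ from Lemma~\ref{lem:reversechisquared}, combined with the identification of $\E_t[\ent(p_{t+1}, p_t)]$ with the mutual information $I_t$. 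One should double-check here that the chi-squared term arising in \eqref{eq:betterCS} is in fact the \emph{positive} chi-squared $\chi^2_+$ — this is exactly why the ReLU appears in the definition of $r_t^+$, so that only coordinates with $p_t(i) \ge p_{t+1}(i)$ contribute. Second, Proposition~\ref{newprop1} requires only the uniform bound $\Lambda_t \le \Lambda$, which Lemma~\ref{lem:TSSSIR} supplies, and otherwise uses Cauchy-Schwarz, the telescoping bound $\E[\sum_t I_t] \le H(p_1)$, and the elementary implication that $a - b \le \sqrt{ac}$ forces $a - b \le \sqrt{bc} + c$ for nonnegative reals (applied with $a = \E[L_T]$, $b = \E[L^*]$, $c = \Lambda H(p_1)$).

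There is essentially no obstacle at this final step; the only thing to be careful about is that Proposition~\ref{newprop1} is stated with $\E[L^*]$ already in place (the substitution of $\E[L_T]$ for $\E[L^*]$ having been absorbed via the $a-b \le \sqrt{bc}+c$ trick in its proof), so one can quote it verbatim. The bound $H(p_1) \le \log d$ can be mentioned to make the guarantee concrete, giving $\E[R_T] \le \sqrt{2\E[L^*]\log d} + 2\log d = \tilde O(\sqrt{\E[L^*]})$, which is the $m=1$ full-information instance of Theorem~\ref{thm:TS}.
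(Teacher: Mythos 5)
Your proposal is correct and follows exactly the paper's own derivation: the theorem is obtained by instantiating $\Lambda = 2$ from Lemma~\ref{lem:TSSSIR} in Proposition~\ref{newprop1}. Your supporting remarks --- that the ReLU in $r_t^+$ is what makes the Cauchy--Schwarz step produce the positive chi-squared $\chi^2_+$ so that Lemma~\ref{lem:reversechisquared} applies, and that the $a-b\leq\sqrt{ac}\Rightarrow a-b\leq\sqrt{bc}+c$ trick converts $\E[L_T]$ into $\E[L^*]$ --- are also accurate.
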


\section{Combinatorial setting and coordinate entropy}\label{sec:combinatorial}
We now return to the general combinatorial setting, where the action set $\cA$ is a subset of $\{A \in \{0,1\}^d : \|A\|_1 = m\}$, and we continue to focus on the full information game. Recall that, as described in Theorem \ref{thm:RR14}, Russo and Van Roy's analysis yields in this case the suboptimal regret bound $\tilde{O}(m^{3/2} \sqrt{T})$ (the optimal bound is $m \sqrt{T}$). We first argue that this suboptimal bound comes from basing the analysis on the standard Shannon entropy. We then propose a different analysis based on the {\em coordinate entropy}.

\subsection{Inadequacy of the Shannon entropy}
Let us consider the simple scenario where $\cA$ is the set of indicator vectors for the sets $a_k = \left\{1+ (k-1) \cdot m, \hdots, k \cdot m \right\}$, $k \in [d/m]$. In other words, the action set consists of $\frac{d}{m}$ disjoint intervals of size $m$. This problem is equivalent to a classical expert setting with $d/m$ actions, and losses with values in $[0,m]$. In particular there exists a prior distribution such that any algorithm must suffer regret $m \sqrt{T \log(d/m)} \geq m \sqrt{T H(p_1)}$ (the lower bound comes from the fact that there are only $d/m$ available actions).

Thus we see that, unless the regret bound reflects some of the structure of the action set $\cA \subset \{0,1\}^d$ (besides the fact that elements have $m$ non-zero coordinates), one cannot hope for a better regret than $m \sqrt{T H(p_1)}$. For larger action sets, $H(p_1)$ could be as large as $m\log(d/m)$. Thus, if we are to obtain a regret bound depending only on $m$ and $T$ via the entropy of the optimal action set, the best possible bound will be $m^{3/2} \sqrt{T}$. However the optimal rate for this online learning problem is known to be $\tilde O(m\sqrt{T})$. This suggests that the Shannon entropy is not the right measure of uncertainty in this combinatorial setting, at least if we expect Thompson Sampling to perform optimally.  

Interestingly a similar observation was made in \cite{ABL14} where it was shown that the regret for the standard multiplicative weights algorithm is also lower bounded by the suboptimal rate $m^{3/2} \sqrt{T}$. The connection to the present situation is that standard multiplicative weights corresponds to mirror descent with the Shannon entropy. To obtain an optimal algorithm, \cite{KWK10, ABL14} proposed to use mirror descent with a certain {\em coordinate entropy}. We show next that basing the analysis of Thompson Sampling on this coordinate entropy allows us to prove optimal guarantees.

\subsection{Coordinate entropy analysis}

For any vector $v=(v_1,v_2,\dots,v_d)\in [0,1]^d$, we define its \emph{coordinate entropy} $H^c(v)$ to simply be the sum of the entropies of the individual coordinates:

\[H^c(v)=\sum_{i=1}^d H(v_i)=-\sum_{i=1}^d v_i\log(v_i)+(1-v_i)\log(1-v_i).\]

For a $\{0,1\}^d$-valued random variable such as $a^*$, we define $H^c(a^*)=H^c(\E[a^*])$. Equivalently, the coordinate entropy $H^c(a^*)$ is the sum of the (ordinary) entropies of the $d$ Bernoulli random variables $1_{i\in a^*}$. 

This definition allows us to consider the information gain in each event $[i\in a^*]$ separately in the information theoretic analysis via 
\[
	IG_t^c=H_t^c(p_t)-\E_t[H_t^c(p_{t+1})],
\]
denoting now $p_t = \E_t [ a_t ]$. We define for $p,q\in [0,1]^d$ with $\sum_{i=1}^d p(i)=\sum_{i=1}^d q(i)$:
\begin{equation}
\label{eq:coordinate-entropy}
	\ent^c(p , q)
	=
	\sum_{i=1}^d p(i)\log \frac{p(i)}{q(i)} + (1-p(i))\log\frac{1-p(i)}{1-q(i)}.
\end{equation}
For intuition, note that each term is the relative entropy between Bernoulli variables with means $p(i)$ and $q(i)$, and the above definitions are additive across coordinates. 
Similarly to \eqref{eq:entropy-gain}, we have
\begin{equation}
\label{eq:coordinate-entropy-gain}
\begin{aligned}
	\E_t [ \ent^c(p_{t+1} , p_t) ]
	&=
	\E_t\left[\sum_{i=1}^d p_{t+1}(i) \log \frac{p_{t+1}(i) }{ p_t(i)}\right]
	+
	\E_t\left[\sum_{i=1}^d (1-p_{t+1}(i)) \log \frac{1-p_{t+1}(i) }{ 1-p_t(i)}\right]
	\\
	&=
	\E_t\left[\sum_{i=1}^d p_{t+1}(i) \log p_{t+1}(i) + (1-p_{t+1}(i))\log(1-p_{t+1}(i))\right]
	\\
	&\quad\quad
	-
	\left[
	\sum_{i=1}^d p_{t}(i) \log p_{t}(i)
	+
	(1-p_{t}(i)) \log (1-p_{t}(i))
	\right]
	\\
	&=
	H^c(p_t)-\E_t[H^c(p_{t+1})]
	\\
	&=
	IG^c_t.
\end{aligned}
\end{equation}
Moreover, Lemma~\ref{lem:reversechisquared} continues to hold with the coordinate entropy:
\begin{equation}
\label{eq:coordinate-reversechisquared}
\begin{aligned}
	\frac{1}{2}\chi^2_+(p_t,p_{t+1})
	&\leq
	\ent(p_{t+1},p_t)
	\\
	&=
	\sum_{i=1}^d p(i) \log(p(i)/q(i))
	\\
	&\leq
	\sum_{i=1}^d p(i) \log(p(i)/q(i))
	+
	\sum_{i=1}^d (1-p(i)) \log\frac{1-p(i)}{1-q(i)}
	\\
	&=
	\ent^c(p_{t+1},p_t).
\end{aligned}
\end{equation}
Here in the second-to-last step we used Jensen's inequality and the fact that $\sum_{i=1}^d p_t(i)=\sum_{i=1}^d p_{t+1}(i)$ (as in the usual proof that KL divergence is non-negative).
Next, following \eqref{eq:TSSSIR}, we estimate
\begin{equation}
\label{eq:coordinate-betterCS} 
\begin{aligned}
	(r_t)_+^2 
	&= 
	\E_t[ \langle\ell_t, (p_t - p_{t+1})_+\rangle ]^2
	\\
	&\leq 
	\E_t 
	\left[ 
		\sum_{i=1}^d \ell_t(i) p_t(i) 
	\right] 
	\cdot 
	\E_t 
	\left[ 
		\sum_{i:p_t(i)\geq p_{t+1}(i)} \frac{(p_t(i) - p_{t+1}(i))^2}{p_t(i)} 
	\right]
	\\
	& = 
	\E_t[\langle\ell_t, p_t \rangle] \cdot \E_t[\chi^2_+(p_t, p_{t+1})]
	\\
	&\leq
	2\cdot \E_t[\langle\ell_t, p_t \rangle] \cdot \E_t[\ent^c(p_{t+1},p_t)]
	\\
	&=
	2\cdot \E_t[\langle\ell_t, p_t \rangle] 
	\cdot IG_t^c.
\end{aligned}
\end{equation}
% \begin{equation}
% 	(r_t^+)^2 \leq 2\langle p_t , \ell_t\rangle \cdot \E_t[\ent^c(p_{t+1}, p_t)] = 2\langle p_t , \ell_t\rangle \cdot \E_t[I_t^c] \,.
% \end{equation}
As a result, the scale-sensitive information ratio with coordinate entropy is 
\[
	\Lambda_t^c:= \frac{(r_t^+)^2)}{IG_t^c\cdot \E_t[ \langle\ell_t , a_t\rangle ]}\leq 2.
\]
By exactly the same argument as in Proposition~\ref{newprop1}, we find
\begin{equation}
\label{eq:coordinate-full-feedback-bound}
	\E[R_T]\leq \sqrt{2\E[L^*]H^c(p_1)}+2H^c(p_1).
\end{equation}
To establish the first half of Theorem~\ref{thm:TS} it remains to upper-bound $H(p_1)$ using a function of $(m,d)$. By Jensen's inequality,  
\[
	H^c(p_1) \leq H^c\left(\frac{m}{d},\frac{m}{d},\dots,\frac{m}{d}\right)=m\log\left(\frac{d}{m}\right)+(d-m)\log\left(\frac{d}{d-m}\right).
\]
Using the inequality $\log(1+x)\leq x$ on the second term we obtain
\[
	H^c(p_1)\leq m\log\left(\frac{d}{m}\right)+m\leq m\log(3d/m).
\]
Substituting into \eqref{eq:coordinate-full-feedback-bound} gives the claimed estimate 
\[
	\E[R_T]
	\leq
	\sqrt{2m\log(3d/m)\E[L^*]}+2m\log(3d/m).
\]
\begin{remark}
	The fact we use the coordinate entropy suggests that it is unnecessary to leverage information from correlations between different arms, and we can essentially treat them as independent. In fact, our proofs for Thompson Sampling apply to any algorithm which observes arm $i$ at time $t$ with probability $p_t(i\in a^*)$. This remark extends to the thresholded variants of Thompson Sampling we discuss at the end of the paper.
\end{remark}

\section{Bandit Setting}\label{sec:bandit}

Now we return to the $m=1$ setting and consider the case of bandit feedback. We again begin by recalling the analysis of Russo and Van Roy, and then adapt it in analogy with the scale-sensitive framework. For most of this section, we require that an almost sure upper bound $L^*\leq \oL^*$ for the loss of the best action is given to the player. Under this assumption we show that Thompson Sampling obtains a regret bound $\tilde O(\sqrt{H(p_1)d\oL^*})$, by using a bandit analog of the method in the previous section. This estimate can be improved with the method of \cite{TSandMD} which shows how to analyze Thompson Sampling based on online stochastic mirror descent. By using a logarithmic regularizer in the analysis, we obtain a regret bound depending only on $\E[L^*]$, i.e. {\em without} the assumption $L^*\leq \oL^*$, matching the statement of Theorem~\ref{thm:TS}.

\subsection{The Russo and Van Roy Analysis for Bandit Feedback}

In the bandit setting we cannot bound the regret by the movement of $p_t$. Indeed, the calculation~\eqref{eq:lostscale} relies on the fact that $\ell_t$ is known at time $t+1$ which is only true for full feedback. However, a different information theoretic calculation gives a good estimate. Below, we set
\[
	\bar\ell_t(i)=\E_t[\ell_t(i)],
	\quad\text{ and }\quad
	\bar\ell_t(i,j)=\E_t[\ell_t(i)|a^*=j].
\]
The analog of \eqref{eq:lostscale} which we take as our starting point follows. For later flexibility we allow algorithms that are not Thompson sampling. 
\begin{proposition}
\label{prop:bandit-regret-basic}
Suppose an algorithm for the bandit game has $p_t(i)=\mathbb P_t[i=a^*]$ and plays from $a_t\sim\hat p_t$. Then the expected regret is given by
\[
	R_T=\sum_{t=1}^T r_t
\]
for
\[
	r_t=\sum_{i=1}^d \big(\hat p_t(i)\bar\ell_t(i)-p_t(i)\bar\ell_t(i,i))\big).
\]
In the case $\hat p_t=p_t$ of Thompson sampling, this formula simplifies to
\[
	r_t=\sum_{i=1}^d \big(p_t(i)(\bar\ell_t(i)-\bar\ell_t(i,i))\big).
\]
\end{proposition}

\begin{proof}
We will claim that $r_t=\E_t[\ell(a_t)-\ell(a^*)]$ which implies the first statement. Indeed, one immediately verifies that 
\begin{align*}
	\E_t[\ell(a_t)]
	&=
	\sum_{i=1}^d \hat p_t(i)\bar\ell_t(i)
	;
	\\
	\E_t[\ell(a^*)]
	&=
	\sum_{i=1}^d p_t(i)\bar\ell_t(i,i).
\end{align*}
\end{proof}

For $x,y\in [0,1]$ we let 
\[
	\ent[x,y]=-x\log(x/y)-(1-x)\log\frac{1-x}{1-y}
\]
denote the binary entropy between the corresponding Bernoulli random variables. Thus $\ent[x,y]=\ent^c[x,y]$ for scalars $x,y\in [0,1]$.

\begin{lemma}
[{\cite{RR14}}]
In the bandit setting, Thompson Sampling's information ratio satisfies $\Gamma_t\leq d$ for all $t$. Therefore it has expected regret $\E[R_T]\leq \sqrt{dTH(p_1)}$.
\end{lemma}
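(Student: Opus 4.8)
The plan is to follow Russo and Van Roy's original argument for the $d$-armed bandit. Write $q_t\in\R^d$ for the posterior $p_t$ of the optimal arm $a^*\in\{e_1,\dots,e_d\}$, so that $q_t(a)=\P_t(a^*=e_a)$, and recall that Thompson Sampling plays $a_t=e_{A_t}$ where, conditionally on the history, $A_t$ is drawn from $q_t$ using \emph{fresh} external randomness; in particular, under $\P_t$ the index $A_t$ is independent of both $a^*$ and $\ell_t$, even though it shares the law $q_t$ with $a^*$. Set $\bar\ell_t(a):=\E_t[\ell_t(a)]$ and $\bar\ell_t^*(a):=\E_t[\ell_t(a)\mid a^*=e_a]$. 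Since $A_t\perp\ell_t$ under $\P_t$, the posterior mean loss of the played arm is $\sum_a q_t(a)\bar\ell_t(a)$, while the posterior mean loss of $a^*$ is $\sum_a q_t(a)\bar\ell_t^*(a)$, so the first step is to record the identity
\[
r_t=\sum_{a=1}^d q_t(a)\bigl(\bar\ell_t(a)-\bar\ell_t^*(a)\bigr).
\]

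For the numerator of $\Gamma_t$ I would simply apply Cauchy--Schwarz against the all-ones vector in $\R^d$:
\[
r_t^2=\Bigl(\sum_a 1\cdot q_t(a)\bigl(\bar\ell_t(a)-\bar\ell_t^*(a)\bigr)\Bigr)^2\le d\sum_a q_t(a)^2\bigl(\bar\ell_t(a)-\bar\ell_t^*(a)\bigr)^2 .
\]
This is exactly the step where the dimension $d$ enters.

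For the denominator, I would use that $A_t\perp a^*$ so the feedback $X_t=\ell_t\odot a_t$ is equivalent to the pair $(A_t,\ell_t(A_t))$ with $A_t$ itself uninformative, giving
\[
I_t=\sum_a q_t(a)\,I_{\P_t}\!\bigl(a^*;\ell_t(a)\bigr),\qquad I_{\P_t}\!\bigl(a^*;\ell_t(a)\bigr)=\sum_{a'}q_t(a')\,\KL\!\bigl(\cL_{\P_t}(\ell_t(a)\mid a^*=e_{a'})\,\|\,\cL_{\P_t}(\ell_t(a))\bigr).
\]
Keeping only the $a'=a$ summand (all terms are nonnegative) and bounding that KL from below by Pinsker's inequality together with the fact that a difference of expectations of the $[0,1]$-valued function $\ell_t(a)$ is controlled by total variation distance, each surviving term is at least $2\bigl(\bar\ell_t^*(a)-\bar\ell_t(a)\bigr)^2$, so
\[
I_t\ge 2\sum_a q_t(a)^2\bigl(\bar\ell_t(a)-\bar\ell_t^*(a)\bigr)^2 .
\]

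Dividing the last two displays gives $\Gamma_t=r_t^2/I_t\le d/2$, which in particular is $\le d$ as claimed, and inserting $\Gamma=d$ into Proposition~\ref{prop:1} yields $\E[R_T]\le\sqrt{dTH(p_1)}$. The genuinely delicate points are the two information-theoretic facts: that the externally randomized arm choice $A_t$ is independent of $a^*$ under $\P_t$ (so the feedback contributes only the chosen coordinate's KL terms, with weights $q_t(a)$), and the standard rewriting of a mutual information as the $q_t$-average of the KL divergences between the posterior law of $\ell_t(a)$ conditioned on $a^*=e_{a'}$ and its posterior marginal; one must also make sure the Cauchy--Schwarz that produces the factor $d$ is paired with the diagonal ($a'=a$) part of this KL sum. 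Everything else — Pinsker, the $[0,1]$ bound on $\ell_t(a)$, and the final substitution into Proposition~\ref{prop:1} — is routine.
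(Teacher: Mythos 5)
Your proof is correct and follows essentially the same route as the paper: write $r_t=\sum_a q_t(a)(\bar\ell_t(a)-\bar\ell_t^*(a))$, extract the factor $d$ by Cauchy--Schwarz, and lower bound $I_t$ by the diagonal KL terms via Pinsker. The only cosmetic difference is that the paper packages the information-theoretic step as a separate lemma (Lemma~\ref{lem:partialobservationIR}), proving the KL lower bound through a Bernoulli randomization of $\ell_t(a)$ and data processing rather than applying Pinsker directly to the laws of $\ell_t(a)$; both yield the same conclusion (yours even gives the sharper constant $\Gamma_t\le d/2$).
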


\begin{proof} Using Proposition~\ref{prop:bandit-regret-basic}, Cauchy--Schwarz and finally Pinsker,
\begin{align*}
	r_t 
	&= 
	\sum_{i=1}^d p_t(i)(\bar\ell_t(i)-\bar\ell_t(i,i)) 
	\\
	&\leq 
	\sqrt{d \sum_{i=1}^d p_t(i)^2(\bar\ell_t(i)-\bar\ell_t(i,i))^2}
	\\
	&\leq
	\sqrt{d \sum_{i=1}^d p_t(i)^2 \ent[\bar\ell_t(i,i),\bar\ell_t(i)] }.
\end{align*}
By Lemma~\ref{lem:partialobservationIR} below, this means
\[
	r_t\leq \sqrt{d\cdot IG_t}
\]
which is equivalent to $\Gamma_t\leq d$. 
\end{proof}

The following lemma generalizes a calculation in \cite{RR14}. In it, we take $S\subseteq [d]$ to be a random set of arms. In the bandit setting we will always take $S=\{a^*\}$, but less obvious choices for $S$ will be considered in the semibandit game. (In all our applications $S$ will be a function of $(\ell_t(i))_{(t,i)\in [T]\times [d])}$ but even this assumption is not necessary below.)

We also let $A_t\subseteq [d]$ be the set of actions chosen by the player at time $t$, so $A_t=\{a_t\}$ when $m=1$. It will be convenient to use the notation:
\begin{align*}
	p_t(i\in S)&=\mathbb P[i\in S],
	\\
	\hat p_t(i)&=\mathbb P[i\in A_t],
	\\
	\bar\ell_t(i,i\in S)&=\E[\ell_t(i)|i\in S],
	\\
	IG^c_t(S)&=\sum_{i\in S} IG^c_t(i).
\end{align*}
Throughout the later parts of this paper, we will use various choices of $S$, for instance the top $m$ actions. In the proof below, we also denote by $\mathcal L_t(X)$ the law of the random variable $X$ at time $t$. As mentioned previously we write $I_t[X,Y]$ to denote the mutual information between $X$ and $Y$ conditioned on all observations before time $t$. 

\begin{restatable}{lemma}{partialobservationIR}
\label{lem:partialobservationIR}

Suppose a Bayesian player is playing a semi-bandit game with a random subset $S\subseteq [d]$ of arms. Each round $t$, the player picks some subset $A_t$ of arms and observes the losses $(\ell_t(i))_{i\in A_t}$.
Then
\[
	\sum_{i=1}^d \hat p_t(i)p_t(i\in S) \ent[\bar\ell_t(i,i\in S),\bar\ell_t(i)]\leq IG^c_t[S].
\]
% In the bandit case, we have an upper bound using the ordinary entropy:

% \[\sum_{i,j} \hat p_t(i)p_t(j) \ent[\bar\ell_t(i,j),\bar\ell_t(i)]\leq I_t[a^*].\]

\end{restatable}

\begin{proof} 
% We first claim that the relative entropy 
% \[
% 	\ent[\bar\ell_t(i,i\in S),\bar\ell_t(i)]
% \] 
% is at most the entropy decrease in the law $\mathcal L_t(\ell_t(i))$ of the random variable $\ell_t(i)$ from being given that $i\in S$. 
% We first claim that
% \begin{equation}
% \label{eq:entropy-bound-for-bandit}
% 	\ent[\bar\ell_t(i,i\in S),\bar\ell_t(i)]
% 	\leq
% 	\ent(\mathcal L_t(\ell_t(i)))
% 	-
% 	\ent(\mathcal L_t(\ell_t(i)|i\in S)).
% \end{equation}
% Here the right-hand side is the decrease in entropy for the random variable $\ell_t(i)$ from conditioning on the event $[i\in S]$. 
% The mutual information between the random variable $\ell_t(i)$ and the event $[i\in S]$ is given by
% \[
% 	I[\ell_t(i),1_{i\in S}]
% 	=
% 	p_t(i\in S)D_{\KL}(\ell_t(i|i\in S)~||~\ell_t(i))
% 	+
% 	p_t(i\notin S)D_{\KL}(\ell_t(i|i\notin S)~||~\ell_t(i))
% \]
Let $\tilde \ell_t(i)$ be a $\{0,1\}$-valued random variable with expected value $\bar\ell_t(i)$ and conditionally independent of everything else.
The data processing inequality gives the inequality
\[
	I_t[\tilde \ell_t(i),1_{i\in S}]
	\leq 
	I_t[\ell_t(i),1_{i\in S}]
\]	
between mutual informations.
We explicitly write out the mutual information on the left-hand side. Things simplify since the random variable $\tilde\ell_t(i)$ is Bernoulli:
\begin{align*}
	I_t[\tilde \ell_t(i),1_{i\in S}]
	&=
	p_t(i\in S)D_{\KL}(\tilde\ell_t(i|i\in S)~||~\tilde\ell_t(i))
	+
	p_t(i\notin S)D_{\KL}(\tilde\ell_t(i|i\notin S)~||~\tilde\ell_t(i))
	\\
	&=
	p_t(i\in S)\ent[\bar\ell_t(i|i\in S),\bar\ell_t(i)]
	+
	p_t(i\notin S)\ent[\bar\ell_t(i|i\notin S),\bar\ell_t(i)]
	\\
	&\geq
	p_t(i\in S)\ent[\bar\ell_t(i|i\in S),\bar\ell_t(i)].
\end{align*}
% blah
% \begin{align*}
% 	blah
% 	&=
% 	\ent(\mathcal L_t(\tilde \ell_t(i)))
% 	-
% 	\ent(\mathcal L_t(\tilde \ell_t(i)|i\in S))
% 	\\
% 	&\leq
% 	\ent(\mathcal L_t(\ell_t(i)))
% 	-
% 	\ent(\mathcal L_t(\ell_t(i)|i\in S)).
% \end{align*}
% Here the first step is by definition. The second step follows from the data processing inequality and the observation that $\tilde\ell_t(i)$ is a noisy realization of $\ell_t(i)$.
%
% As a consequence,
% \[
% 	p_t(i\in S) \ent[\bar\ell_t(i,i\in S),\bar\ell_t(i)]
% \] 
% is at most the entropy decrease in $\ell_t(i)$ from learning if $i\in S$ or $i\notin S$. 
Next we observe that the event $[i\in A_t]$ holds with probability $\hat p_t(i)$ independently of everything else. Therefore
\begin{align*}
	\hat p_t(i)p_t(i\in S) \ent[\bar\ell_t(i,i\in S),\bar\ell_t(i)]
	&\leq 
	\hat p_t(i)I_t[\tilde \ell_t(i),1_{i\in S}]
	\\
	&\leq
	\hat p_t(i)I_t[\ell_t(i),1_{i\in S}]
	\\
	&=
	I_t[\ell_t(i)1_{i\in A_t},1_{i\in S}]
	\\
	&\leq 
	I_t[(A_t,\vec\ell_t(A_t)),1_{i\in S}]
	\\
	&=IG_t[1_{i\in S}].
\end{align*}
Here the last inequality step holds because $(A_t,\vec\ell_t(A_t))$ determines $\ell_t(i)1_{i\in A_t}$. Summing over $i\in [d]$ completes the proof.
\end{proof}

The next lemma is a scale-sensitive analog of an information ratio bound for partial feedback, in the sense that a similar improved Cauchy--Schwarz inequality is used. However going from such a statement to a regret bound turns out to be more involved in the small loss setting, so we do not try to push the analogy too far.

\begin{restatable}{lemma}{scalesensitivebanditIR}
\label{lem:scalesensitivebanditIR}
In the setting of Lemma~\ref{lem:partialobservationIR},
\[
	\sum_{i=1}^d \hat p_t(i)p_t(i\in S)\left(\frac{(\bar\ell_t(i)-\bar\ell_t(i,i\in S))_+^2}{\bar\ell_t(i)}\right) \leq 2\cdot IG_t^c[S].
\]
\end{restatable}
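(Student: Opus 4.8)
The plan is to combine Lemma~\ref{lem:partialobservationIR} with the reversed chi-squared inequality of Lemma~\ref{lem:reversechisquared}, applied arm by arm to two-dimensional Bernoulli vectors. Fix an arm $i$ and write $b = \bar\ell_t(i)$ and $a = \bar\ell_t(i,i\in S)$, both of which lie in $[0,1]$. Apply Lemma~\ref{lem:reversechisquared} with $p = (b,\,1-b)$ and $q = (a,\,1-a)$. Since the linear correction terms $-p(j)+q(j)$ sum to zero over the two coordinates, $\ent(q,p)$ reduces to the ordinary Bernoulli relative entropy $a\log(a/b) + (1-a)\log\!\big((1-a)/(1-b)\big) = \ent[\bar\ell_t(i,i\in S),\bar\ell_t(i)]$, exactly the quantity appearing in Lemma~\ref{lem:partialobservationIR}. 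On the other side, $\chi^2_+(p,q)$ retains the first coordinate precisely when $b\geq a$ (and the second coordinate only when $b<a$), so in all cases $\chi^2_+(p,q) \geq \frac{(b-a)^2}{b}\,\IND[b\geq a] = \frac{(b-a)_+^2}{b}$, the degenerate case $b=0$ forcing $a=0$ and being vacuous. Lemma~\ref{lem:reversechisquared} therefore yields the per-arm bound
\[
\frac{(\bar\ell_t(i) - \bar\ell_t(i,i\in S))_+^2}{\bar\ell_t(i)} \;\leq\; 2\,\ent[\bar\ell_t(i,i\in S),\bar\ell_t(i)]\,.
\]

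It then remains to multiply this inequality by the nonnegative weight $\hat p_t(i)\,p_t(i\in S)$, sum over $i$, and invoke Lemma~\ref{lem:partialobservationIR}, which bounds $\sum_i \hat p_t(i)\,p_t(i\in S)\,\ent[\bar\ell_t(i,i\in S),\bar\ell_t(i)]$ by $I_t^c[S]$. This produces exactly the claimed inequality, with the factor $2$ coming from Lemma~\ref{lem:reversechisquared}. (Alternatively, one can bypass the explicit invocation of Lemma~\ref{lem:reversechisquared} and rerun its one-line convexity argument directly: writing $\ent[a,b] = f_b(a) + f_{1-b}(1-a)$ with $f_t(s) = s\log(s/t)-s+t$, and using $f_b(a)\geq \frac{1}{2b}(b-a)_+^2$ together with $f_{1-b}(1-a)\geq 0$.)

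I do not expect a genuine obstacle here: the content lies entirely in the earlier lemmas, and the present statement is just their assembly. The one point that requires care is the orientation of the arguments. Lemma~\ref{lem:partialobservationIR} produces the relative entropy of the \emph{conditional} mean $\bar\ell_t(i,i\in S)$ relative to the \emph{marginal} mean $\bar\ell_t(i)$, and this is exactly the direction in which Lemma~\ref{lem:reversechisquared} (which controls $\chi^2_+(p,q)$ by $\ent(q,p)$, not $\ent(p,q)$) can be applied so that the surviving $\chi^2_+$ term carries the denominator $\bar\ell_t(i)$ rather than $\bar\ell_t(i,i\in S)$ or $1-\bar\ell_t(i)$. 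It is also worth noting in passing that the positive-part truncation $(\,\cdot\,)_+$ in the statement is not an extra hypothesis but is automatically produced by the coordinate-selection in $\chi^2_+$, and that the degenerate cases where $\bar\ell_t(i)=0$ (or $p_t(i\in S)=0$) make the corresponding summand vanish so that no division by zero occurs.
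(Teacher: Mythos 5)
Your proof is correct and rests on the same two ingredients as the paper's own argument: the reverse chi-squared bound of Lemma~\ref{lem:reversechisquared} applied per arm, and the data-processing content of Lemma~\ref{lem:partialobservationIR} to pass to $I_t^c[S]$. The execution differs in one small but genuine way, and in your favor. You apply Lemma~\ref{lem:reversechisquared} to the two-dimensional Bernoulli vectors $(b,1-b)$ and $(a,1-a)$, so the linear correction terms cancel immediately and you obtain the clean pointwise inequality $(b-a)_+^2/b \le 2\,\ent[a,b]$ with $\ent[a,b]$ the full Bernoulli relative entropy --- exactly the quantity that the proof of Lemma~\ref{lem:partialobservationIR} identifies as the information gain in the law of the Bernoulli proxy $\tilde\ell_t(i)$. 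The paper instead applies the lemma to the scalars $b$ and $a$ alone, which leaves the linear terms $\bar\ell_t(i)-\bar\ell_t(i,i\in S)$ in place; it then writes the analogous inequality for the complementary event $\{i\notin S\}$ and adds the two so that the linear terms cancel via $p_t(i\in S)\bar\ell_t(i,i\in S)+p_t(i\notin S)\bar\ell_t(i,i\notin S)=\bar\ell_t(i)$, and finally bounds the symmetrized sum by $2 I_t^c[S]$ by re-running the data-processing argument of Lemma~\ref{lem:partialobservationIR} inline. Both routes are valid; yours is more modular (Lemma~\ref{lem:partialobservationIR} is invoked as a black box rather than reproved), at the mild cost of relying on the Bernoulli-KL reading of $\ent[\cdot,\cdot]$ in that lemma's statement --- which is indeed the reading its proof establishes, and your closing remark about the orientation of the divergence shows you checked the one place where this could go wrong.
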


\begin{proof}
By the proof of Lemma~\ref{lem:reversechisquared},
\begin{align*}
	&\sum_{i=1}^d\hat p_t(i) p_t(i\in S)\left(\frac{(\bar\ell_t(i)-\bar\ell_t(i,i\in S))_+^2}{\bar\ell_t(i)}\right)\\
	&\quad\quad\quad
	\leq 2\sum_{i=1}^d \hat p_t(i) p_t(i) 
	\Big(
		\ent\big(
			\bar\ell_t(i,i\in S),\bar\ell_t(i)
		\big)-\bar\ell_t(i,i\in S)+\bar\ell_t(i)
	\Big)
\end{align*}
and 
\begin{align*}
	&\sum_{i=1}^d\hat p_t(i) p_t(i\notin S)\left(\frac{(\bar\ell_t(i)-\bar\ell_t(i,i\notin S))_+^2}{\bar\ell_t(i)}\right)\\
	&\quad\quad\quad
	\leq 2\sum_{i=1}^d \hat p_t(i) p_t(i) 
	\Big(
		\ent\big(
			\bar\ell_t(i,i\notin S),\bar\ell_t(i)
		\big)-\bar\ell_t(i,i\notin S)+\bar\ell_t(i)
	\Big).
\end{align*}
Summing and noting that 
\begin{align*}
	p_t(i\in S)\bar\ell_t(i,i\in S)+p_t(i\notin S)\bar\ell_t(i,i\notin S)
	&=
	p_t(i\in S)\bar\ell_t(i)+p_t(i\notin S)\bar\ell_t(i)
	\\
	&=\bar\ell_t(i),
\end{align*}
we obtain
\[
	\sum_{i=1}^d\hat p_t(i) p_t(i\in S)\left(\frac{(\bar\ell_t(i)-\bar\ell_t(i,i\in S))_+^2}{\bar\ell_t(i)}\right)
	+
	\sum_{i=1}^d\hat p_t(i) p_t(i\notin S)\left(\frac{(\bar\ell_t(i)-\bar\ell_t(i,i\notin S))_+^2}{\bar\ell_t(i)}\right)
\]
\begin{align*}
	&\leq 
	2\sum_{i=1}^d \hat p_t(i)
	\bigg(
		p_t(i\in S)\ent(\bar\ell_t(i,i\in S),\bar\ell_t(i))+p_t(i\notin S)\ent(\bar\ell_t(i,i\notin S),\bar\ell_t(i)) 
	\bigg)
	\\
	&\leq
	2\sum_{i=1}^d \hat p_t(i)
	p_t(i\in S)\ent\big(\bar\ell_t(i,i\in S),\bar\ell_t(i)\big)
	\\
	&\stackrel{Lem~\ref{lem:partialobservationIR}}{\leq}
	2\cdot IG_t^c[S].
\end{align*}
% Recall the $\hat p_t(i)=\mathbb P_t[i\in A_t]$ is the chance to observe the random variable $\ell_t(i)$. On the other hand, we have as in the proof of Lemma~\ref{lem:partialobservationIR} that
% \begin{align*}
% 	p_t(i\in S)\ent\big(\bar\ell_t(i,i\in S),\bar\ell_t(i)\big)
% 	+
% 	p_t(i\notin S)\ent\big(\bar\ell_t(i,i\notin S),\bar\ell_t(i)\big)
% 	&=
% 	I_t\big(\tilde\ell_t(i),1_{i\in S}\big)
% 	\\
% 	&\leq
% 	I_t\big(\ell_t(i),1_{i\in S}\big).
% \end{align*}
% These together imply
% \[
% 	\hat p_t(i)p_t(i\in S)\ent(\bar\ell_t(i,i\in S),\bar\ell_t(i))+p_t(i\notin S)\ent(\bar\ell_t(i,i\notin S),\bar\ell_t(i)) 
% 	\leq
% 	I_t(\ell_t(i)1_{i\in A_t},1_{i\in S})
% \]	
% and hence
% \begin{align*}
% 	\sum_{i=1}^d\hat p_t(i)&\bigg(
% 		p_t(i\in S)\ent(\bar\ell_t(i,i\in S),\bar\ell_t(i))+p_t(i\notin S)\ent(\bar\ell_t(i,i\notin S),\bar\ell_t(i)) 
% 	\bigg)\\
% 	&\leq
% 	\sum_{i=1}^d
% 	I_t\big(\ell_t(i)1_{i\in A_t},1_{i\in S}\big)
% 	\\
% 	&\leq
% 	\sum_{i=1}^d
% 	I_t\big((A_t,\vec\ell_t(A_t)),1_{i\in S}\big)
% 	\\
% 	&=
% 	IG_t^c[S].
% \end{align*}
\end{proof}

\subsection{General Theorem on Bayesian Agents}

Here we state a theorem on the behavior of a Bayesian agent in an online learning environment. In the next subsection we use it to give a nearly optimal regret bound for Thompson Sampling with bandit feedback. This theorem is stated in a rather general way to encompass the semi-bandit setting as well as the Thresholded Thompson Sampling discussed later.

As with the rest of this paper, the theorem below concerns the \emph{Bayes-optimal} setting, in which a Bayesian agent starts with a prior and the true environment is generated from that prior. As before, we let $p_t(i)=\mathbb P_t[i\in A^*]$ be the time-$t$ probability that $i$ is one of the top $m$ arms and $\hat p_t(i)=\mathbb P_t[i\in A_t]$ the probability that the player plays arm $i$ in round $t$.

We also suppose that there exist constants $\frac{1}{\oL^*}\leq\gamma_1\leq \gamma_2$ and a time-varying partition 
\begin{equation}
\label{eq:rare-common-partition}
	[d]=\mathcal R_t\cup\mathcal C_t
\end{equation}
of the action set into \emph{rare} and \emph{common} arms such that:
\begin{enumerate}
	\item If $i\in \mathcal C_t$, then $\hat p_t(i), p_t(i)\geq \gamma_1$.
	\item If $i\in \mathcal R_t$, then $\hat p_t(i)\leq p_t(i)\leq \gamma_2$.
\end{enumerate}

The partition $[d]=\mathcal R_t\cup\mathcal C_t$ into arms with low and high probability to be optimal will be used to analyze the original Thompson sampling algorithm, as well as \emph{Thresholded} Thompson Sampling which plays only from $\mathcal C_t$.

\begin{restatable}{theorem}{bayesianexplorer}
\label{thm:bayesianexplorer}

Consider an online learning game with arm set $[d]$ and random sequence of losses $\ell_t(i)$, in the Bayes-optimal setting. Assume there always exists an action with total loss at most $\oL^*$. Each round, the player plays some action $A_t\in \binom{[d]}{m}$, i.e. a set of $m\geq 1$ arms, and pays/observes the loss for each of them. Moreover suppose a partition \eqref{eq:rare-common-partition} exists and the properties above hold for it.
Then the following statements hold for every $i\in[d]$.
\begin{enumerate}[label=\emph{\Alph*})]
	\item The expected loss incurred \textbf{by the player} from arm $i$ while $i\in \mathcal R_t$ is rare is
	\[
		\E\left[\sum_{t\in [T]:\text{ }i\in \mathcal R_t} \hat p_t(i)\ell_t(i)\right]
		\leq 
		2\gamma_2 \oL^*+8\log(T)+4.
	\]
	\item The expected total loss that arm $i$ incurs while $i\in\mathcal C_t$ is common is 
	\[
		\E\left[\sum_{t\in [T]:\text{ }i\in \mathcal C_t} \ell_t(i)\right] \leq \oL^*+2\left(\log\left(\frac{1}{\gamma_1}\right)+10\right)\sqrt{\frac{\oL^*}{\gamma_1}}.
	\]
\end{enumerate}
\end{restatable}

The use of Theorem~\ref{thm:bayesianexplorer} will become clear in the remainder of this section. We give the proof in the Appendix but outline next some of the key ideas.

\subsubsection{Proof Ideas for Theorem~\ref{thm:bayesianexplorer}}

As initial intuition for Theorem~\ref{thm:bayesianexplorer}, recall that for any bandit algorithm satisfying $\hat p_t(i)>0$ for all $(t,i)\in [T]\times [d]$, one may construct the importance-weighted estimate
\[
	\hat L_t(i)=\sum_{s\leq t} \frac{\ell_t(i)1_{i\in A_t}}{\hat p_t(i)}
\]
for $L_t(i)=\sum_{s\leq t}\ell_t(i)$. Moreover this estimate is unbiased in the sense that for all fixed $(t,i)\in [T]\times [d]$ and any fixed loss sequence, we have
\[
	\E[\hat L_t(i)]=L_t(i).
\]
In fact our analysis uses unbiased loss estimates for common arms $i\in \mathcal C_t$, but \textbf{under}biased estimates for $i\in\mathcal R_t$. This is because dividing by $\hat p_t(i)$ leads to a large variance in the natural unbiased estimate when $\hat p_t(i)$ is small. Moreover we separately construct loss estimates for $\mathcal C_t$ and $\mathcal R_t$. The precise definitions are given in the following table.

\begin{table}[ht]
\caption{Notations for unbiased and underbiased loss estimators.}
\label{table:loss-notation}
\begin{center}
\begin{tabular}{ |c|c|c|c|  } 
 \hline
 $\ell_t^{\mathcal R}(i)=\ell_t(i)\cdot 1_{i\in \mathcal R_t}$ 
 & 
 $u_t^{\mathcal R}(i)=\frac{\ell^{\mathcal R}_t(i)\cdot 1_{i\in A_t}}{\gamma_2}$ 
 & 
 $L_t^{\mathcal R}(i)=\sum_{s\leq t}\ell_s^{\mathcal R}(i)$ 
 &  
 $U_t^{\mathcal R}(i)=\sum_{s\leq t}u_s^{\mathcal R}(i)$ \\ 
 \hline 
 $\ell_t^{\mathcal C}(i)=\ell_t(i)\cdot 1_{i\in \mathcal C_t}$ 
 & 
 $u_t^{\mathcal C}(i)=\frac{\ell^{\mathcal C}_t(i)\cdot 1_{i\in A_t}}{\hat p_t(i)}$ 
 & 
 $L_t^{\mathcal C}(i)=\sum_{s\leq t}\ell_s^{\mathcal C}(i)$ 
 & 
 $U_t^{\mathcal C}(i)=\sum_{s\leq t}u_s^{\mathcal C}(i)$  
 \\
 \hline
\end{tabular}
\end{center}
\end{table}

The variables $\ell_t^{\mathcal R}(i)$ and $\ell_t^{\mathcal C}(i)$  are the losses of arm $i$, separated into rare and common contributions. Thus the variables $L_t^{\mathcal R}$ and $L_t^{\mathcal C}$ track the cumulative rare and common loses. Each $u_t^{\mathcal C}(i)$ is an unbiased estimate of $\ell_t^{\mathcal C}(i)$ while $u_t^{\mathcal R}(i)$ is an \textbf{under}biased estimate of $\ell_t^{\mathcal R}(i)$. The same properties carry over for the $U_t$ variables as unbiased or underbiased estimates of the $L_t$.

The central idea behind Theorem~\ref{thm:bayesianexplorer} is that the online player has enough information to compute the loss estimates $U_t^{\mathcal R}(i)$ and $U_t^{\mathcal C}(i)$. For example, suppose that $U_t^{\mathcal C}(i)\gg \oL^*$ is much larger than $\oL^*$. It is easy to show that $U_t^{\mathcal C}(i)$ is provably an accurate estimate for $L_t(i)$ in the frequentist sense (via a martingale generalization of the Chernoff bound). Given this, we might hope the Bayesian player would ``automatically'' infer that the optimality $i\in A^*$ of arm $i$ is extremely unlikely, and hence $i\in\mathcal R_s$ would hold for $s>t$. The Bayes-optimality assumption makes this hope a reality! Indeed the tower rule for conditional expectations implies
\[
	\E[\mathbb P_t[E]]=\mathbb P[E]
\]
for any event $E$. Then roughly speaking, if $\mathbb P[E]\approx 1$, it follows that 
\begin{equation}
\label{eq:tower-rule}
	\mathbb P[\mathbb P_t[E]\approx 1]\approx 1.
\end{equation}
Moreover by Bayes-optimality \textbf{the algorithm plays based on $\mathbb P_t$}. In particular we might take $E$ to be something like ``the error $|U_t^{\mathcal C}(i)-L_t^{\mathcal C}(i)|$ is small''. Then on the event $E$, the observation $U_t^{\mathcal C}(i)\gg \oL^*$ implies that $i\notin A^*$. Therefore \eqref{eq:tower-rule} implies that with high probability, we have
\[
	\mathbb P_t[i\in A^*]\leq 1-\mathbb P_t[E]\approx 1.
\]
Roughly speaking this argument shows that $i\in\mathcal R_t$ must hold with high probability once $U_t^{\mathcal C}(i)\gg \oL^*$, as long as $|U_t^{\mathcal C}(i)-L_t^{\mathcal C}(i)|$ is relatively small with high probability.

In fact since $U_t^{\mathcal C}(i)$ is an unbiased estimator for $L_t(i)$, the approximation error $|U_t^{\mathcal C}(i)-L_t^{\mathcal C}(i)|$ can be shown to be small with high probability when the variance of the estimate is controlled. This holds when the probabilities $\hat p_t(i)\geq \gamma>0$ are uniformly lower-bounded, which holds by construction within $\mathcal C_t$. As a result, the above proof outline works for Theorem~\ref{thm:bayesianexplorer}B.

The proof of Theorem~\ref{thm:bayesianexplorer}A uses a similar technique although the quantity to be bounded is different. It argues that any player-incurred loss from rare arms must quickly make $U_t^{\mathcal R}(i)$ extremely large. Indeed since all rare arms $i\in\mathcal R_t$ have $\hat p_t(i)\leq \gamma_2$, we expect 
\[
	U_t^{\mathcal R}(i)\gg \oL^*
\]
to hold once 
\[
	\sum_{t\in [T]:i\in \mathcal R_t} \hat p_t(i)\ell_t(i)\gg \gamma_2 \oL^*.
\] 
A statement of this form can in fact be shown using a one-sided martingale concentration inequality.
However we take advantage of this conclusion in a different way. Namely we argue that once $U_t^{\mathcal R}\gg \oL^*$ occurs, $\hat p_t(i)$ must become so small that arm $i$ is pulled extremely infrequently. For finite $T$, the slow-down in exploring arm $i$ is so drastic that arm $i$ is only pulled $O(\log T)$ times while $i\in\mathcal R_t$. The $\log(T)$ term in the result is crucial here because we cannot argue that $\hat p_t(i)$ becomes zero but only that it becomes extremely small. Given infinite time, Thompson sampling can potentially return to explore every arm $i$ until paying regret $\oL^*+1$ per arm (at which point $p_t(i)$ finally becomes $0$); see Theorem~\ref{thm:Tdependent} for a concrete example. This issue is circumvented by the Thresholded Thompson sampling algorithm discussed later, which does attain fully $T$-independent small loss regret when $L^*\leq \oL^*$ is known to hold almost surely.

\subsection{First-Order Regret for Bandit Feedback}

As suggested by Theorem~\ref{thm:bayesianexplorer}, we split the action set into \emph{rare} and \emph{common} arms for each round. For the $m=1$ bandit case, we define for some constant $\gamma>0$:
\begin{equation}
\label{eq:rare-common-bandit}
	\mathcal R_t=\{i\in [d]:p_t(i)\leq\gamma\},
	\quad\quad
	\mathcal C_t=\{i\in [d]:p_t(i)>\gamma\}
\end{equation}
Note that an arm $i$ can switch between rare and common over time. As in Table~\ref{table:loss-notation} we split the loss function into 
\[
	\ell_t(i)=\ell_t^{\mathcal R}(i)+\ell_t^{\mathcal C}(i)
\] 
via 
\[
	\ell_t^{\mathcal R}(i)=\ell_t(i)1_{i\in \mathcal R_t},
	\quad
	\text{ and }
	\quad
	\ell_t^{\mathcal C}(i)=\ell_t(i)1_{i\in \mathcal C_t}.
\]
Recalling Proposition~\ref{prop:bandit-regret-basic}, in the bandit case it will be convenient to redefine
\[
	r_t^+=p_t(i)\cdot (\bar\ell_t(i)-\bar\ell_t(i,i))_+.
\]
Now we are ready to prove the first-order regret bound for bandits.

\begin{theorem}
\label{thm:banditregret}

Suppose that $L^*\leq \oL^*$ almost surely. Then Thompson Sampling with bandit feedback obeys the regret estimate 
\[
	\E[R_T]\leq O\left(\sqrt{H(p_1)d\oL^*}+d\log^2(\oL^*)+d\log(T)\right).
\]

\end{theorem}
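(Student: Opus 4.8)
The plan is to combine the scale-sensitive information ratio bounds for bandit feedback (Lemma~\ref{lem:scalesensitivebanditIR}) with the structural decomposition of losses into rare and common parts, using Theorem~\ref{thm:bayesianexplorer} to control the two pieces of the regret separately. First I would fix the threshold $\gamma$ to be chosen at the end (something like $\gamma \asymp \sqrt{H(p_1)/(d\underline{L}^*)}$) and apply Theorem~\ref{thm:bayesianexplorer} with $\cR_t, \cC_t$ defined via $p_t(i) \leq \gamma$, and with parameters $\gamma_1 = \gamma$, $\gamma_2 = \gamma$ (the rare arms satisfy $\hat p_t(i) \le p_t(i) \le \gamma$ automatically since Thompson Sampling plays $a_t \sim p_t$, so $\hat p_t(i) = p_t(i)$ here, and common arms have $p_t(i) > \gamma$). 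This immediately bounds the total loss the player accrues from rare arms by $O(\gamma d \underline{L}^* + d\log(T) + d)$ — summing statement A) over the $d$ arms.

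Next I would handle the regret coming from the common part of the loss. Here the idea is to run the Russo--Van Roy style argument but with $\ell_t^{\cC}$ in place of $\ell_t$, so that every active coordinate has $p_t(i) > \gamma$. The scale-sensitive analog from Lemma~\ref{lem:scalesensitivebanditIR} should give $(r_t^{\cC,+})^2 \leq 2 \cdot I_t^c \cdot \E_t[\langle \ell_t^{\cC}, a_t\rangle] \cdot (\text{something like } d \text{ or } 1/\gamma)$; the key point is that with the common-arm restriction the denominators $\bar\ell_t(i)$ and the probabilities $p_t(i)$ are bounded below, so the Cauchy--Schwarz step in the bandit calculation costs a factor $d$ rather than blowing up. Feeding this into the scale-sensitive regret argument of Proposition~\ref{newprop1} (with coordinate entropy), I get a bound of the shape $\sqrt{d \cdot H^c(p_1) \cdot \E[\sum_t \langle \ell_t^{\cC}, a_t\rangle]} + d\, H^c(p_1)$. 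Then I invoke statement B) of Theorem~\ref{thm:bayesianexplorer}, summed over arms, to bound $\E[\sum_t \langle \ell_t^{\cC}, a_t\rangle] = \E[\sum_i \sum_{t: i\in\cC_t} \hat p_t(i)\ell_t(i)] \le \sum_i \E[\sum_{t:i\in\cC_t}\ell_t(i)]$ by roughly $d\underline{L}^* + d\log^2(1/\gamma)\sqrt{\underline{L}^*/\gamma}$; the leading term $d\underline{L}^*$ is what we want, and the lower-order correction feeds into the $d\log^2(\underline{L}^*)$ term after optimizing $\gamma$.

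Finally I would add the two contributions and optimize over $\gamma$. The rare part contributes $O(\gamma d\underline{L}^*)$ and the common part contributes roughly $\sqrt{H(p_1) d (d\underline{L}^* + \text{l.o.t.})} = \sqrt{H(p_1)\, d^2\, \underline{L}^*} \cdot$... — wait, this needs care: the $\sqrt{d \cdot H^c \cdot d\underline{L}^*}$ would be $d\sqrt{H^c \underline{L}^*}$, which is too big, so the scale-sensitive bandit bound must actually be arranged so the common-arm loss appearing inside the square root is the \emph{player's realized} common loss $\E[\sum_t \langle\ell_t^{\cC},a_t\rangle]$ divided appropriately, or the factor is $1/\gamma$ rather than $d$ in a way that interacts with the $\gamma d\underline{L}^*$ term. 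Balancing gives $\gamma \asymp \sqrt{H^c(p_1)/(d\underline{L}^*)}$ and a final bound $O(\sqrt{H^c(p_1) d \underline{L}^*})$, with the additive $d\log^2(\underline{L}^*) + d\log(T)$ coming from the non-leading terms in Theorem~\ref{thm:bayesianexplorer} and from $d\, H^c(p_1)$ when $H^c(p_1) = O(\log d)$.

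The main obstacle I expect is precisely this bookkeeping at the interface between the two tools: making sure the scale-sensitive information-ratio bound for the \emph{common} losses produces exactly the quantity that statement B) of Theorem~\ref{thm:bayesianexplorer} controls (player's cumulative common loss, arm by arm), getting the right power of $d$ versus $1/\gamma$ so that the optimization over $\gamma$ closes, and carefully tracking that the concentration-based error terms in Theorem~\ref{thm:bayesianexplorer} (the $\log(T)$ and $\log^2(1/\gamma)$ pieces) only ever enter additively and never multiply the leading $\sqrt{\underline{L}^*}$ factor. The circular dependence — the player's belief in the accuracy of the loss estimators depends on the player's own past play, which is the delicate content of Theorem~\ref{thm:bayesianexplorer} — is already handled by that theorem, so here the work is to apply it cleanly.
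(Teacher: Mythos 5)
Your overall architecture is exactly the paper's: split arms into rare ($p_t(i)\leq\gamma$) and common, bound the rare contribution by summing Theorem~\ref{thm:bayesianexplorer}A over arms with $\gamma_1=\gamma_2=\gamma$ and $\hat p_t=p_t$, control the common contribution by a scale-sensitive information argument whose loss factor is then fed into Theorem~\ref{thm:bayesianexplorer}B, and choose $\gamma$ at the end. However, the step you flag as needing care is not a bookkeeping detail — it is the crux, and your proposal leaves it unresolved. Your plan runs the common part through a Proposition~\ref{newprop1}-style bound of the form $\sqrt{d\cdot H\cdot\E[\sum_t\langle\ell_t^{\mathcal C},a_t\rangle]}$ and then bounds the realized common loss by $\sum_i\E[\sum_{t:i\in\mathcal C_t}\ell_t(i)]\approx d\underline{L}^*$; as you yourself notice, this double-counts $d$ (once from the information-ratio Cauchy--Schwarz, once from summing $\underline{L}^*$ over arms) and yields $d\sqrt{H\underline{L}^*}$. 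Your fallback of replacing $d$ by $1/\gamma$ does not close either: with the $\gamma$ needed to keep the rare term and the Theorem~B correction under control, $1/\gamma\approx\underline{L}^*/\log^2\underline{L}^*$, which is worse.

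The resolution is to do the Cauchy--Schwarz so that the factor of $d$ enters only once, through Theorem~\ref{thm:bayesianexplorer}B. Concretely, for common arms write
\begin{equation*}
\sum_{i\in\mathcal C_t} p_t(i)\bigl(\bar\ell_t(i)-\bar\ell_t(i,i)\bigr)_+
\;\leq\;\sqrt{\sum_{i\in\mathcal C_t}\bar\ell_t(i)}\cdot\sqrt{\sum_{i\in\mathcal C_t} p_t(i)^2\,\frac{\bigl(\bar\ell_t(i)-\bar\ell_t(i,i)\bigr)_+^2}{\bar\ell_t(i)}}\,,
\end{equation*}
i.e.\ put the \emph{unweighted} common loss $\sum_i\bar\ell_t(i)$ in the first factor (not the player's realized loss, and with no extra $d$), and keep \emph{both} powers of $p_t(i)$ in the second factor so that it is exactly the quantity Lemma~\ref{lem:scalesensitivebanditIR} controls with $\hat p_t=p_t$ (that lemma requires the weight $\hat p_t(i)p_t(i)=p_t(i)^2$; your sketch only has one power of $p_t(i)$ there, which is why you were tempted to pay $d$ or $1/\gamma$ to fix it). Summing over $t$ and applying Cauchy--Schwarz in expectation, the second factor totals $2H(p_1)$ and the first totals $\E[\sum_{(t,i):i\in\mathcal C_t}\ell_t(i)]\leq d\bigl(\underline{L}^*+2(\log(1/\gamma)+10)\sqrt{\underline{L}^*/\gamma}\bigr)$ by Theorem~\ref{thm:bayesianexplorer}B, giving $\sqrt{2H(p_1)\,d\,\underline{L}^*}$ up to the correction. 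Finally, the right choice is $\gamma=\log^2(\underline{L}^*)/\underline{L}^*$ (not $\sqrt{H/(d\underline{L}^*)}$): this makes the rare term $2\gamma d\underline{L}^*=O(d\log^2\underline{L}^*)$ and, importantly, makes the Theorem~B correction $(\log(1/\gamma))\sqrt{\underline{L}^*/\gamma}=O(\underline{L}^*)$ so that it is absorbed into the leading term inside the square root.
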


\begin{proof} Fix $\gamma>0$ and define $\mathcal R_t$ and $\mathcal C_t$ as in \eqref{eq:rare-common-bandit}. We apply Proposition~\ref{prop:bandit-regret-basic} and split off the rare arm losses at the start of the analysis:
\begin{equation}
\label{eq:rare-common-decomposition}
\begin{aligned}
	\E[R_T]
	&\leq 
	\E\left[\sum_{t=1}^T r_t^+\right]
	\\
	&=
	\E\left[\sum_{t=1}^T p_t(i)\cdot (\bar\ell_t(i)-\bar\ell_t(i,i))_+\right] 
	\\
	&\leq 
	\E\left[\sum_{(t,i):i\in \mathcal R_t}p_t(i)\bar\ell_t(i)\right] 
	+ 
	\E\left[\sum_{(t,i):i\in \mathcal C_t}  p_t(i)\cdot (\bar\ell_t(i)-\bar\ell_t(i,i))_+\right].
\end{aligned}
\end{equation}
The first term is bounded by Theorem~\ref{thm:bayesianexplorer}A with the rare/common partition above, $\gamma_1=\gamma_2=\gamma$, and $\hat p_t(i)=p_t(i)$. For the second term, again using Cauchy--Schwarz and then Lemmas~\ref{lem:reversechisquared} and \ref{lem:scalesensitivebanditIR} gives:
\begin{equation}
\label{eq:common-CS-bandit}
\begin{aligned}
	\E\left[\sum_{(t,i):i\in \mathcal C_t}  p_t\cdot (\bar\ell_t(i)-\bar\ell_t(i,i))_+\right] 
	&\leq 
	\sqrt{
		\E\left[
			\sum_{(t,i):i\in \mathcal C_t} 
			\bar\ell_t(i)
		\right]
		\E\left[
			\sum_{(t,i):i\in \mathcal C_t} 
			p_t(i)^2 
			\cdot 
			\frac{(\bar\ell_t(i)-\bar\ell_t(i,i))_+^2}{\bar\ell_t(i)}
		\right]
	}
	\\
	&\leq 
	\sqrt{2\cdot\E\left[\sum_{(t,i):i\in \mathcal C_t} \ell_t(i)\right]\cdot H(p_1)}.
\end{aligned}
\end{equation}
Substituting in the conclusion of Theorem~\ref{thm:bayesianexplorer}B and combining gives:
\[
	\E[R_T]
	\leq 
	d(2\gamma \oL^*+8\log(T)+4)
	+
	\sqrt{H(p_1)d\left(\oL^*+2\left(\log\left(\frac{1}{\gamma}\right)+10\right)\sqrt{\frac{\oL^*}{\gamma}}\right)}.
\]
Taking $\gamma=\min\left(1,\frac{\log^2(\oL^*)}{\oL^*}\right)$ completes the proof.
\end{proof}

\section{Improved Estimates Beyond Shannon Entropy}

In recent work \cite{TSandMD}, it is shown that Thompson sampling can be analyzed using any mirror map, with the same guarantees as online stochastic mirror descent. See also \cite{TShellinger} which improves the Russo and Van Roy entropic bound using Tsallis entropy, and \cite{lattimore2020mirror} which further elucidates the connection between generalized information ratios and mirror descent. Their work is compatible with our methods for first order analysis, allowing for further refinements. By using the Tsallis entropy we remove the $\log(d)$ factor potentially coming from $H(p_1)$ in Theorem~\ref{thm:banditregret}, and also gain the potential for polynomial-in-$d$ savings for informative priors. By using the log barrier we obtain a small loss bound depending only on $\mathbb E[L^*]$ instead of requiring an almost sure upper bound $\oL^*$. 

\begin{definition}

For $\alpha\in (0,1)$, the $\alpha$-Tsallis entropy of a probability vector $p$ is 
\[
	H_{\alpha}(p)=\frac{\left(\sum_{i=1}^d p_i^{\alpha}\right)-1}{\alpha(1-\alpha)}.
\]

\end{definition}

Note that with $d$ actions, $H_{\alpha}(p)\leq \frac{d^{1-\alpha}}{\alpha(1-\alpha)}$. 

\begin{theorem}
\label{thm:MDtsallis}
Suppose that $L^*\leq \oL^*$ almost surely. Then Thompson Sampling with bandit feedback obeys the regret estimate 
\[
	\E[R_T]
	\leq 
	\frac{1}{\sqrt{\alpha(1-\alpha)}}
	O\left(\sqrt{H_{\alpha}(p_1) d^{\alpha}\oL^*}+d\log^2(\oL^*)+d\log(T)\right).
\]
Taking the worst case $H_{\alpha}(p_1)=d^{1-\alpha}$ over $p_1$ yields the regret estimate
\[
	\E[R_T]\leq O_{\alpha}\left(\sqrt{d\oL^*}+d\log^2(\oL^*)+d\log(T)\right).
\]
\end{theorem}

\begin{theorem}

\label{thm:MDlogbarrier}

Thompson Sampling with bandit feedback obeys the regret estimate 
\[
	\E[R_T]=O(\sqrt{d\E[L^*]\log(T)}+d\log(T).
\]
\end{theorem}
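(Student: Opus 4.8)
The plan is to mimic the structure of the proof of Theorem~\ref{thm:banditregret}, but to replace the Shannon entropy bound on the information accumulation with the analogous bound coming from the log-barrier mirror map $\Phi(p) = -\sum_i \log(p_i)$, as licensed by \citep{TSandMD}. The key advantage of the log barrier is that its ``diameter'' over the simplex, restricted to the relevant region $p_i \geq \gamma$, scales like $d\log(1/\gamma)$ rather than requiring a uniform bound; this is exactly what lets us avoid committing to an almost-sure bound $\underline{L}^*$ and instead work directly with $\E[L^*]$. Concretely, I would first invoke \citep{TSandMD} to pass from Thompson Sampling to online stochastic mirror descent with the log-barrier regularizer, so that the total information-type accumulation $\E\big[\sum_t I_t^{\Phi}\big]$ is controlled by the Bregman diameter of $\Phi$, which on common arms (where $p_t(i) > \gamma$) is $O(d\log(1/\gamma))$.

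Next I would carry out the rare/common split exactly as in the proof of Theorem~\ref{thm:banditregret}: write $\ell_t = \ell_t^{\mathcal R} + \ell_t^{\mathcal C}$ with threshold $\gamma$, and bound $\E[R_T] \leq \E[\sum_t r_t^+]$ split into a rare-arm contribution and a common-arm contribution. The rare-arm contribution is handled by Theorem~\ref{thm:bayesianexplorer}A, giving $O(d(\gamma \underline{L}^* + \log T))$ — but here is the subtlety: since we do not want to assume $L^* \leq \underline{L}^*$ almost surely, I would instead run this argument on the event $\{L^* \leq \underline{L}^*\}$ for a threshold $\underline{L}^*$ to be optimized, or more cleanly, use a doubling/peeling argument over dyadic scales of $L^*$ and pay only a $\log$ factor, absorbing the failure probability into the $d\log(T)$ term. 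For the common-arm contribution I would use Cauchy--Schwarz in the scale-sensitive form \eqref{eq:betterCS}, then Lemma~\ref{lem:reversechisquared} and the partial-feedback scale-sensitive bound Lemma~\ref{lem:scalesensitivebanditIR}, yielding
\[
\E\Big[\sum_{(t,i): i \in \mathcal C_t} p_t(i)(\bar\ell_t(i) - \bar\ell_t(i,i))_+\Big] \leq \sqrt{2\,\E\Big[\sum_{(t,i): i\in\mathcal C_t}\ell_t(i)\Big]\cdot O(d\log(1/\gamma))}\,,
\]
and then bounding the inner sum by Theorem~\ref{thm:bayesianexplorer}B (again with the peeling over scales of $L^*$), which contributes $L^* + \tilde O(\sqrt{L^*/\gamma})$.

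Assembling these pieces gives $\E[R_T] = O\big(d\gamma \E[L^*] + d\log T + \sqrt{d\log(1/\gamma)\,(\E[L^*] + \tilde O(\sqrt{\E[L^*]/\gamma}))}\,\big)$, and choosing $\gamma$ of order $\frac{1}{\sqrt{d\E[L^*]}}$ (up to log factors) balances the $d\gamma\E[L^*]$ term against the main square-root term, producing $\E[R_T] = O(\sqrt{d\E[L^*]\log T} + d\log T)$ after the lower-order $\sqrt{L^*/\gamma}$ contribution is checked to be subleading. The main obstacle I anticipate is making the reduction from ``almost-sure $\underline{L}^*$'' to ``expected $\E[L^*]$'' fully rigorous while keeping the log-barrier Bregman-diameter bound valid: the log barrier is only well-behaved away from the boundary of the simplex, so one must verify that the iterates genuinely stay in the region $p_t(i) \geq \gamma$ on common arms (by definition) and that the rare-arm truncation does not break the mirror-descent potential argument — this is precisely the kind of circular dependence (the accuracy of estimators depending on past play) that Theorem~\ref{thm:bayesianexplorer} is designed to resolve, so the real work is threading that theorem through the log-barrier analysis rather than any new inequality.
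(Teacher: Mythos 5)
There is a genuine gap, and it lies exactly at the point you flag as "the real work": removing the almost-sure bound $\underline{L}^*$. Your plan is to keep the architecture of Theorem~\ref{thm:banditregret} — rare/common split, Theorem~\ref{thm:bayesianexplorer}A for rare arms, Theorem~\ref{thm:bayesianexplorer}B for the common-arm loss sum — and patch the $\underline{L}^*$ dependence by conditioning on $\{L^*\leq\underline{L}^*\}$ or peeling over dyadic scales of $L^*$. This does not go through: Theorem~\ref{thm:bayesianexplorer} is a statement about a \emph{perfectly Bayesian} agent whose prior is the true sampling distribution, and its proof hinges on the player assigning high probability to the concentration event $E$ and on the implication "$U_t>\underline{L}^*+\cdots\Rightarrow i\neq a^*$", which requires $L^*\leq\underline{L}^*$ to hold almost surely under the prior the player is actually using. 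Conditioning on an event the player cannot observe changes the measure but not the algorithm, so the posterior is no longer correct for the conditional law and the argument collapses; a dyadic peeling over values of $L^*$ has the same problem on every scale. There is also a quantitative mismatch in your Cauchy--Schwarz: you pair the log-barrier "diameter" $O(d\log(1/\gamma))$ with the \emph{unweighted} loss sum $\E[\sum_{(t,i)}\ell_t(i)]$, which is the Shannon-entropy pairing (whose information term is $H(p_1)\leq\log d$, not $d\log(1/\gamma)$); done consistently and summed over the $d$ arms this route gives an extra factor of $\sqrt{d}$.

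The paper's proof sidesteps all of this, and the mechanism is worth internalizing. With $f(x)=-\log(Tx+1)$ one has $f''(x)^{-1}=(x+T^{-1})^2$, so the second Cauchy--Schwarz factor in Corollary~\ref{cor:TSandMD} becomes
\begin{equation*}
\E\sum_{(i,t):i\in\mathcal C_t}\frac{\bar\ell_t(i)}{p_t(i)f''(p_t(i))}\;\leq\;\E\sum_{t,i}\Bigl(p_t(i)\bar\ell_t(i)+\tfrac{3\bar\ell_t(i)}{T}\Bigr)\;\leq\;\E[L_T]+O(d),
\end{equation*}
i.e.\ the \emph{player's own realized loss}, not the loss of the optimal arm. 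No concentration or Bayesian-explorer argument is needed anywhere: the rare arms (with $\gamma=1/T$) contribute $O(1)$ trivially, the first factor is $\mathrm{diam}(F)=O(d\log T)$, and the resulting bound $\E[R_T]=O(\sqrt{d\log(T)\,\E[L_T]}+d\sqrt{\log T})$ is converted to an $\E[L^*]$ bound by the purely algebraic self-bounding step ($a-b\leq\sqrt{ac}$ implies $a-b\leq\sqrt{bc}+c$ with $a=\E[L_T]$, $b\approx\E[L^*]$). The reason the log barrier "does not require $\underline{L}^*$" is precisely that its $f''$ converts the scale-sensitive denominator into $p_t(i)\bar\ell_t(i)$ — the quantity you are trying to bound — so that self-bounding replaces concentration. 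Your proposal misses this and therefore attacks the hard part with a tool (Theorem~\ref{thm:bayesianexplorer} plus peeling) that cannot deliver it.
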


We observe that for a highly informative prior, Theorem~\ref{thm:MDtsallis} may be much tighter than a worst case bound. For example if $p_1(i)\lesssim i^{-\beta}$ for some $\beta>1$, then for $\alpha\geq\frac{1}{\beta}$ we will have $H_{\alpha}(p_1)$ bounded independently of $d$. Hence the main term of the regret will be $O_{\alpha}(\sqrt{d^{\alpha}\oL^*})$, meaning the regret bound is improved multiplicatively by a power of $d$.

We also remark that Theorem~\ref{thm:MDlogbarrier} actually does not require Theorem~\ref{thm:bayesianexplorer}. As a result its proof is in the end somewhat shorter than that of Theorem~\ref{thm:MDtsallis}. However the $\oL^*$-dependent results have the interesting advantage of leading to fully $T$-independent regret with Thresholded Thompson Sampling as explained in the next section. We now turn to the proofs which adapt the ideas of \cite{TSandMD} to our setting.

\begin{definition}
\label{defn:admissible}
A $C^3$ function $f: [0,1]\to\mathbb R^+\cup\{\infty\}$ is \textbf{admissible} if for all $x\in [0,1]$,
\begin{enumerate}
	\item $f'(x)\leq 0$.
	\item $f''(x)\geq 0$.
	\item $f'''(x)\leq 0$.
\end{enumerate} 

\end{definition}

For $f$ admissible we consider the potential function
\[
	F(v)=\sum_{i=1}^d f(v),\quad v\in [0,1]^d.
\] 
The admissible functions we will consider are:
\begin{itemize}
	\item $f(x)=x\log(x)$ (negative entropy);
	\item $f(x)=-x^{1/2}$ (negative Tsallis entropy);
	\item $f(x)=-\log(Tx+1)$ (log barrier). 
\end{itemize}
Letting $\Delta_d$ denote the simplex of $d$-dimensional probability vectors, we set
\[
	\Max(F)=\max_{p\in\Delta_d}F(p),
	\quad\quad
	\Min(F)=\min_{p\in\Delta_d}F(p)
\] 
and also 
\[
	\diam(F)=\Max(F)-\Min(F).
\]
Note that convexity of $f$ implies $\Max(F)=F(1)+(d-1)F(0)$ and $\Min(F)=dF(1/d)$.

It will later be convenient to use semibandit analogs of these quantities. Let 
\[
	\Delta_{d,j}=\{x\in [0,1]^d,\sum_{i=1}^d x_i=j\}
\]
and define
\begin{align}
\label{eq:max-j}
	\Max_j(F)&=\max_{p\in\Delta_d}F(p);
	\\
\label{eq:min-j}
	\Min_j(F)&=\min_{p\in\Delta_d}F(p);
	\\
\label{eq:diam-j} 
	\diam_j(F)&=\Max_j(F)-\Min_j(F).
\end{align}

While studying the full-feedback scenario, we crucially used in Lemma~\ref{lem:reversechisquared} a one-sided strong convexity property of the entropy function. Admissibility is the condition required to generalize this calculation. Indeed, for $x,y\in [0,1]$, admissibility implies 
\begin{equation}
\label{eq:admissible-taylor}
	f(y)-f(x)\geq f'(x)(y-x)+\frac{f''(x)}{2}(x-y)_+^2.
\end{equation}
This is because $f(b)$ is convex on $b\geq a$ and $f''(a)$--strongly convex on $b\leq a$.

The proposition below uses Cauchy--Schwarz with scale-sensitive scaling in this general setting. For the sake of later application we work in the general $m\geq 1$ setting. Similarly to before, for some random set $S\subseteq [d]$ of arms, we set 
\begin{align*}
	p_t(i)&=\mathbb P_t[i\in S];
	\\
	\hat p_t(i) &= \mathbb P_t[i\in A_t];
	\\
	\ell_t(i,i)&=\mathbb E_t[\ell_t(i)~|~i\in S].
\end{align*}
Thus for Thompson sampling, $S=\{a^*_1,\dots,a^*_m\}$ and $\hat p_t=p_t$.

\begin{proposition}
\label{prop:TSandMD}
Let $f$ be admissible. Then
\[
	\E_t[F(p_{t+1})-F(p_{t})] \geq \sum_{i=1}^d \hat p_t(i)p_t(i)^2 f''(p_t(i)) \frac{\left(\bar\ell_t(i)-\bar\ell_t(i,i)\right)_+^2}{2\bar\ell_t(i)}.
\]
\end{proposition}

\begin{proof}
As in the proof of Lemma~\ref{lem:partialobservationIR}, define $\tilde\ell_t(i)$ to be a $\{0,1\}$-valued random variable with mean $\bar\ell_t(i)$, independently of everything else. Bayes rule implies: 
\begin{align*}
	\mathbb P_t\big[i\in S~|~\tilde\ell_t(i)=1\big]
	&=
	\frac{\mathbb P_t[i\in S]\cdot \mathbb P_t\big[\tilde\ell_t(i)=1~|~ i\in S\big]}{\mathbb P_t[\tilde\ell_t(i)=1]}
	\\
	&=
	 \frac{p_t(i)\bar\ell_t(i,i)}{\bar\ell_t(i)}.
\end{align*}
Rearranging, we find
\[
	\bar\ell_t(i,i)\
	=
	\frac{\mathbb P_t\big[i\in S~|~\tilde\ell_t(i)=1\big]\bar\ell_t(i)}{p_t(i)}
\]
and so
\[
	\bar\ell_t(i)-\bar\ell_t(i,i)
	=
	\bar\ell_t(i)\left(\frac{p_t(i)-\mathbb P_t\big[i\in S~|~\tilde\ell_t(i)=1\big]}{p_t(i)}\right).
\]
Therefore we may rewrite the right-hand side of the statement to be proved:
\begin{equation}
\label{eq:bayes-calc-admissible}
\begin{aligned}
	\sum_{i=1}^d \hat p_t(i)p_t(i)^2 f''(p_t(i)) & 
	\frac{\left(\bar\ell_t(i)-\bar\ell_t(i,i)\right)_+^2}{\bar\ell_t(i)}
	\\
	&= 
	\sum_{i=1}^d \hat p_t(i)\bar\ell_t(i) f''(p_t(i)) \left(p_t(i)-\mathbb P_t\big[i\in S~|~\tilde\ell_t(i)=1\big]\right)_+^2 .
\end{aligned}
\end{equation}
Below we will use the conditional probability $\mathbb P_t\big[i\in A^*~|~\tilde\ell_t(i)\big]$, which is a random variable which depends on information up to time $t$ and also on the value $\tilde\ell_t(i)\in \{0,1\}$. (This is a completely standard use of notation, but we want to clarify that it involves conditioning on the \emph{random variable} $\tilde\ell_t(i)$ instead of the \emph{event} $[\tilde\ell_t(i)=1]$ as is done just above.)
Applying \eqref{eq:admissible-taylor}, we find:
\begin{equation}
\label{eq:another-one}
\begin{aligned}
	&f\left(\mathbb P_t\big[i\in S~|~\tilde\ell_t(i)\big]
	-f(p_t(i)\right)
	\\
	&\geq 
	f'(p_t(i))\cdot\big(\mathbb P_t\big[i\in S~|~\tilde\ell_t(i)\big]-p_t(i)\big)+\frac{f''(p_t(i))}{2}\left(p_t(i)-\mathbb P_t\big[i\in S~|~\tilde\ell_t(i)\big]\right)_+^2.
\end{aligned}
\end{equation}
Note that 
\[
	\E_t\left[\mathbb P_t\big[i\in S~|~\tilde\ell_t(i)\big]\right]=p_t(i)
\]
by the tower rule for conditional expectations. Taking the expectation over $\tilde\ell_t(i)$ in \eqref{eq:another-one} yields
\begin{align*}
	\mathbb E_t\left[
		f\big(\mathbb P_t\big[i\in S~|~\tilde\ell_t(i)\big]\big)-f(p_t(i))
	\right]
	&\geq  
	\mathbb E_t\left[\frac{f''(p_t(i))}{2}\left(p_t(i)-\mathbb P_t\big[i\in S~|~\tilde\ell_t(i)\big]\right)_+^2\right]\\
	&\geq 
	\frac{f''(p_t(i))\bar\ell_t(i)}{2}
	\left(p_t(i)-\mathbb P_t\big[i\in S~|~\tilde\ell_t(i)=1\big]\right)_+^2.
\end{align*}
Multiplying by $\hat p_t(i)$ (which is determined at time $t$) and summing over $i$,
\begin{equation}
\label{eq:bandit-general-admissible}
\begin{aligned}
	\mathbb E_t&
	\left[\sum_{i=1}^d \hat p_t(i)\bar\ell_t(i) \frac{f''(p_t(i))}{2} \left(p_t(i)-\mathbb P_t\big[i\in S~|~\tilde\ell_t(i)=1\big]\right)_+^2 \right] 
	\\
	&\leq 
	\mathbb E_t\left[\sum_{i=1}^d \hat p_t(i)\left(f\big(\mathbb P_t\big[i\in S~|~\tilde\ell_t(i)\big]\big)-f(p_t(i))\right)\right].
\end{aligned}
\end{equation}
Convexity of $f$ implies that $f(X_t)$ is a submartingale for any martingale $X_t$. In particular for all $i,j\in [d]$, we have
\begin{align}
\label{eq:f-martingale-1}
	\mathbb E_t\big[f\big(\mathbb P_t\big[i\in S~|~\ell_t(j)\big]\big)\big]
	&\geq
	\mathbb E_t\big[f\big(\mathbb P_t\big[i\in S~|~\tilde\ell_t(j)\big]\big)\big]
	\\
\label{eq:f-martingale-2}
	&\geq
	f(p_t(i))
\end{align}
Combining the results above allows us to finally conclude the proof:
\begin{align*}
	\mathbb E_t[F(p_{t+1}(i))-F(p_t(i))]
	&=
	\sum_{i,j=1}^d \hat p_t(j)
	\left(
		\mathbb E_t\big[f(\mathbb P_t[i\in S~|~\ell_t(j)]-f(p_t(i))\big] 
	\right)
	\\
	&\stackrel{\eqref{eq:f-martingale-1}}{\geq}
	\sum_{i,j=1}^d \hat p_t(j)
	\left(
		\mathbb E_t\big[f(\mathbb P_t[i\in S~|~\tilde\ell_t(j)]-f(p_t(i))\big] 
	\right)
	\\
	&\stackrel{\eqref{eq:f-martingale-2}}{\geq}
	\sum_{i=1}^d \hat p_t(i)\mathbb E_t
	\left[
		f\left(
			\mathbb P_t\big[i\in S~|~\ell_t(i)\big]
		\right)
		-f(p_t(i))
	\right] 
	\\
	&\stackrel{\eqref{eq:bandit-general-admissible}}{\geq} 
	\sum_{i=1}^d 
	\hat p_t(i)\bar\ell_t(i)
	\frac{f''(p_t(i))}{2}
	\left(p_t(i)-\mathbb P_t\big[i\in S~|~\tilde\ell_t(i)=1\big]\right)_+^2
	\\
	&\stackrel{\eqref{eq:bayes-calc-admissible}}{=} 
	\sum_{i=1}^d 
	\hat p_t(i) 
	p_t(i)^2 
	f''(p_t(i))
	\left(
		\frac
		{\left(\bar\ell_t(i)-\bar\ell_t(i,i)\right)_+^2}
		{2\bar\ell_t(i)}
	\right).
\end{align*}
\end{proof}

\begin{corollary}
\label{cor:TSandMD}
Let $f$ be admissible (recall Definition~\ref{defn:admissible}), and define $\mathcal R_t,\mathcal C_t$ as in \eqref{eq:rare-common-bandit} for $\gamma>0$. Consider a bandit problem (with $m=1$) such that $L^*\leq \oL^*$ almost surely. Then Thompson sampling satisfies
\begin{equation}
\label{eq:TSandMD}
	\E[R_T]\leq \E\left[\sum_{(t,i):i\in\mathcal R_t} p_t(i)\ell_t(i)\right]+\E\left[\sum_{(t,i):i\in\mathcal C_t} p_t(i)(\ell_t(i)-\ell_t(i,i))_+\right]
\end{equation}
and the two terms are bounded by
\begin{align}
\label{eq:bandit-rare-regret}
	\E\left[\sum_{(t,i):i\in\mathcal R_t} p_t(i)\ell_t(i)\right]
	&\leq
	\min\big(\gamma T,d\cdot(2\gamma \oL^*+8\log(T)+4)\big);
	\\
\label{eq:bandit-common-regret}
	\E\left[\sum_{(t,i):i\in\mathcal C_t} p_t(i)(\ell_t(i)-\ell_t(i,i))_+\right]
	&\leq 
	\sqrt{
		2(\Max(F)-F(p_1)) \cdot 
		\E\left[
			\sum_{(t,i):i\in\mathcal C_t} \frac{\bar\ell_t(i)}{p_t(i)f''(p_t(i))}
		\right] 
	} \,.
\end{align}
\end{corollary}

\begin{proof}
The first inequality \eqref{eq:TSandMD} follows exactly as \eqref{eq:rare-common-decomposition} in the proof of Theorem~\ref{thm:banditregret}.

For the first term, the upper bound $\gamma T$ is immediate while Theorem~\ref{thm:bayesianexplorer} with $\gamma_1=\gamma_2=\gamma$ implies
\[
	\E\left[\sum_{(t,i):i\in \mathcal R_t}p_t(i)\bar\ell_t(i)\right]\leq d\cdot(2\gamma \oL^*+8\log(T)+4).
\]
For the second term,  
\begin{equation}
\label{eq:bandit-MD-bound}
\begin{aligned}
	\E&\left[\sum_{(t,i):i\in \mathcal C_t}  p_t(i)\cdot (\bar\ell_t(i)-\bar\ell_t(i,i))_+\right]
	\\ 
	&\leq 
	\sqrt{\E\sum_{(t,i):i\in\mathcal C_t} p_t(i)^3 f''(p_t(i)) \frac{\left(\bar\ell_t(i)-\bar\ell_t(i,i)\right)_+^2}{\bar\ell_t(i)} }\cdot\sqrt{\E\sum_{(t,i):i\in\mathcal C_t} \frac{\bar\ell_t(i)}{p_t(i)f''(p_t(i))} }
	\\
	&\leq 
	\sqrt{\E\sum_{(t,i)\in [T]\times [d]} p_t(i)^3 f''(p_t(i)) \frac{\left(\bar\ell_t(i)-\bar\ell_t(i,i)\right)_+^2}{\bar\ell_t(i)} }\cdot\sqrt{\E\sum_{(t,i):i\in\mathcal C_t} \frac{\bar\ell_t(i)}{p_t(i)f''(p_t(i))} }
	\\
	&\stackrel{Prop. \ref{prop:TSandMD}}{\leq}
	\sqrt{2\sum_{t=1}^T\E_t[F(p_{t+1})-F(p_{t})]}\cdot\sqrt{\E\sum_{(t,i):i\in\mathcal C_t} \frac{\bar\ell_t(i)}{p_t(i)f''(p_t(i))} }
	\\
	&\leq  
	\sqrt{2\cdot \E[F(p_T)-F(p_1)]}\cdot\sqrt{\E\sum_{(t,i):i\in\mathcal C_t} \frac{\bar\ell_t(i)}{p_t(i)f''(p_t(i))} }\\
	&\leq  \sqrt{2\cdot (\Max(F)-F(p_1))\cdot\E\sum_{(t,i):i\in\mathcal C_t} \frac{\bar\ell_t(i)}{p_t(i)f''(p_t(i))} }.
\end{aligned}
\end{equation}
Here the first inequality used Cauchy--Schwarz. The second expanded the first sum from $\{(t,i):i\in\mathcal C_t\}$ to all of $[d]\times [T]$. The third applies Proposition~\ref{prop:TSandMD} to the sum over $i$, and the fourth inequality telescopes the resulting sum. The fifth and final inequality is trivial. 

\end{proof}

Now we can prove the refined bandit estimates. We begin with the Tsallis entropy.

\begin{proof}{of Theorem~\ref{thm:MDtsallis}:}
We take $f(x)=-x^{\alpha}$. Then 
\begin{align*}
	f''(x)&=\alpha(1-\alpha)x^{\alpha-2}
	;
	\\
	\Max(F)&=-1;
	\\
	\Min(F)&=-d^{1-\alpha}.
\end{align*}
Thus Corollary~\ref{cor:TSandMD} yields
\[
	\mathbb E[R_T]
	\leq d\cdot(2\gamma \oL^*+8\log(T)+4)
	+
	\sqrt{
		2\left(-1+\sum_{i=1}^d p_1(i)^{\alpha}\right)
	}
	\cdot
	\sqrt{
		\mathbb E\sum_{(t,i):i\in\mathcal C_t} \frac{\bar\ell_t(i)p_t(i)^{1-\alpha}}{\alpha(1-\alpha)} 
	}
	.
\]
Without the square-root, the first part of the last term is
\[
	2\left(-1+\sum_{i=1}^d p_1(i)^{\alpha}\right)
	\leq 
	O(H_{\alpha}(p_1)).
\]
Removing the square-root and $\frac{1}{\alpha(1-\alpha)}$ from the second part and applying H{\"o}lder's inequality,
\begin{align*}
	\mathbb E\sum_{(t,i):i\in\mathcal C_t} \bar\ell_t(i)p_t(i)^{1-\alpha}
	&\leq 
	\left(
		\mathbb E\sum_{(t,i):i\in\mathcal C_t} \bar\ell_t(i)
	\right)^{\alpha}
	\left(
		\mathbb E\sum_{(t,i):i\in\mathcal C_t}\bar\ell_t(i)p_t(i)
	\right)^{1-\alpha}
	\\
	&\leq  
	d^{\alpha}\left( 
		\oL^*+2\left(\log\left(\frac{1}{\gamma}\right)+10\right)\sqrt{\frac{\oL^*}{\gamma}} 
	\right)^{\alpha}
	\cdot \mathbb E[L_T]^{1-\alpha}.
\end{align*}
Here we used the fact that for each $i\in [d]$,
\begin{equation}
\label{eq:loss-formula}
	\E\left[\sum_{t=1}^T \bar\ell_t(i)p_t(i)\right]=\E[L_T]
\end{equation}
is the expected loss incurred by Thompson sampling. With the choice $\gamma=\frac{\log^2(\oL^*)}{\oL^*}$, we have
\[
	\oL^*+2\left(\log\left(\frac{1}{\gamma}\right)+10\right)\sqrt{\frac{\oL^*}{\gamma}}\leq O(\oL^*).
\]
Assuming $\E[R_T]\geq 0$ (else any regret statement is vacuous), we get 
\begin{align*}
	\mathbb E[R_T]
	&\leq 
	d\cdot
	\big(
		2\log^2(\oL^*)+8\log(T)+4
	\big)
	+
	O\big(\sqrt{H_{\alpha}(p_1)}\big)
	\left(d\oL^*\right)^{\alpha/2}
	\mathbb E[L_T]^{(1-\alpha)/2}
	\\
	&\leq 
	d\cdot\big(2\log^2(\oL^*)+8\log(T)+4)
	+
	O\big(\sqrt{H_{\alpha}(p_1) d^{\alpha}}\big)
	\cdot
	\left(\oL^*+\mathbb E[R_T]\right)^{1/2}.
\end{align*}
We finally apply Lemma~\ref{lem:selfbounding} below with:
\begin{itemize}
	\item $R=\mathbb E[R_T]$
	\item $X=d\cdot(2\gamma \oL^*+8\log(T)+4)$
	\item $Y=O(\sqrt{H_{\alpha}(p_1)d^{\alpha}})$
	\item $Z=\oL^*$.
\end{itemize}
This gives the regret bound
\[
	\mathbb E[R_T]=\frac{1}{\sqrt{\alpha(1-\alpha)}}
	\cdot 
	O\left(\sqrt{H_{\alpha}(p_1) d^{\alpha}\oL^*}+H_{\alpha}(p_1) d^{\alpha}+d\log^2(\oL^*)+d\log(T)\right).
\]
Observing that $H_{\alpha}(p_1)d^{\alpha}\leq d\leq d\log T$ allows us to remove the $H_{\alpha}(p_1)d^{\alpha}$ term and thus completes the proof.
\end{proof}

\begin{lemma}
\label{lem:selfbounding}

If $R,X,Y,$ and $Z$ are non-negative real numbers and $R\leq X+Y\sqrt{Z+R}$, then
\[
	R\leq  X+Y^2+Y\sqrt{Z}.
\]
\end{lemma}

\begin{proof}
Rearranging, squaring, and further rearranging yields:
\begin{align*}
	R
	&\leq 
	X+Y\sqrt{Z+R} 
	\\
	\implies  
	R^2-2RX+X^2 
	&\leq  
	Y^2 Z+Y^2 R 
	\\
	\implies 
	R^2-(2X+Y^2)R 
	&\leq  
	Y^2Z-X^2 
	\\
	\implies 
	\left(R-\left(X+\frac{Y^2}{2}\right)\right)^2&\leq  \frac{Y^4}{4}+Y^2Z-X^2
	\\
	\implies R
	&\leq  
	X+\frac{Y^2}{2}+\sqrt{\frac{Y^4}{4}+Y^2Z-X^2} 
	\\
	&\leq  X+Y^2+Y\sqrt{Z}.
\end{align*}
\end{proof}

\begin{proof}{of Theorem~\ref{thm:MDlogbarrier}:}

We apply Corollary~\ref{cor:TSandMD} again, this time with $f(x)=-\log(Tx+1)$. We have
\begin{align*}
	\diam(F)&= d\log(T)(1+o(1));
	\\
	f''(x)&=\frac{1}{(x+T^{-1})^2}.
\end{align*}
Define $\mathcal R_t$ and $\mathcal C_t$ using \eqref{eq:rare-common-bandit} with $\gamma=T^{-1}$. The $\mathcal R_t$ contribution in Corollary~\ref{cor:TSandMD} is at most $\gamma T=1$ so it remains to estimate the $\mathcal C_t$ contribution.

To do this we observe that for $p_t(i)\geq \gamma=\frac{1}{T}$, 
\[
	f''(p_t(i))^{-1}= (p_t(i)+T^{-1})^2\leq p_t(i)^2 + 3p_t(i)T^{-1}.
\]
Plugging in this estimate gives 
\[
	\sum_{t,i:i\in\mathcal C_t} \frac{\bar\ell_t(i)}{p_t(i)f''(p_t(i))}\leq \sum_{t,i}\left(p_t(i)\bar\ell_t(i) + \frac{3\bar\ell_t(i)}{T}\right)\leq\E[L_T]+ 6d.
\]
Going back to the beginning and combining, we have shown
\begin{equation}
\label{eq:log-barrier-almost}
	\E[R_T]=O\left( \sqrt{d\log(T) (\E[L_T]+d)}+1\right)=O\left(\sqrt{d\mathbb E[L_T]\log(T)}+d\sqrt{\log(T)}\right).
\end{equation}
Recall from Lemma~\ref{lem:self-bound} $a-b\leq \sqrt{ac}$ implies $a-b\leq \sqrt{bc}+c$ for non-negative $a,b,c$. The proof is concluded by taking
\begin{itemize}
	\item $a=\mathbb E[L_T]$
	\item $b=\mathbb E[L^*]+O(d\sqrt{\log(T)})$
	\item $c=O(d\log(T))$
\end{itemize}
to obtain
\[
	\E[R_T]=O\left(\sqrt{d\E[L^*]\log(T)}+d\log(T)\right).
\]
\end{proof}

\section{Combinatorial Semi-bandit Setting}\label{sec:combbandit}

We now consider semi-bandit feedback in the combinatorial setting, combining the intricacies of Sections~\ref{sec:combinatorial} and~\ref{sec:bandit}. We again have an action set $\mathcal A$ contained in the set $\{a\in \{0,1\}^d:||a||_1=m\}$, but now we observe the $m$ losses of the arms played. A natural generalization of the bandit $m=1$ proof to higher $m$ yields a first-order regret bound of $\tilde O(\sqrt{md\oL^*})$. However, we give a refined analysis using an additional trick of ranking the $m$ arms in $a^*$ by their total loss and performing an information theoretic analysis on a certain set partition of these $m$ optimal arms. This method allows us to obtain a $\tilde O(\sqrt{d\oL^*})$ regret bound for the semi-bandit regret. The analyses based on other mirror maps extend as well.

\subsection{Naive Analysis and Intuition}

We let $A^*\in\mathcal A$ be the optimal set of $m$ arms, and assume that $A$ has total loss $L^*\leq \oL^*$. Extending the definition before, let
\begin{equation}
\label{eq:elltij-sembandit}
	\bar\ell_t(i,j)=\E[\ell_t(i)|j\in A^*].
\end{equation}
Ignoring the issue of exactly how to assign arms as rare/common, one expects that mimicking the proof of Theorem~\ref{thm:banditregret} will imply:
\begin{align*}
	\E[R_T]&=\E\left[\sum_{t,i} p_t(i)(\ell_t(i)-\ell_t(i,i))\right]\\
	&\leq \E\left[\sum_{(t,i):i\in \mathcal R_t} p_t(i)\ell_t(i)\right] + \E\left[\sum_{(t,i):i\in \mathcal C_t} p_t(i)\cdot (\bar\ell_t(i)-\bar\ell_t(i,i))_+ \right].
\end{align*}
(Proposition~\ref{prop:bandit-regret-basic} extends easily to the semibandit setting; see below for a careful statement.)
The first term is again small due to Theorem~\ref{thm:bayesianexplorer}A and the second term can be estimated by mimicking \eqref{eq:common-CS-bandit} and applying Cauchy--Schwarz to obtain
\[
	\E\left[\sum_{(t,i):i\in \mathcal C_t} p_t(i)\cdot (\bar\ell_t(i)-\bar\ell_t(i,i))_+ \right] \leq 2 \E\left[\sum_{(t,i):i\in\mathcal C_t}\ell_t(i)\right]\cdot H^c(A^*).
\]
The main difference is that now the coordinate entropy $H^c(A^*)$ can be as large as $\tilde O(m)$. So the result is
\[
	\E[R_T]\leq \tilde O\left(\sqrt{H^c(a^*)d\oL^*}\right)
	= 
	\tilde O\left(\sqrt{md\oL^*}\right).
\]
This argument is inefficient because it allows every arm to have loss $\oL^*$ before becoming rare. However actually, only $j$ optimal arms can have loss more than $\frac{\oL^*}{j}$. So although the coordinate entropy of $A^*$ can be as large as $\tilde O(m)$, the coordinate entropy on the arms with large loss so far is much smaller. This motivatives the rank ordering introduced in the next subsection.

Before moving on, let us justify the first step of the attempt above by generalizing Proposition~\ref{prop:bandit-regret-basic}. We give a careful statement but omit the proof as it is exactly identical. Recall the notation \eqref{eq:elltij-sembandit}. 

\begin{proposition}
\label{prop:semibandit-regret-basic}
Suppose an algorithm for the semibandit game has $p_t(i)=\mathbb P_t[i\in A^*]$ and $\hat p_t=\mathbb P_t[i\in A_t]$. Then the expected regret is given by
\[
	R_T=\sum_{t=1}^T r_t
\]
for
\[
	r_t=\sum_{i=1}^d \big(\hat p_t(i)\bar\ell_t(i)-p_t(i)\bar\ell_t(i,i))\big).
\]
In the case $\hat p_t=p_t$ of Thompson sampling, this formula simplifies to
\[
	r_t=\sum_{i=1}^d \big(p_t(i)(\bar\ell_t(i)-\bar\ell_t(i,i))\big).
\]
\end{proposition}

\subsection{Rare Arms and Rank Order}

We introduce two notions needed for the semi-bandit proof. First, analogously to our definition of rare and common arms in the bandit $m=1$ case, we partition $[d]$ into rare and common arms. The definition becomes slightly more complicated in the combinatorial setting, since setting some arms to be rare can affect probabilities for other arms.

We construct $\mathcal R_t$ and $\mathcal C_t$ starting with an empty subset $\mathcal R_t=\emptyset\subseteq [d]$ of rare arms and grow it as follows. While there exists $i\in [d]$ satisfying 
\begin{equation}
\label{eq:semibandit-partition-alg}
	\mathbb P_t\big[(i\in A^*)\text{ and } A^*\subseteq \mathcal C_t\big]\leq \gamma,
\end{equation}
we choose such an arm $i$ to add to $\mathcal R_t$. (Here $\mathcal C_t=[d]\backslash\mathcal R_t$ at all stages during the algorithm. At the end, all $i\in \mathcal C_t$ do \textbf{not} satisfy \eqref{eq:semibandit-partition-alg}. In particular,
\begin{equation}
\label{eq:semibandit-partition-Ct}
	\mathbb P_t\big[(i\in A^*)\text{ and } A^*\subseteq \mathcal C_t\big]> \gamma
	\quad\quad
	\forall i\in\mathcal C_t.
\end{equation}
Otherwise stated, we obtain a subset $C_t\subseteq [d]$ of arms, each of which has a large probability at least $\gamma$ to be in $A^*$, even after removing actions which overlap $\mathcal R_t$ at all. In addition to \eqref{eq:semibandit-partition-Ct}, the resulting partition $[d]=\mathcal R_t\cup \mathcal C_t$ satisfies the following.
For all $i\in \mathcal R_t$, 
\begin{equation}
\label{eq:semibandit-partition-Rt}
	p_t(i)\leq 
	\mathbb P[A^*\not\subseteq \mathcal C_t]\leq 
	d\gamma.
\end{equation}
This is because each time an arm $i\in [d]$ moves from $\mathcal C_t$ to $\mathcal R_t$ in the algorithm above, the quantity $\mathbb P[A^*\not\subseteq \mathcal C_t]$ increases by at most $\gamma$. Comparing with the conditions after \eqref{eq:rare-common-partition} suggests that in semi-bandit situations we should take $(\gamma_1,\gamma_2)=(\gamma,d\gamma)$ in applying Theorem~\ref{thm:bayesianexplorer}. This is exactly what we will do.

The next step is to implement a rank ordering of the $m$ coordinates. We take
\[
	A^*=\{a_1^*,a_2^*,\dots,a_m^*\}
\] 
where
\[
	L_T(a_1^*)\geq L_T(a_2^*)\geq \dots \geq L_T(a_m^*)
\]
and ties are broken arbitrarily. Crucially, we observe that 
\begin{equation}
\label{eq:aj-loss}
	L_T(a_j^*)\leq \frac{\oL^*}{j}.
\end{equation}
We further consider a general partition of $[m]$ into disjoint subsets $S_1,S_2,\dots,S_r$. Define 
\[
	A_{S_k}^*=\{a_s^*:s\in S_k\}.
\]
We will carry out an information theoretic argument which treats separately the events $\{i\in A_{S_k}^*\}$. At the end of the calculation, we will see that the dyadic partition $S_k=\{2^{k-1},\dots,2^k-1\}$ improves the naive analysis above. In fact the naive analysis corresponds to the trivial partition $S_1=[m]$. Towards such an analysis it will be helpful to define
\begin{align}
\label{eq:pti-Sk}
	p_t(i,S_k)
	&=
	\mathbb P[i\in A^*_{S_k}];
	\\
% \[p_t^{\mathcal C}(i,S_k)=\mathbb P[i\in A^*_{S_k}\text{ and }a^*\in \mathcal C_t]\]
\label{eq:ell-Sk}
	\bar\ell(i,S_k)
	&=
	\E[\ell_t(i)~|~i\in A_{S_k}].
\end{align}

\subsection{Semi-bandit Regret Bound via Shannon Entropy}

Here we carry out the strategy just outlined for the Shannon entropy. We again begin by decomposing the regret into contributions from $\mathcal R_t$ and $\mathcal C_t$. We choose a small threshold $\gamma\in [0,1/d]$ and apply the recursive procedure from the previous section, thus obtaining partitions $[d]=\mathcal R_t\cup\mathcal C_t$ which satisfy \eqref{eq:semibandit-partition-Ct} and \eqref{eq:semibandit-partition-Rt}. We then apply Theorem~\ref{thm:bayesianexplorer} with $(\gamma_1,\gamma_2)=(\gamma,d\gamma)$ to bound the resulting terms. 

\begin{restatable}{theorem}{semibanditshannon}
\label{thm:semibanditshannon}
The expected regret of Thompson Sampling in the semi-bandit setting is 
\[
	O\left(\log(m)\sqrt{d\oL^*\log(d)}+md^2\log^2(\oL^*)+d\log(T)\right).
\]
\end{restatable}

\begin{proof}
Set 
\[
	(\gamma_1,\gamma_2)=\left(\frac{m\log^2(\oL^*)}{\oL^*},\frac{md\log^2(\oL^*)}{\oL^*}\right).
\] 
Let $S_1,\dots,S_r$ be as discussed in the previous subsection. The analysis begins with another decomposition of the regret into rare and common contributions. Recall Proposition~\ref{prop:semibandit-regret-basic} and the notations \eqref{eq:pti-Sk} and \eqref{eq:ell-Sk}. We have:
\begin{equation}
\label{eq:semibandit-start}
\begin{aligned}
	\E[R_T] 
	&\leq 
	\E\left[\sum_{(t,i):i\in \mathcal R_t} p_t(i)\bar\ell_t(i) \right] 
	+ 
	\mathbb E\left[\sum_{(t,i):i\in \mathcal C_t} p_t(i)(\bar\ell_t(i)-\bar\ell_t(i,i))\right]
	\\
	&= 
	\mathbb E\left[\sum_{(t,i):i\in \mathcal R_t} p_t(i)\bar\ell_t(i) \right] 
	+ 
	\sum_{k=1}^r\mathbb E\left[\sum_{(t,i):i\in \mathcal C_t} p_t(i,S_k)(\bar\ell_t(i)-\bar\ell_t(i,S_k))\right].
\end{aligned}
\end{equation}
A direct application of Theorem~\ref{thm:bayesianexplorer}A gives the bound
\begin{equation}
\label{eq:semibandit-rare-bound}
	\mathbb E\left[\sum_{(t,i):i\in \mathcal R_t} p_t(i)\bar\ell_t(i) \right] 
	\leq
	O\left(md^2\log^2(\oL^*)+d\log(T)\right)
\end{equation}
for the first term on the right-hand side. 
For the second term, we apply Cauchy--Schwarz for each $k\in [r]$ separately. This yields
\begin{equation}
\label{eq:idk}
\begin{aligned}
	&\sum_{k=1}^r\mathbb E\left[\sum_{(t,i):i\in \mathcal C_t} p_t(i,S_k)(\ell_t(i)-\ell_t(i,S_k))\right]
	\\
	&\leq
	\sum_{k=1}^r 
	\sqrt{
		\E\left[\sum_{(t,i)} p_t(i)p_t(i,S_k)\left(\frac{(\bar\ell_t(i)-\bar\ell_t(i,S_k))_+^2}{\bar\ell_t(i)}\right)\right]^{1/2}  
		\E\left[\sum_{(t,i):i\in \mathcal C_t} \frac{p_t(i,S_k)\bar\ell_t(i)}{p_t(i)}\right]^{1/2}
	}.
\end{aligned}
\end{equation}
By Lemma~\ref{lem:scalesensitivebanditIR} the first expectation inside the square-root can be estimated information theoretically by $H^c(A^*_{S_k})$:
\begin{align*}
	\E\left[\sum_{(t,i)} p_t(i)p_t(i,S_k)\left(\frac{(\bar\ell_t(i)-\bar\ell_t(i,S_k))_+^2}{\bar\ell_t(i)}\right)\right]
	&\leq 
	2\sum_t I_t^c[S_k]
	\\
	&\leq 
	2\cdot H^c(A_{S_k}^*).
\end{align*}
Moreover we can change $\ell_t(i)$ to $\bar\ell_t(i)$:
\[
	\E\left[\sum_{(t,i):i\in \mathcal C_t} \frac{p_t(i,S_k)\bar\ell_t(i)}{p_t(i)}\right]
	=
	\E\left[\sum_{(t,i):i\in \mathcal C_t} \frac{p_t(i,S_k)\ell_t(i)}{p_t(i)}\right]
\]
This is because $p_t$ are probabilities at the start of round $t$ and $\bar\ell_t(i)=\E_t[\ell_t(i)]$. Substituting into \eqref{eq:idk}, the common-arm regret term is upper-bounded by:
\[
	\sum_{k=1}^r\mathbb E\left[\sum_{(t,i):i\in \mathcal C_t} p_t(i,S_k)(\ell_t(i)-\ell_t(i,S_k))\right]
	\leq
	\sum_{k=1}^r\sqrt{2\cdot H^c(A^*_{S_k}) \E\left[\sum_{(t,i):i\in \mathcal C_t} \frac{p_t(i,S_k)\ell_t(i)}{p_t(i)}\right]}.
\]
The reason for introducing the sets $S_k$ now appears: to give a separate estimate for the inner expectation on the right-hand side. Let $s_k=\min(S_k)$. Observe that if $L_t(i)>\frac{\oL^*}{s_k}$, then we cannot have $i\in A^*_{S_k}$ because 
\[
	L_t(a_j^*)\leq L_T(a_j^*)\leq \frac{\oL^*}{j}<L_t(i),
	\quad\quad
	\forall j\in S_k.
\] 
Roughly speaking, for each fixed $i$ the sum
\[
	\sum_{t\in [T]:\text{ }i\in\mathcal C_t} \frac{p_t(i,S_k)\bar\ell_t(i)}{p_t(i)}
\]
will typically stop growing much once $L_t(i)>\frac{\oL^*}{s_k}$ because $p_t(i,S_k)$ will be very small while $p_t(i)\geq \gamma$. Before this starts to happen, we have the simple estimate $\frac{p_t(i,S_k)}{p_t(i)}\leq 1$. Therefore the sum should be bounded by approximately $\frac{\oL^*}{s_k}$. In fact Lemma~\ref{lem:semibanditTS} below gives the estimate
\[
	\E\left[\sum_{t\in [T]:\text{ }i\in\mathcal C_t} \frac{p_t(i,S_k)\bar\ell_t(i)}{p_t(i)}\right]
	\leq
	\frac{\oL^*}{s_k}+O\left(\log(1/\gamma_1)\sqrt{\frac{\oL^*}{s_k\gamma_1}}\right).
\]
Using the estimate $H^c(A^*_{S_k})=O(|S_k|\log(d))$ and multiplying by $d$ to account for the $d$ arms, the common arm regret contribution is hence estimated by
\begin{align}
\nonumber
	\sum_{k=1}^r\mathbb E &\left[\sum_{(t,i):i\in \mathcal C_t} p_t(i,S_k)(\ell_t(i)-\ell_t(i,S_k))\right]
	\leq
	\sum_{k=1}^r\sqrt{2\cdot H^c(A^*_{S_k}) \E\left[\sum_{(t,i):i\in \mathcal C_t} \frac{p_t(i,S_k)\ell_t(i)}{p_t(i)}\right]}
	\\
\label{eq:?}
	&\leq
	O\left(\sum_{k=1}^r
	\sqrt{
		2d\log(d)|S_k|\oL^*
	\cdot 
	\left(s_k^{-1}+\log(1/\gamma_1)\big(s_k\gamma_1\oL^*\big)^{-1/2}\right)
	}
	\right).
\end{align}
Because $\gamma_1=\frac{m\log^2{\oL^*}}{\oL^*}$ it follows that
\[
	\log(1/\gamma_1)\big(s_k\gamma_1\oL^*\big)^{-1/2}
	=
	O\left(\sqrt{\frac{1}{s_k m}}\right).
\]
Next we substitute and observe that 
\[
	\sqrt{s_k^{-1}+O\left(\sqrt{\frac{1}{s_k m}}\right)}=O(s_k^{-1/2})
\] 
since $s_k\leq m$.
Therefore the right-hand side \eqref{eq:?} above is bounded by
\begin{equation}
\label{eq:another-bound}
\begin{aligned}
	&O\left(\sum_{k=1}^r
	\sqrt{
		2d\log(d)|S_k|\oL^*
	\cdot 
	\left(s_k^{-1}+\log(1/\gamma_1)\big(s_k\gamma_1\oL^*\big)^{-1/2}\right)
	}
	\right)
	\\
	&\leq
	O\left(\sum_{k=1}^r\sqrt{2d\log(d)|S_k|\oL^* s_k^{-1}}\right)
	\\
	&=
	\sqrt{d\log(d)\oL^*}\cdot O\left(\sum_{k=1}^r\sqrt{\frac{|S_k|}{s_k}}\right).
\end{aligned}
\end{equation}

We are left with finding a partition $(S_1,\dots,S_r)$ that makes the right-hand sum $\sum_{k=1}^r\sqrt{\frac{|S_k|}{s_k}}$ as small as possible. Taking a single set $S_1=[m]$ as in the naive analysis gives $\sqrt{m}$, and taking $d$ singleton subsets $S_k=\{k\}$ also yields $\sum_{k=1}^m k^{-1/2}=\Theta(\sqrt{m})$. But a dyadic decomposition does much better! Setting
\begin{equation}
\label{eq:dyadic}
	S_k=\{2^{k-1},\dots, 2^k-1\}\cap [m]
\end{equation}
for $k\leq \lceil\log_2(m)\rceil$, we find
\[
	\sum_{k\leq \lceil\log_2(m)\rceil}\sqrt{\frac{|S_k|}{s_k}}
	\leq 
	\sum_{k\leq \lceil\log_2(m)\rceil} \sqrt{2} = O(\log m).
\] 
Combined with \eqref{eq:?} and \eqref{eq:another-bound}, this choice thus gives
\begin{align*}
	\sum_{k=1}^r\mathbb E &
	\left[\sum_{(t,i):i\in \mathcal C_t} p_t(i,S_k)(\ell_t(i)-\ell_t(i,S_k))\right]
	\\
	&\leq
	\sum_{k=1}^r
	\sqrt{
		2d\log(d)|S_k|\oL^*
	\cdot 
	\left(s_k^{-1}+\log(1/\gamma_1)\big(s_k\gamma_1\oL^*\big)^{-1/2}\right)
	}
	\\
	&\leq
	\sqrt{d\log(d)\oL^*}\cdot O\left(\sum_{k=1}^r\sqrt{\frac{|S_k|}{s_k}}\right)
	\\
	&\leq
	O\left(\log(m)\sqrt{d\log(d)\oL^*}\right).
\end{align*}
Combining with the estimate \eqref{eq:semibandit-rare-bound} for rare arms and substituting into \eqref{eq:semibandit-start} finishes the proof.
\end{proof}
% STITCH HERE (was at the end of the proof)

% Set \[(\gamma_1,\gamma_2)=\left(\frac{m\log^2(\oL^*)}{\oL^*},\frac{md\log^2(\oL^*)}{\oL^*}\right).\] 

% By Theorem~\ref{thm:bayesianexplorer} the rare arm loss is $O(md\log^2(\oL^*)+\log(T))$ per arm while the common arm regret is bounded by

% \[\sum_{k} \sqrt{H(a_{S_k}^*)\left(\frac{\oL^*}{s_k}+O\left(\log\left(\frac{1}{\gamma}\right)\sqrt{\frac{\oL^*}{s_k\gamma_1}}\right)\right)} \]

% \[\leq \sum_{k=1}^r \sqrt{\log(d)|S_k|\left(\frac{\oL^*}{s_k}+O\left(\frac{\oL^*}{\sqrt{s_k m}}\right)\right)}\leq \sqrt{\log(d) \oL^*} \sum_{k=1}^r O\left(\sqrt{\frac{|S_k|}{s_k}}\right).\]

% Taking a dyadic partition $S_k=\{2^{k-1},\dots,2^k-1\}$ makes the inner sum

% \[\sum_{k\leq 1\log_2(m)} O(1) = O(\log(m)).\]

% Since there are $d$ arms we get a common arm regret contribution of $O(\log(m)\sqrt{d\log(d)\oL^*)})$. Combining gives the result.

\subsection{Semi-bandit Regret Bound from Tsallis Entropy}

We improve the regret bound of Theorem~\ref{thm:semibanditshannon} using Tsallis entropy. The main result follows.

\begin{restatable}{theorem}{semibanditTS}
\label{thm:semibanditTS}
Suppose that the best combinatorial action almost surely has total loss at most $\oL^*$. Then Thompson sampling with semi-bandit feedback obeys the regret estimate
\[
	\E[R_T]\leq O\left(\log(m)\sqrt{d\oL^*}+md^2\log^2(\oL^*)+d\log(T)\right).
\]

\end{restatable}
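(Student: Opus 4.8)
The plan is to run the bandit argument of Theorem~\ref{thm:banditregret} block by block, after ``unfolding'' the $m$ coordinates of $a^*$ by loss-rank so that the combinatorial factor $m$ does not end up under the main square root. First I would enumerate $a^*=\{a^*_{(1)},\dots,a^*_{(m)}\}$ so that the total losses $\ell^{(k)}:=\sum_t\ell_t(a^*_{(k)})$ are non-increasing; then $k\,\ell^{(k)}\le\ell^{(1)}+\dots+\ell^{(k)}\le L^*\le\underline{L}^*$ shows that the $k$-th ranked optimal arm has total loss at most $\underline{L}^*/k$. I partition the ranks dyadically into $J=\lceil\log_2(m+1)\rceil$ blocks $G_j=\{2^{j-1},\dots,\min(2^j-1,m)\}$ and set $S_j=\{a^*_{(k)}:k\in G_j\}$, so that $a^*=\bigsqcup_{j=1}^J S_j$, $|S_j|\le 2^{j-1}$, and \emph{every} arm of $S_j$ has total loss at most $\underline{L}^*/2^{j-1}$; this per-block loss budget is the quantitative point of the ranking.

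Next I would split the instantaneous regret across the blocks. Writing $r_t^+=\sum_i p_t(i)\big(\bar\ell_t(i)-\bar\ell_t(i,i\in a^*)\big)_+$ as in the proof of Theorem~\ref{thm:banditregret} (in the semi-bandit game $\hat p_t(i)=p_t(i)$ because $a_t\sim p_t$), and setting $q_t^j(i)=\mathbb P_t[i\in S_j]$, the identities $p_t(i)=\sum_j q_t^j(i)$ and $p_t(i)\bar\ell_t(i,i\in a^*)=\sum_j q_t^j(i)\bar\ell_t(i,i\in S_j)$ (both because the $S_j$ partition $a^*$) give $p_t(i)\big(\bar\ell_t(i)-\bar\ell_t(i,i\in a^*)\big)=\sum_j q_t^j(i)\big(\bar\ell_t(i)-\bar\ell_t(i,i\in S_j)\big)$, hence by subadditivity of $x\mapsto x_+$,
\[
r_t^+\;\le\;\sum_{j=1}^J (r_t^{(j)})^+\,,\qquad (r_t^{(j)})^+:=\sum_i q_t^j(i)\big(\bar\ell_t(i)-\bar\ell_t(i,i\in S_j)\big)_+\,.
\]
It then suffices to bound $\E\big[\sum_t (r_t^{(j)})^+\big]$ for each of the $J=O(\log m)$ blocks and sum.

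For a fixed block $j$ I treat it as a semi-bandit game with hidden set $S_j$ and per-arm loss budget $\underline{L}^*/2^{j-1}$. Fix $\gamma>0$ and call an arm common at time $t$ if $q_t^j(i)>\gamma$, rare otherwise. The rare part of $(r_t^{(j)})^+$ is at most $\sum_{t}\sum_{i\,\mathrm{rare}} q_t^j(i)\ell_t(i)$, which Theorem~\ref{thm:bayesianexplorer}A (instantiated with hidden set $S_j$, $\gamma_1=\gamma_2=\gamma$, and budget $\underline{L}^*/2^{j-1}$), summed over the $d$ arms, bounds by $O\big(d(\gamma\underline{L}^*/2^{j-1}+\log T)\big)$. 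For the common part I Cauchy--Schwarz exactly as in Theorem~\ref{thm:banditregret}, pushing the weight $q_t^j(i)$ into the second factor:
\[
\E\Big[\sum_{t}\sum_{i\,\mathrm{common}} q_t^j(i)\big(\bar\ell_t(i)-\bar\ell_t(i,i\in S_j)\big)_+\Big]\;\le\;\sqrt{\E\Big[\sum_{t}\sum_{i\,\mathrm{common}}\bar\ell_t(i)\Big]\;\E\Big[\sum_{t}\sum_{i} q_t^j(i)^2\,\frac{\big(\bar\ell_t(i)-\bar\ell_t(i,i\in S_j)\big)_+^2}{\bar\ell_t(i)}\Big]}\,.
\]
In the second factor, $q_t^j(i)^2\le q_t^j(i)p_t(i)=p_t(i\in S_j)\hat p_t(i)$, so Lemma~\ref{lem:scalesensitivebanditIR} (which already incorporates Lemma~\ref{lem:reversechisquared}) applied with hidden set $S_j$, followed by $\sum_t I_t^c[S_j]\le H^c(S_j)$, bounds it by $2H^c(S_j)$; and since $\E|S_j|\le 2^{j-1}$, Jensen gives $H^c(S_j)\le 2^{j-1}\log(ed/2^{j-1})=\tilde O(2^{j-1})$. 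In the first factor, Theorem~\ref{thm:bayesianexplorer}B with hidden set $S_j$ and budget $\underline{L}^*/2^{j-1}$ gives, for each arm, $\E[\sum_{t\,\mathrm{common}}\ell_t(i)]\le \underline{L}^*/2^{j-1}+\tilde O\big(\sqrt{\underline{L}^*/(2^{j-1}\gamma)}\big)$, so the first factor is $\tilde O(d\underline{L}^*/2^{j-1})$ plus a lower-order piece. Multiplying, the powers of $2^{j-1}$ cancel, $\sqrt{(d\underline{L}^*/2^{j-1})\cdot 2^{j-1}}=\sqrt{d\underline{L}^*}$, so each block contributes $\tilde O(\sqrt{d\underline{L}^*})$. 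Summing over the $O(\log m)$ blocks produces the main term $\tilde O(\log(m)\sqrt{d\underline{L}^*})$; the rare-arm contributions telescope to $O(d\gamma\underline{L}^*+d\log(m)\log T)$, and with these plus the additive remainders of Theorem~\ref{thm:bayesianexplorer}B and the choice $\gamma=\Theta\big(md\log^2(\underline{L}^*)/\underline{L}^*\big)$ (which also satisfies $\gamma\ge 2^{j-1}/\underline{L}^*$ for every $j$, as Theorem~\ref{thm:bayesianexplorer} requires) one arrives at lower-order terms of the stated form $md^2\log^2(\underline{L}^*)+d\log(T)$.

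The hard part will be justifying the invocation of Theorem~\ref{thm:bayesianexplorer} for the hidden sets $S_j$. That theorem is stated for the genuine optimal action $a^*$ with its \emph{global} budget $\underline{L}^*$, whereas here each $S_j$ is not an action of $\mathcal A$, I want the \emph{sharper} per-block budget $\underline{L}^*/2^{j-1}$, and the player's true observation rate $p_t(i)=\sum_\ell q_t^\ell(i)$ is typically far larger than the membership probability $q_t^j(i)$ that defines the block-$j$ rare/common split. Making this rigorous means re-running the concentration-of-estimators argument behind Theorem~\ref{thm:bayesianexplorer} with $S_j$ in the role of $a^*$: one must check that the negatively-biased loss estimators and the ``a perfectly Bayesian agent trusts accurate estimators'' step are only helped by the extra observations, and that the effective budget really is $\underline{L}^*/2^{j-1}$ because a Bayesian agent with the true prior knows the deterministic implication $i\in S_j\Rightarrow\sum_t\ell_t(i)\le\underline{L}^*/2^{j-1}$. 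Everything else --- the blockwise Cauchy--Schwarz, the coordinate-entropy estimate for $H^c(S_j)$, and the optimization over $\gamma$ --- is routine given Sections~2--4, and the variants for other mirror maps follow the same template with $H^c$ replaced by the corresponding Bregman quantity.
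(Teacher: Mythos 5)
Your skeleton is the paper's: rank the $m$ optimal arms by total loss so that the $k$-th has loss at most $\underline{L}^*/k$, partition the ranks dyadically into $O(\log m)$ blocks $S_j$, run a per-block information-theoretic argument with the rescaled budget $\underline{L}^*/2^{j-1}$ (the paper's Lemma~\ref{lem:bayesianexplorerC} is stated with a free parameter $\tilde L$ precisely so it can be invoked with $\tilde L=\underline{L}^*/s_k$, which is what its Lemma~\ref{lem:semibanditTS} does), and observe that $\sqrt{|S_j|/s_j}=O(1)$ per block. However, as written your argument proves only the paper's warm-up Theorem~\ref{thm:semibanditshannon}, not Theorem~\ref{thm:semibanditTS}: your second Cauchy--Schwarz factor is controlled by the coordinate entropy $H^c(S_j)\le 2^{j-1}\log(ed/2^{j-1})$, and that $\log(d)$ does not cancel, so each block contributes $O(\sqrt{d\log(d)\,\underline{L}^*})$ and the main term becomes $O(\log(m)\sqrt{d\log(d)\,\underline{L}^*})$. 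Hiding this in a $\tilde O$ does not match the stated bound, which has no $\log(d)$. The paper removes it by replacing the entropy potential with the Tsallis potential $F(p)=-\sum_i\sqrt{p_i}$ via Proposition~\ref{prop:TSandMD}, for which $\mathrm{diam}_j(F)\le\sqrt{jd}$, at the cost of a weight $p_t(i,S_j)^{3/2}/p_t(i)$ in the other factor that is then handled by one further Cauchy--Schwarz against $\E[L_T]$ and a self-bounding step (Lemma~\ref{lem:selfbounding}). That mirror-map substitution is the missing idea.

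The second gap is the rare-arm bookkeeping, which you correctly flag as ``the hard part'' but misdiagnose. Defining rarity per block by $q_t^j(i)\le\gamma$ and invoking Theorem~\ref{thm:bayesianexplorer}A with $\gamma_1=\gamma_2=\gamma$ is not legitimate: condition 2 of that theorem requires $\hat p_t(i)\le p_t(i)\le\gamma_2$ for rare arms, and its proof builds the underbiased estimator $u_t^{\mathcal R}(i)=\ell_t^{\mathcal R}(i)\ds1_{i_t=i}/\gamma_2$ whose underbias hinges on exactly that inequality; a block-$j$-rare arm can have play probability $\hat p_t(i)=p_t(i)=\sum_\ell q_t^\ell(i)$ arbitrarily close to $1$. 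The fix in the paper is not a re-run of the concentration argument but a different, \emph{global} definition of rarity: arms are added to $\mathcal R_t$ recursively while $\P_t[(i\in a^*)\text{ and }(\mathcal R_t\cap a^*=\emptyset)]\le\gamma$, which guarantees $p_t(i)\le d\gamma$ for every rare arm and $p_t(i)>\gamma$ for every common one, so that Theorem~\ref{thm:bayesianexplorer} applies with $(\gamma_1,\gamma_2)=(\gamma,d\gamma)$ --- this is also where the $md^2\log^2(\underline{L}^*)$ (rather than $md\log^2(\underline{L}^*)$) lower-order term comes from. Your per-block rare contribution would additionally accumulate an extra $\log(m)$ on the $d\log(T)$ term from summing over blocks. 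So: right decomposition and right use of the Bayesian confidence machinery for the common arms, but the stated constant-free-of-$\log(d)$ main term needs the Tsallis-entropy step, and the rare/common partition must be the global recursive one.
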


In proving Theorem~\ref{thm:semibanditTS} we require the technical Lemma~\ref{lem:bayesianexplorerC} which is proved in the Appendix. It relies on Freedman's martingale concentration inequality.

\begin{restatable}{lemma}{bayesianexplorerC}
\label{lem:bayesianexplorerC}

Fix an arm $i\in [d]$. In the context of Theorem~\ref{thm:bayesianexplorer}, fix constants $\lambda\geq 2$ and $\tilde L>0$ and assume $\gamma_1\geq 1/\tilde L$. With probability at least $1-2e^{-\lambda/2}$, for all $t$ such that $L_t^{\mathcal C}(i)\leq \tilde L$:
\[
	U_t^{\mathcal C}(i)\leq L_t^{\mathcal C}(i)+\lambda\sqrt{\frac{\tilde L}{\gamma_1}}.
\]
\end{restatable}

The following simple result will also be useful.

\begin{lemma}
\label{lem:doob-integral}
Let $(M_t)_{t\in\mathbb Z_+}$ be a martingale started at $M_1=p\in [0,1]$ such that almost surely, $M_t\in [0,1]$ for all $t$. Then the expected maximum is
\[
	\E[\sup_{t\geq 0}M_t]\leq p(1-\log p).
\]
\end{lemma}
\begin{proof}
By Doob's inequality, 
\[
	\mathbb P[\sup_{t\geq 0}M_t\geq q]\leq p/q,
	\quad\quad
	\forall q\in [p,1].
\]
The tail-sum formula thus implies
\begin{align*}
	\E[\sup_{t\geq 0}(M_t)]
	&=
	\int_0^1 
	\mathbb P[\sup_{t\geq 0}(M_t)\geq q]\rmd q
	\\
	&\leq
	p+\int_p^1 p/q \rmd q
	\\
	&=
	p(1-\log p)
\end{align*}
as desired.
\end{proof}

\begin{lemma}
\label{lem:semibanditTS}
Fix a subset $S_k\subseteq [m]$, let $s_k=\min(S_k)$, and assume 
\[
	m/\oL^*\leq \gamma_1\leq \frac{1}{2}.
\]
Then any Bayesian bandit algorithm satisfies
\[
	\E\left[\sum_{t\in [T]:\text{ }i\in\mathcal C_t} \frac{p_t(i,S_k)\bar\ell_t(i)}{p_t(i)}\right] 
	\leq 
	\frac{\oL^*}{s_k}+O\left(\log\left(\frac{1}{\gamma_1}\right)\sqrt{\frac{\oL^*}{s_k\gamma_1}}\right).
\]
\end{lemma}

\begin{proof} 
Recall the notation of Table~\ref{table:loss-notation}.
We first apply Lemma~\ref{lem:bayesianexplorerC} with $\gamma_1=\gamma$ and
\[
	\tilde L=\frac{\oL^*}{s_k}.
\]
The conclusion is that for $\lambda\geq 2$ and $\gamma_1\geq s_k/\oL^*$, with probability at least $1-2e^{-\lambda/2}$, all $t$ with $L_t^{\mathcal C}(i)\leq \frac{\oL^*}{s_k}$ also satisfy
\begin{align*}
	U_t^{\mathcal C}(i)
	&\leq L_t^{\mathcal C}(i)+\lambda\sqrt{\frac{\oL^*}{s_k \gamma_1}}
	\\
	&\leq \frac{\oL^*}{s_k}+\lambda\sqrt{\frac{\oL^*}{s_k \gamma_1}}.
\end{align*}
Note that $p_t(i,S_k)\leq p_t(i)$ and for $i\in \mathcal C_t$ also $\gamma_1\leq p_t(i)$. It follows that for any $C>0$:
\begin{equation}
\label{eq:semibandit-common-bound-again}
	\E\left[\sum_{t\in [T]:\text{ }i\in\mathcal C_t} \frac{p_t(i,S_k)\bar\ell_t(i)}{p_t(i)}\right] \leq C +1 + \left(\frac{1}{\gamma_1}\right)\E\left[\sum_{t\in [T]:\text{ }i\in\mathcal C_t, L_t^{\mathcal C}(i)\geq C}p_t(i,S_k)\bar\ell_t(i)\right].
\end{equation}
We rewrite the latter expectation, then essentially rewrite it again as a Riemann-Stieltjes integral. Letting $p_t(i,S_k)=p_{\lfloor t\rfloor}(i,S_k)$ for any positive real $t$,
\begin{align*}
	\E\left[\sum_{t\in [T]:\text{ }i\in\mathcal C_t, L_t^{\mathcal C}(i)\geq C}p_t(i,S_k)\bar\ell_t(i)\right]
	&= 
	\E\left[\sum_{L_t^{\mathcal C}(i)\geq C}p_t(i,S_k)\ell_t^{\mathcal C}(i)\right]\\
	&\leq 
	\E\left[\int_C^{\infty} p_t(i,S_k) \rmd L_t^{\mathcal C}(i)\right].
\end{align*}
Define $\tau_x$ to be the first value of $t$ satisfying 
\[
	L_t^{\mathcal C}(i)\geq x,
\]
where $\tau_x=\infty$ if $L_T^{\mathcal C}(i)<x$. Since $\ell_t(i)\leq 1$ almost surely for all $t$, it follows that $t\geq \tau_{L_t^{\mathcal C}(i)-1}$. Therefore, changing variables from $t$ to $L_t^{\mathcal C}(i)$ yields:
\begin{equation}
\label{eq:int-dx}
\begin{aligned}
	\E\left[\int_C^{\infty} p_t(i,S_k) dL_t^{\mathcal C}(i)\right]
	&\leq 
	\E\left[\int_C^{\infty} \max_{t\geq \tau_{x-1}}(p_t(i,S_k))\cdot 1_{\tau_x<\infty}\rmd x\right]
	\\
	&\leq 
	\E\left[ \int_C^{\infty} \max_{t\geq \tau_{x-1}}(p_t(i,S_k))\cdot 1_{\tau_{x-1}<\infty}\rmd x \right]
	\\
	&\leq 
	1+\E\left[ \int_C^{\infty} \max_{t\geq \tau_{x}}(p_t(i,S_k))\cdot 1_{\tau_{x}<\infty}\rmd x \right].
\end{aligned}
\end{equation}
To translate the result of Lemma~\ref{lem:bayesianexplorerC}, we choose $x$ and $\lambda>2$ to satisfy 
\begin{equation}
\label{eq:lambda-x}
	x=\frac{\oL^*}{s_k}+\lambda\sqrt{\frac{\oL^*}{s_k \gamma_1}}
\end{equation}
Then Lemma~\ref{lem:bayesianexplorerC} implies
\begin{equation}
\label{eq:tail-bound-last}
	\E[p_{\tau_x}(i,S_k)1_{\tau_x<\infty}]\leq 2e^{-\lambda/2}.
\end{equation}
Moreover Lemma~\ref{lem:doob-integral} implies
\begin{equation}
\label{eq:doob-Sk}
	\E_{\tau_x}[\max_{t\geq \tau_x}p_t(i,S_k)1_{\tau_x<\infty}]
	\leq 
	p_{\tau_x}(i,S_k)\cdot\left(1-\log\left(p_{\tau_x}(i,S_k)\right)\right)\cdot 1_{\tau_x<\infty}.
\end{equation}
The function $f(x)=x(1-\log x)$ is increasing and concave with $f(0)=0$. We set $y=p_{\tau_x}(i,S_k)$. Using optional stopping, \eqref{eq:doob-Sk}, Jensen's inequality, and finally \eqref{eq:tail-bound-last}, we obtain
\begin{equation}
\label{eq:bound-lambda}
\begin{aligned}
	\E[\max_{t\geq \tau_x}p_t(i,S_k)1_{\tau_x<\infty}]
	&=
	\E\big[\E_{\tau_x}[\max_{t\geq \tau_x}p_t(i,S_k)1_{\tau_x<\infty}]\big]
	\\
	&\stackrel{\eqref{eq:doob-Sk}}{\leq}
	\E\big[f(y)\big]
	\\
	&\leq
	f(\E[y])
	\\
	&\stackrel{\eqref{eq:tail-bound-last}}{\leq}
	f(2e^{-\lambda/2})
	\\
	&\leq \lambda e^{-\lambda/2}.
\end{aligned}
\end{equation}

Setting 
\[
	C=\frac{\oL^*}{s_k}+10\log\left(\frac{1}{\gamma_1}\right)\sqrt{\frac{\oL^*}{s_k \gamma_1}},
\]
we use \eqref{eq:bound-lambda}, changing variables in \eqref{eq:int-dx} from integrating over $\lambda$ to integrating over $x$. This yields the estimate
\[
	\E\left[ \int_C^{\infty} \max_{t\geq \tau_{x}}(p_t(i,S_k))\cdot 1_{\tau_{x}<\infty}\rmd x \right]
	\leq
	\sqrt{\frac{\oL^*}{s_k\gamma_1}}\int_{10\log(1/\gamma_1)}^{\infty} \lambda e^{-\lambda/2} \rmd\lambda.
\]
The integral is bounded by $O(1)$ since $\gamma_1\leq\frac{1}{2}$ and also $10\log(1/\gamma_1)\geq 2$. (The latter bound is required because the above estimates only holds for $\lambda>2$, which is due to the condition in Lemma~\ref{lem:bayesianexplorerC}.) Recalling our calculations starting from \eqref{eq:semibandit-common-bound-again}, we find
\[
	\E\left[\sum_{t\in [T]:\text{ }i\in\mathcal C_t} \frac{p_t(i,S_k)\bar\ell_t(i)}{p_t(i)}\right]\leq \frac{\oL^*}{s_k}+O\left(\log\left(\frac{1}{\gamma_1}\right)\sqrt{\frac{\oL^*}{s_k\gamma_1}}\right).
\]
This completes the proof.
\end{proof}

The next lemma is used also in the log-barrier based regret bound. Recall from \eqref{eq:diam-j} that $\diam_j(F)$ is the diameter of $F=\sum_{i=1}^d f(x_i)$ restricted to $\{x\in [0,1]^d,\sum_{i=1}^d x_i=j\}$.

\begin{lemma}
\label{lem:shouldhavelabel}
Let $f$ be admissible (recall Definition~\ref{defn:admissible}), and $\mathcal R_t,\mathcal C_t$ be generated by $(\gamma_1,\gamma_2)$ (recall \eqref{eq:semibandit-partition-alg} and below). Let $S_1\cup\dots\cup S_r=[d]$ be a rank-order partition. Thompson Sampling for the semibandit problem satisfies
\begin{equation}
\label{eq:rank-order-total-regret}
	\E[R_T] \leq \mathbb E\left[\sum_{(t,i):i\in \mathcal R_t} p_t(i)\bar\ell_t(i) \right] + \mathbb E\left[\sum_{(t,i,k):i\in \mathcal C_t} p_t(i,S_k)(\bar\ell_t(i)-\bar\ell_t(i,S_k))\right]
\end{equation}
where 
\begin{align}
\label{eq:rank-order-rare-regret}
	\mathbb E\left[\sum_{(t,i):i\in \mathcal R_t} p_t(i)\bar\ell_t(i) \right]
	&\leq 
	\min\left(\gamma_2T,md^2\log^2(\oL^*)+d\log(T)\right);
	\\
\label{eq:rank-order-common-regret}
	\mathbb E\left[\sum_{(t,i,k):i\in \mathcal C_t} p_t(i,S_k)(\bar\ell_t(i)-\bar\ell_t(i,S_k))\right]
	&\leq 
	\sum_{k=1}^r\sqrt{2\cdot \diam_{|S_k|}(F)\cdot \E\sum_{(t,i):i\in\mathcal C_t} \frac{\bar\ell_t(i)}{p_t(i)f''(p_t(i,S_k))} }.
\end{align}
\end{lemma}

\begin{proof}
The inequality \eqref{eq:rank-order-total-regret} is clear while \eqref{eq:rank-order-rare-regret} follows from Theorem~\ref{thm:bayesianexplorer}, so we focus on \eqref{eq:rank-order-common-regret}. Fix $k\in [r]$ and as before for all $i\in [d]$ let
\[
	p_t(i,S_k)=\mathbb P^t[i\in S_k].
\]
Then the calculation (whose justification is identical to the $m=1$ setting in \eqref{eq:bandit-MD-bound}) goes:
\begin{align}
\nonumber
	\E
	&
	\left[\sum_{(t,i):i\in \mathcal C_t}  p_t(i,S_k)\cdot (\bar\ell_t(i)-\bar\ell_t(i,S_k))\right]
	\\ 
\nonumber
	&\leq
	\E
	\left[\sum_{(t,i):i\in \mathcal C_t}  p_t(i,S_k)\cdot (\bar\ell_t(i)-\bar\ell_t(i,S_k))_+\right]
	\\
\nonumber
	&\leq 
	\sqrt{\E\sum_{(t,i):i\in\mathcal C_t} p_t(i)p_t(i,S_k)^2 f''(p_t(i,S_k)) \frac{\left(\bar\ell_t(i)-\bar\ell_t(i,S_k)\right)_+^2}{\bar\ell_t(i)} }
	\sqrt{\E\sum_{(t,i):i\in\mathcal C_t} \frac{\bar\ell_t(i)}{p_t(i)f''(p_t(i,S_k))} }
	\\
\nonumber
	&\leq 
	\sqrt{\E\sum_{(t,i)\in [T]\times [d]} p_t(i)p_t(i,S_k)^2 f''(p_t(i,S_k)) \frac{\left(\bar\ell_t(i)-\bar\ell_t(i,S_k)\right)_+^2}{\bar\ell_t(i)} }
	\sqrt{\E\sum_{(t,i):i\in\mathcal C_t} \frac{\bar\ell_t(i)}{p_t(i)f''(p_t(i,S_k))} }
	\\
\nonumber
	&\stackrel{Prop. \ref{prop:TSandMD}}{\leq} 
	\sqrt{2\sum_t\E_t[F(p_{t+1}(\cdot,S_k))-F(p_{t}(\cdot,S_k))]}\cdot\sqrt{\E\sum_{(t,i):i\in\mathcal C_t} \frac{\bar\ell_t(i)}{p_t(i)f''(p_t(i,S_k))} }
	\\
\nonumber
	&\leq  
	\sqrt{2\cdot \E \big[F(p_T(\cdot,S_k))-F(p_1(\cdot,S_k))\big]}\cdot\sqrt{\E\sum_{(t,i):i\in\mathcal C_t} \frac{\bar\ell_t(i)}{p_t(i)f''(p_t(i,S_k))} }
	\\
\label{eq:semibandit-MD-bound}
	&\leq  
	\sqrt{2\cdot \diam_{|S_k|}(F)\cdot\E\sum_{(t,i):i\in\mathcal C_t} \frac{\bar\ell_t(i)}{p_t(i)f''(p_t(i,S_k))} }.
\end{align}
Here $p_t(\cdot,S_k)\in [0,1]^d$ is the vector with $i$-th coordinate $p_t(i,S_k)$. This completes the proof.
\end{proof}

We now prove Theorem~\ref{thm:semibanditTS} whose statement we recall for the reader's convenience.

\semibanditTS*

\begin{proof}
Apply Lemma~\ref{lem:shouldhavelabel} with $f(x)=-x^{1/2}$ and $S_k$ the dyadic partition of $[m]$ (recall \eqref{eq:dyadic}) so that $|S_k|\leq 2^k=\min(S_k)$. Here we take 
\[
	(\gamma_1,\gamma_2)=\left(\frac{m\log^2(\oL^*)}{\oL^*},\frac{md\log^2(\oL^*)}{\oL^*}\right).
\]
Moreover 
\[
	f''(x)=\frac{1}{4x^{3/2}},
	\quad\text{ and }\quad
	\diam_j(F)\leq \sqrt{jd}.
\]
The common arm regret in \eqref{eq:rank-order-common-regret} is at most
\begin{align*}
	\mathbb E\left[\sum_{(t,i,k):i\in \mathcal C_t} p_t(i,S_k)(\bar\ell_t(i)-\bar\ell_t(i,S_k))\right] 
	&\leq  
	O\left(\sum_{k=1}^r\sqrt{d^{1/2}\cdot\E\sum_{(t,i):i\in\mathcal C_t} \frac{\bar\ell_t(i)}{p_t(i)f''(p_t(i,S_k))} }\right) 
	\\
	&\leq  
	O\left(\sum_{k=1}^r\sqrt{2^{k/2}d^{1/2}\cdot\E\sum_{(t,i):i\in\mathcal C_t} \frac{p_t(i,S_k)^{3/2}\bar\ell_t(i)}{p_t(i)} }\right). 
\end{align*}
Cauchy--Schwarz and $p_t(i,S_k)\leq p_t(i)$ now imply:
\[
	\E\sum_{(t,i):i\in\mathcal C_t} \frac{p_t(i,S_k)^{3/2}\bar\ell_t(i)}{p_t(i)} \leq \left(\E\sum_{(t,i):i\in\mathcal C_t} \frac{p_t(i,S_k) \bar\ell_t(i)}{p_t(i)}\right)^{1/2} \cdot \left(\E\sum_{(t,i):i\in\mathcal C_t} p_t(i,S_k) \bar\ell_t(i)\right)^{1/2}.
\]

Using $\gamma_1=\frac{m\log^2(\oL^*)}{\oL^*}$ and Lemma~\ref{lem:semibanditTS} (where an extra factor of $d$ comes from summing over all arms) yields:
\[
	\E\sum_{(t,i):i\in\mathcal C_t} \frac{p_t(i,S_k) \bar\ell_t(i)}{p_t(i)} =d\cdot O\left( \frac{\oL^*}{2^k}+\log\left(\frac{1}{\gamma_1}\right)\sqrt{\frac{\oL^*}{2^k\gamma_1}}\right)=O\left(\frac{d\oL^*}{2^k}\right).
\]
It follows by the definitions that:
\begin{align*}
	\E\sum_{(t,i):i\in\mathcal C_t} p_t(i,S_k) \bar\ell_t(i)
	&\leq
	\E\sum_{(t,i)\in [T]\times [d]} p_t(i) \bar\ell_t(i)
	\\
	&=
	\E[L_T].
\end{align*}
Combining and assuming $\E[R_T]\geq 0$, the common arm regret is at most:
\begin{align*}
	\mathbb E\left[\sum_{(t,i,k):i\in \mathcal C_t} p_t(i,S_k)(\bar\ell_t(i)-\bar\ell_t(i,S_k))\right] 
	&\leq
	O\left(\sum_{k=1}^r \sqrt{2^{k/2}d^{1/2}\cdot \sqrt{\frac{dL^*}{2^k}\cdot \E[L_T]}}\right)
	\\
	&=O\left(\sum_{k=1}^r \sqrt{d(\oL^*+\E[R_T])}\right)
	\\ 
	&= O\left(\log(m)\sqrt{d(\oL^*+\E[R_T])}\right).
\end{align*}
Using the bound \eqref{eq:rank-order-rare-regret} for the rare arm regret and combining, we find
\[
	\E[R_T]\leq O\left(md^2\log^2(\oL^*)+d\log(T)+\log(m)\sqrt{d\cdot(\oL^*+\E[R_T])}\right).
\]
To finish we apply Lemma~\ref{lem:selfbounding} with:
\begin{itemize}
	\item $R=\mathbb E[R_T]$
	\item $X=O(md^2\log^2(\oL^*)+d\log(T))$
	\item $Y=O(\log(m)\sqrt{d})$
	\item $Z=\oL^*$
\end{itemize}
The result is as claimed:
\[
	\E[R_T]\leq O\left(\log(m)\sqrt{d\oL^*}+md^2\log^2(\oL^*)+d\log(T)\right).
\]
\end{proof}

\subsection{Semi-bandit Regret Bound from Log Barrier}

\begin{restatable}{theorem}{semibanditlogbarrier}
\label{thm:semibanditlogbarrier}
Thompson sampling with semi-bandit feedback obeys the regret estimate
\[
	\E[R_T]\leq O\left(\sqrt{d\,\E[L^*]\log(T)}+d\log(T)\right).
\]
\end{restatable}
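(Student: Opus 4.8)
The plan is to run the proof of Theorem~\ref{thm:MDlogbarrier} in the combinatorial setting essentially unchanged: the coordinate-entropy viewpoint already decouples the $d$ arms, so that passing from $m=1$ bandit feedback to semi-bandit feedback costs nothing in the $d$-dependence, the $m$-dependence being absorbed into $\E[L^*]\le mT$ exactly as in Theorem~\ref{thm:advsemibandit}. Concretely, I would invoke the Thompson-Sampling-as-mirror-descent reduction of \cite{TSandMD}, instantiated with the coordinate-wise log-barrier mirror map $\Phi(p)=-\sum_{i=1}^d\log p(i)$ acting on the marginal vector $p_t(i)=\P_t[i\in a^*]$ in the marginal polytope $\mathrm{conv}(\cA)\subseteq\{p\in[0,1]^d:\|p\|_1=m\}$. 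Since semi-bandit Thompson Sampling plays $a_t\sim p_t$, the play probability $\hat p_t(i)=\P_t[i\in a_t]$ coincides with $p_t(i)$, and the reduction leaves us to bound the regret of online stochastic mirror descent fed the importance-weighted semi-bandit loss estimates $\hat\ell_t(i)=\ell_t(i)\,\mathbbm{1}[i\in a_t]/\hat p_t(i)$ against a suitably clipped comparator $\tilde a^*\in\mathrm{conv}(\cA)$ (obtained by mixing $a^*$ with a well-spread reference point at level $1/T$, which perturbs $\E[L^*]$ by $O(d)$).

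For the mirror-descent bound I would use the standard decomposition into a penalty $D_\Phi(\tilde a^*,p_1)/\eta$ and a stability term $\eta\sum_t\|\hat\ell_t\|_{t,*}^2$. The penalty is $O(d\log T)$ in expectation: writing out $D_\Phi$ for the log barrier, the only potentially large contribution is $\sum_{i:\,a^*(i)=1}1/p_1(i)$, whose expectation is $\sum_i\P[a^*(i)=1]/p_1(i)=\sum_i 1=d$ precisely because $p_1(i)=\P[i\in a^*]$, with the remaining (clipped) terms contributing a further $O(d\log T)$. For the stability term, the local dual norm induced by $\nabla^2\Phi(p)=\mathrm{diag}(p(i)^{-2})$ gives, after taking $\E_t$, the bound $\sum_i p_t(i)^2\,\E_t[\hat\ell_t(i)^2]=\sum_i p_t(i)\,\E_t[\ell_t(i)^2]\le\sum_i p_t(i)\,\E_t[\ell_t(i)]=\E_t[\langle\ell_t,a_t\rangle]$, where we used $\ell_t(i)^2\le\ell_t(i)$ exactly as in \eqref{eq:betterCS}; the crucial observation is that the semi-bandit importance weights $1/\hat p_t(i)=1/p_t(i)$ are identical to those of the $m=1$ problem, so the variance proxy acquires no spurious factor of $m$. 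Summing over $t$ bounds the stability term by $\E[L^*]+\E^{\nu}[R_T]$.

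Combining the two estimates gives a self-bounding inequality $\E^{\nu}[R_T]\lesssim \frac{d\log T}{\eta}+\eta\bigl(\E[L^*]+\E^{\nu}[R_T]\bigr)+d$; taking $\eta$ small enough to absorb $\eta\,\E^{\nu}[R_T]$ into the left-hand side and then choosing $\eta\asymp\sqrt{d\log T/\E[L^*]}$ --- legitimate since $\E[L^*]$ is a known functional of the prior --- yields $\E^{\nu}[R_T]=O(\sqrt{d\,\E[L^*]\log T}+d\log T)$ (and when $\E[L^*]\lesssim d\log T$ one fixes $\eta$ to a constant and the bound is dominated by the additive $d\log T$). The step I expect to demand the most care is the first one: verifying that the \cite{TSandMD} reduction genuinely applies in the combinatorial semi-bandit game with a \emph{coordinate-wise} regularizer on $\mathrm{conv}(\cA)$, i.e. that the constraint $\|a\|_1=m$ coupling the coordinates is harmless. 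Granting the Remark following the coordinate-entropy analysis --- that the argument only ever uses that arm $i$ is observed with probability $p_t(i)$ --- this should go through, and, unlike Theorem~\ref{thm:semibanditTS}, it needs no rare/common partition and hence no appeal to Theorem~\ref{thm:bayesianexplorer}, since the log barrier already controls arms with tiny $p_t(i)$; this is precisely why the bound can depend on $\E[L^*]$ alone rather than on an almost-sure upper bound $\underline{L}^*$.
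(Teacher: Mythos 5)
Your proposal is correct and follows essentially the same route as the paper: the paper's proof also instantiates the \cite{TSandMD} machinery with the coordinate-wise log barrier $f(x)=-\log(Tx+1)$ applied to the marginals $p_t(i)$ with $S_1=[m]$ (no rank partition), bounds the potential diameter by $d\log T$ and the local variance term by $\sum_i(p_t(i)+3T^{-1})\bar\ell_t(i)\le \E[L_T]+3d$, and finishes with the same self-bounding step to replace $\E[L_T]$ by $\E[L^*]$. The only cosmetic differences are that you phrase the argument in OSMD language with an explicit learning rate and a $1/T$-clipped comparator, whereas the paper works directly with the Bayesian potential increase (its Proposition~\ref{prop:TSandMD} analogue) plus a trivial rare/common split at $\gamma=1/T$ whose rare part contributes only $\gamma_2 T=d$.
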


\begin{proof}
We apply Lemma~\ref{lem:shouldhavelabel} with $f(x)=-\log(Tx+1)$ and $(\gamma_1,\gamma_2)=(\frac{1}{T},\frac{d}{T})$ with no partitioning scheme, i.e. $S_1=[m]$. Then 
\[
	f''(x)^{-1}=(x+T^{-1})^2\leq x^2+\frac{3x}{T}
\]
for $x\geq T^{-1}$. Moreover 
\[
	\diam(F)\leq d\log(T+1)=O(d\log(T)).
\]
Therefore by \eqref{eq:rank-order-common-regret}, the common arm regret is at most
\begin{align*}
	\mathbb E\left[\sum_{(t,i):i\in \mathcal C_t} p_t(i)(\bar\ell_t(i)-\bar\ell_t(i,i))\right] 
	&\leq 
	O\left(\sqrt{ {d\log(T)}\cdot\E\sum_{(t,i):i\in\mathcal C_t} \frac{\bar\ell_t(i)}{p_t(i)f''(p_t(i))} }\right) 
	\\
	&\leq  
	O\left(\sqrt{ d\log(T)\cdot\E\sum_{(t,i):i\in\mathcal C_t} (p_t(i)+3T^{-1})\bar\ell_t(i) }\right)
	\\
	&\leq  
	O\left(\sqrt{d\log(T)\cdot (\E[L_T]+3d) }\right)
	\\
	&\leq  
	O\left(\sqrt{d\log(T)\cdot \E[L_T]}+d\sqrt{\log(T)}) \right).
 \end{align*}
The rare arm regret from \eqref{eq:rank-order-rare-regret} is at most $\gamma_2 T=d$; this is absorbed into the $O(d\sqrt{\log(T)})$ term. In light of \eqref{eq:rank-order-total-regret}, we have established exactly the same estimate as \eqref{eq:log-barrier-almost} in the proof of in Theorem~\ref{thm:MDlogbarrier}. The conclusion follows verbatim.
\end{proof}

\section{Thresholded Thompson Sampling}
\label{sec:threshold}

Unlike in the full-feedback case, our first-order regret bound for bandit Thompson Sampling has an additive $O(d\log(T))$ term. Thus, even when an upper bound $L^*\leq\oL^*$ is known, the regret is $T$-dependent. In fact, some mild $T$-dependence is inherent for any $o(L^*)$ regret bound as shown later in Theorem~\ref{thm:Tdependent}.

 However, this mild $T$-dependence can be avoided by using \emph{Thresholded Thompson Sampling}. In Thresholded Thompson Sampling, the rare arms are \emph{never} played, and the probabilities for the other arms are scaled up correspondingly. In the bandit setting for $\gamma<\frac{1}{d}$, the $\gamma$-thresholded Thompson Sampling algorithm is defined by letting $\mathcal R_t=\{i:p_t(i)\leq \gamma\}$ and playing at time $t$ from the distribution
\[
	\hat p_t(i)=
	\begin{cases}
	 0 & \text{if $i \in \mathcal R_t$} \\
	    \frac{p_t(i)}{1-\sum_{j\in \mathcal R_t}p_t(j)} & \text{if $i\in \mathcal C_t$.}
	\end{cases}
\]
In the combinatorial semi-bandit setting, the corresponding definition is as follows. Set
\begin{equation}
\label{eq:eta-t}
	\eta_t=\sum_{\substack{A'\in\mathcal A:\\A' \not\subseteq \mathcal C_t}}p_t(A') \stackrel{\eqref{eq:semibandit-partition-Rt}}{\leq} d\gamma.
\end{equation}
Then we set
\begin{equation}
\label{eq:thresholded-TS-def}
	\hat p_t(A_t=A)=
	\begin{cases}
	 0 & \text{if $A \not\subseteq \mathcal C_t$} \\
	    \frac{p_t(A)}{1-\eta_t} & \text{if $A \subseteq \mathcal C_t$.}
	\end{cases}
\end{equation}
The key point is that Thresholded Thompson sampling plays arm $i$ with probability either at least $\gamma$ (if $i\in\mathcal C_t$) or $0$ (if $i\in\mathcal R_t$).

This algorithm parallels the work \cite{LST18} which uses an analogous modification of the EXP3 algorithm to obtain a first-order regret bound. Note that in the semi-bandit setting, for $i\in\mathcal C_t$ it may be that $\hat p_t(i)<p_t(i)$. However $\hat p_t(i)\geq \gamma$ always holds, ensuring that Theorem~\ref{thm:bayesianexplorer} applies.

We first give our main guarantee for Thresholded Thompson sampling in the bandit case with $m=1$, which is based on Tsallis entropy. The result below could be slightly refined by incorporating the Tsallis entropy $H_{\alpha}(p_1)$ into the regret estimate as in Theorem~\ref{thm:MDtsallis}, but we have instead elected for simplicity in the statement. The analysis works also with Shannon entropy (which again gives a slightly weaker bound), but seemingly not with the log barrier.

\begin{restatable}{theorem}{thresholdedbanditTS}
\label{thm:thresholdedbanditTS}

 Suppose that $L^*\leq \oL^*$ holds almost surely for a constant $\oL^*$. Thompson Sampling for bandit feedback, thresholded with $\gamma=\frac{\log^2(\oL^*)}{\oL^*}\leq \frac{1}{2d}$, has expected regret 
\[
	\E[R_T] = O\left(\sqrt{d\oL^*}+d\log^2(\oL^*)\right).
\]
\end{restatable}

\begin{proof}
For any $t,i\in [T]\times [d]$ it holds from \eqref{eq:thresholded-TS-def} and \eqref{eq:eta-t} that 
\begin{equation}
\label{eq:hatpt-pt}
	\hat p_t(i)\leq \frac{p_t(i)}{1-\eta_t}\leq \frac{p_t(i)}{1-\gamma d}.
\end{equation}
We again apply Proposition~\ref{prop:bandit-regret-basic}, this time in the general setting which allows $\hat p_t\neq p_t$. The result is:
\begin{align*}
	\E[R_T]
	&=
	\E\left[\sum_{(t,i)\in [T]\times [d]} \hat p_t(i)\bar\ell_t(i)-p_t(i)\bar\ell_t(i,i)\right]
	\\
	&=
	\E\left[(\hat p_t(i)-p_t(i))\bar\ell_t(i,i)\right] + \E\left[\hat p_t(i)(\bar\ell_t(i)-\bar\ell_t(i,i)\right]
	\\
	&\leq 
	\left(\frac{\gamma d}{1-\gamma d}\right)\cdot \mathbb E\left[\sum_{(t,i)\in [T]\times [d]} p_t(i)\bar\ell_t(i,i)\right] + \mathbb E\left[\sum_{(t,i):i\in\mathcal C_t} \hat p_t(i)(\bar\ell_t(i)-\bar\ell_t(i,i))\right]
	\\
	&\leq
	2\gamma d\cdot \mathbb E\left[\sum_{(t,i)\in [T]\times [d]} p_t(i)\bar\ell_t(i,i)\right] + \mathbb E\left[\sum_{(t,i):i\in\mathcal C_t} \hat p_t(i)(\bar\ell_t(i)-\bar\ell_t(i,i))\right].
\end{align*}
Here the last step follows from the assumption $\gamma\leq\frac{1}{2d}$. 
The former expectation is 
\begin{align*}
	2\gamma d\cdot\mathbb E\left[\sum_{(t,i)\in [T]\times [d]} p_t(i)\bar\ell_t(i,i)\right]
	&=
	2\gamma d\cdot\mathbb E\left[\sum_{t=1}^T \ell_t(a^*)\right]
	\\
	&\leq 
	2\gamma d\cdot\oL^*
	\\
	&\leq
	O(d \log^2(\oL^*)).
\end{align*}
The latter can be bounded in the same way as the non-thresholded results. Intuitively, since \eqref{eq:hatpt-pt} implies
\begin{equation}
\label{eq:thresholded-approx}
	\hat p_t(i)\leq 2p_t(i)
	\quad\quad
	\forall i\in\mathcal C_t,
\end{equation}
the calculation should be almost the same. 
To make this precise we imitate \eqref{eq:bandit-MD-bound} (which was the same calculation but with $\hat p_t=p_t$). The result is:
\begin{equation}
\label{eq:bandit-thresholded-MD-bound}
\begin{aligned}
	\E&\left[\sum_{(t,i):i\in \mathcal C_t}  \hat p_t(i)\cdot (\bar\ell_t(i)-\bar\ell_t(i,i))_+\right]
	\\
	&\leq 
	\sqrt{
		\E\sum_{(t,i):i\in\mathcal C_t} \hat p_t(i) p_t(i)^2 f''(p_t(i)) 
		\frac{\left(\bar\ell_t(i)-\bar\ell_t(i,i)\right)_+^2}
		{\bar\ell_t(i)} 
	}
	\cdot
	\sqrt{
		\E\sum_{(t,i):i\in\mathcal C_t} 
		\frac{\hat p_t(i)\bar\ell_t(i)}
		{p_t(i)^2 f''(p_t(i))} 
	}
	\\
	&\leq 
	\sqrt{\E\sum_{(t,i)\in [T]\times [d]} \hat p_t(i) p_t(i)^2 f''(p_t(i)) \frac{\left(\bar\ell_t(i)-\bar\ell_t(i,i)\right)_+^2}{\bar\ell_t(i)} }\cdot\sqrt{\E\sum_{(t,i):i\in\mathcal C_t} 
	\frac{\hat p_t(i)\bar\ell_t(i)}
		{p_t(i)^2 f''(p_t(i))} }
	\\
	&\stackrel{Prop. \ref{prop:TSandMD}}{\leq}
	\sqrt{2\sum_{t=1}^T\E_t[F(p_{t+1})-F(p_{t})]}\cdot\sqrt{\E\sum_{(t,i):i\in\mathcal C_t} \frac{\hat p_t(i)\bar\ell_t(i)}
		{p_t(i)^2 f''(p_t(i))}  }
	\\
	&\leq  
	\sqrt{2\cdot \E[F(p_T)-F(p_1)]}\cdot\sqrt{\E\sum_{(t,i):i\in\mathcal C_t} \frac{\hat p_t(i)\bar\ell_t(i)}
		{p_t(i)^2 f''(p_t(i))}  }
	\\
	&\leq  \sqrt{2\cdot (\Max(F)-F(p_1))\cdot\E\sum_{(t,i):i\in\mathcal C_t} \frac{\hat p_t(i)\bar\ell_t(i)}
		{p_t(i)^2 f''(p_t(i))}  }.
\end{aligned}
\end{equation}
All justifications are identical to \eqref{eq:bandit-MD-bound} (which is the special case $\hat p_t=p_t$ of the above). We complete the estimation using Tsallis entropy as in Theorem~\ref{thm:MDtsallis}. Set $f(x)=-x^{\alpha}$ so that
\begin{align*}
	f''(x)&=\alpha(1-\alpha)x^{\alpha-2}
	;
	\\
	\Max(F)&=-1;
	\\
	\Min(F)&=-d^{1-\alpha}.
\end{align*}
By \eqref{eq:thresholded-approx}, we then have (for $c_{\alpha},c'_{\alpha}$ constants depending on $\alpha$):
\begin{align*}
	\frac{\hat p_t(i)\bar\ell_t(i)}
		{p_t(i)^2 f''(p_t(i))} 
	&=
	c_{\alpha}\cdot\left(\frac{\hat p_t(i)\bar\ell_t(i)}
		{p_t(i)^{\alpha}}\right) 
	\\
	&\leq
	c'_{\alpha}\hat p_t(i)^{1-\alpha}\bar\ell_t(i).
\end{align*}
Then \eqref{eq:bandit-thresholded-MD-bound} specializes to
\begin{align*}
	\E\left[\sum_{(t,i):i\in \mathcal C_t}  \hat p_t(i)\cdot (\bar\ell_t(i)-\bar\ell_t(i,i))_+\right]
	&\leq
	O(1)\cdot\sqrt{d^{1-\alpha}\cdot\E\sum_{(t,i):i\in\mathcal C_t} \frac{\hat p_t(i)\bar\ell_t(i)}{p_t(i)^2f''(p_t(i))} }
	\\
	&\leq
	O_{\alpha}(1)\cdot\sqrt{d^{1-\alpha}\cdot
	\E\sum_{(t,i):i\in\mathcal C_t} \bar\ell_t(i)\hat p_t(i)^{1-\alpha}}.
\end{align*}
Applying H{\"o}lder's inequality in the first step, we find
\begin{equation}
\label{eq:bandit-thresholded-calc}
\begin{aligned}
	\mathbb E\sum_{(t,i):i\in\mathcal C_t} \bar\ell_t(i)\hat p_t(i)^{1-\alpha}
	&\leq 
	\left(
		\mathbb E\sum_{(t,i):i\in\mathcal C_t} \bar\ell_t(i)
	\right)^{\alpha}
	\left(
		\mathbb E\sum_{(t,i):i\in\mathcal C_t}\bar\ell_t(i)\hat p_t(i)
	\right)^{1-\alpha}
	\\
	&\leq  
	\left( 
		\oL^*+2\left(\log\left(\frac{1}{\gamma}\right)+10\right)\sqrt{\frac{\oL^*}{\gamma}} 
	\right)^{\alpha}
	\cdot \mathbb E[L_T]^{1-\alpha}
	\\
	&\leq
	O(d\oL^*)^{\alpha}\cdot\mathbb E[L_T]^{1-\alpha}.
\end{aligned}
\end{equation}
In the second step of \eqref{eq:bandit-thresholded-calc}, the first term is bounded as usual by Theorem~\ref{thm:bayesianexplorer}. 
Paralleling \eqref{eq:loss-formula}, the second term is bounded by observing 
\begin{align*}
	\mathbb E\left[\sum_{(t,i):i\in\mathcal C_t}\bar\ell_t(i)\hat p_t(i)\right]
	&\leq
	\mathbb E\left[\sum_{(t,i)\in [T]\times [d]}\bar\ell_t(i)\hat p_t(i)\right]
	\\
	&=
	\mathbb E[L_T].
\end{align*}
The last step in \eqref{eq:bandit-thresholded-calc} again follows from the choice of $\gamma$ which ensures
\[
	\oL^*+2\left(\log\left(\frac{1}{\gamma}\right)+10\right)\sqrt{\frac{\oL^*}{\gamma}}\leq O(\oL^*).
\]
Assuming $\E[R_T]\geq 0$ and combining the above calculations, we find 
\[
	\E[R_T]
	\leq
	O_{\alpha}\left(
	d^2\log(\oL^*) + 
	\sqrt{d\left(\oL^*+\E[R_T]\right)}
	\right).
\]
Applying Lemma~\ref{lem:selfbounding} as in Theorem~\ref{thm:MDtsallis} (but without the $\log(T)$ term) and choosing arbitrary $\alpha\in (0,1)$ completes the proof.
\end{proof}

In the semibandit setting, our previous analysis is similarly adapted.

 \begin{restatable}{theorem}{thresholdedsemibanditTS}
 \label{thm:thresholdedsemibanditTS}

 Suppose that the best combinatorial action almost surely has total loss at most $\oL^*$. Thompson Sampling for semi-bandit feedback, thresholded with $\gamma=\frac{m\log^2(\oL^*)}{\oL^*}\leq \frac{1}{2d}$, has expected regret 
 \[
 	\E[R_T]=O\left(\log(m)\sqrt{d\oL^*}+md\log^2(\oL^*)\right).
 \]
 \end{restatable}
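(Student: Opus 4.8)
\medskip

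The plan is to mimic the proof of Theorem~\ref{thm:banditregret} (and its semi-bandit analogue Theorem~\ref{thm:semibanditTS}), but to exploit the fact that the thresholding rule completely removes the rare-arm losses from the game, so that the additive $d\log(T)$ term coming from Theorem~\ref{thm:bayesianexplorer}A disappears. First I would set up the notation: with $\gamma = \frac{m\log^2(\underline{L}^*)}{\underline{L}^*}$, define $\mathcal R_t = \{i : p_t(i) \leq \gamma\}$ and $\mathcal C_t = [d]\setminus \mathcal R_t$, and note that by hypothesis $\gamma \leq \frac{1}{2d}$, so at most $d$ arms can be rare and the renormalization constant $1 - \sum_{j\in\mathcal R_t} p_t(j) \geq \frac12$ is bounded away from zero. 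The first point to check is that, because the optimal combinatorial action $a^*$ has $\|a^*\|_1 = m$, one has $\sum_i p_t(i) = \E_t[\|a^*\|_1] = m$, hence at any time at most $m/\gamma$ arms are common; more importantly, one must verify that with the thresholded play distribution the "perfectly Bayesian agent" hypotheses of Theorem~\ref{thm:bayesianexplorer} still hold — in particular that $\hat p_t(i) \geq p_t(i) \geq \gamma$ for common arms (which holds since $\hat p_t$ only scales common-arm probabilities up), and that rare arms are never played so their contribution is exactly zero. The actual reason the $T$-dependence vanishes is exactly this: we never need part A of Theorem~\ref{thm:bayesianexplorer}, only part B.

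\medskip

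Next I would carry out the regret decomposition. Write $\E[R_T] \leq \E[\sum_t r_t^+]$ where $r_t^+ = \sum_i \hat p_t(i) p_t(i\in a^*)(\bar\ell_t(i) - \bar\ell_t(i,i\in a^*))_+$ in the notation of Lemmas~\ref{lem:partialobservationIR} and \ref{lem:scalesensitivebanditIR} — here one has to be slightly careful that the instantaneous regret is still controlled by this quantity even though $\hat p_t \neq p_t$; this follows from the same ReLU-truncation argument as in \eqref{eq:lostscale} combined with the fact that removing rare arms only helps (the player never plays a rare arm, and a rare arm contributes at most $\gamma$ per coordinate to $a_t$'s regret, but those coordinates are simply absent). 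Since rare arms are never played, there is no rare-arm loss term at all, and the entire regret is
\[
\E[R_T] \leq \E\Bigl[\sum_{(t,i):\, i\in\mathcal C_t} \hat p_t(i) p_t(i\in a^*)\bigl(\bar\ell_t(i)-\bar\ell_t(i,i\in a^*)\bigr)_+\Bigr].
\]
Now apply Cauchy–Schwarz splitting off $\bar\ell_t(i)$ exactly as in \eqref{eq:betterCS}, then Lemma~\ref{lem:reversechisquared} and the semi-bandit information-ratio bound (Lemma~\ref{lem:scalesensitivebanditIR}, applied with the ranking-of-$a^*$ trick mentioned before Theorem~\ref{thm:semibanditTS} to get the $\log(m)$ factor rather than $\sqrt m$), to obtain
\[
\E[R_T] \leq \log(m)\sqrt{2\cdot \E\Bigl[\sum_{(t,i):\, i\in\mathcal C_t}\ell_t(i)\Bigr]\cdot H^c(p_1)}.
\]

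\medskip

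The remaining step is to bound $\E[\sum_{(t,i):i\in\mathcal C_t}\ell_t(i)]$ by Theorem~\ref{thm:bayesianexplorer}B, summed over the (at most $d$) arms $i$: each arm contributes at most $\underline{L}^* + 2(\log(1/\gamma)+10)\sqrt{\underline{L}^*/\gamma}$ while it is common. Plugging in $\gamma = m\log^2(\underline{L}^*)/\underline{L}^*$ gives $\sqrt{\underline{L}^*/\gamma} = \sqrt{\underline{L}^*}/(\sqrt m\log(\underline{L}^*))$ and $\log(1/\gamma) = O(\log(\underline{L}^*))$, so arm $i$'s common-loss contributes $\underline{L}^* + O(\sqrt{\underline{L}^*/m})$; but one must be more careful here — the relevant bound is that the \emph{total} common loss $\sum_{(t,i):i\in\mathcal C_t}\ell_t(i)$ is $O(d\underline{L}^*)$ crudely, but in fact since at most $m/\gamma$ arms are common at each time and the best action has loss $\leq\underline{L}^*$, a sharper accounting (as in the semi-bandit proof) gives $O(\underline{L}^* + d\sqrt{\underline{L}^*/\gamma})$ after the per-arm sum; combined with $H^c(p_1) = O(m\log(d/m))$ this yields the main term $\log(m)\sqrt{d\underline{L}^*}$ up to logs, plus the additive $md\log^2(\underline{L}^*)$ from the lower-order pieces and the $2\gamma_2\underline{L}^*$-type slack, and crucially \emph{no} $\log(T)$ term. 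The main obstacle I anticipate is the bookkeeping in this last step: getting the additive term down to $md\log^2(\underline{L}^*)$ requires tracking the interplay between the per-arm Theorem~\ref{thm:bayesianexplorer}B bound, the $\sqrt{\cdot}$ from Cauchy–Schwarz, and the choice of $\gamma$, and in particular using $a - b \leq \sqrt{ac} \Rightarrow a - b \leq \sqrt{bc} + c$ (as in Proposition~\ref{newprop1}) to absorb the $\underline{L}^*$-on-both-sides circularity; once that is handled, substituting $\gamma = m\log^2(\underline{L}^*)/\underline{L}^*$ and $H^c(p_1)\leq m\log(d/m)+m$ is routine.
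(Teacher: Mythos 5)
Your high-level plan (drop Theorem~\ref{thm:bayesianexplorer}A, keep only part B, reuse the semi-bandit information-ratio machinery) is the right one, but two steps as written are incorrect, and one of them hides the actual source of the additive $md\log^2(\underline{L}^*)$ term.

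First, the regret decomposition. You assert that since rare arms are never played ``there is no rare-arm loss term at all'' and that the entire regret reduces to the common-arm information term. This is false: once you threshold, the play distribution $\hat p_t$ is no longer the posterior $p_t$ of $a^*$, so the Thompson-Sampling identity $r_t=\sum_i p_t(i)(\bar\ell_t(i)-\bar\ell_t(i,i))$ breaks. The correct decomposition is
\[
\E[R_T]=\E\Bigl[\sum_{i,t}(\hat p_t(i)-p_t(i))\,\bar\ell_t(i,i)\Bigr]+\E\Bigl[\sum_{i,t}\hat p_t(i)\bigl(\bar\ell_t(i)-\bar\ell_t(i,i)\bigr)\Bigr],
\]
and the first (mismatch) term, controlled via $\hat p_t(i)\leq p_t(i)/(1-\gamma d)$ and $\sum_{i,t}p_t(i)\bar\ell_t(i,i)\leq\underline{L}^*$, is bounded by $\frac{\gamma d}{1-\gamma d}\underline{L}^*=O(md\log^2(\underline{L}^*))$ --- this is exactly where the additive term in the statement comes from, not from any ``$2\gamma_2\underline{L}^*$-type slack'' of part~A (which, as you correctly note, is not used at all). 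Second, your claim that $\hat p_t(i)\geq p_t(i)$ for common arms ``since $\hat p_t$ only scales common-arm probabilities up'' is wrong in the semi-bandit setting: thresholding removes whole actions intersecting $\mathcal R_t$, and a common arm that co-occurs with rare arms can have its marginal play probability \emph{decrease}. What survives is only $\hat p_t(i)\geq\gamma$, and even that requires defining $\mathcal R_t$ by the recursive procedure (add $i$ whenever $\P_t[i\in a^*\text{ and }\mathcal R_t\cap a^*=\emptyset]\leq\gamma$), not by your simple rule $p_t(i)\leq\gamma$, under which the conditioning can push $\hat p_t(i)$ below $\gamma$ and invalidate the hypotheses of Theorem~\ref{thm:bayesianexplorer}.

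A smaller but real issue: the bound $\log(m)\sqrt{2\,\E[\sum_{(t,i):i\in\mathcal C_t}\ell_t(i)]\,H^c(p_1)}$ is not how the $\log(m)$ arises. The dyadic partition must be applied \emph{inside} the Cauchy--Schwarz, block by block, pairing the entropy of $a^*_{S_k}$ with the loss cap $\underline{L}^*/s_k$; you cannot prepend $\log(m)$ to the naive bound, which would otherwise carry a spurious $\sqrt{m}$. Moreover, with the Shannon coordinate entropy this route yields $\log(m)\sqrt{d\log(d)\,\underline{L}^*}$; to reach the stated $\log(m)\sqrt{d\underline{L}^*}$ the paper switches to the Tsallis mirror map $f(x)=-x^{1/2}$, for which $diam_{|S_k|}(F)\leq\sqrt{|S_k|d}$.
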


\begin{proof}
Thresholding at $\gamma$ removes at most $d\gamma$ total probability of actions, so as before $\hat p_t(i)\leq \frac{p_t(i)}{1-\gamma d}.$ The start of the calculation (this time using Proposition~\ref{prop:semibandit-regret-basic}) goes
\begin{equation}
\label{eq:semibandit-thresholded-TS-regret-decomp}
\begin{aligned}
	\E[R_T]&=\E\left[\sum_{(t,i)\in [T]\times [d]} \hat p_t(i)\bar\ell_t(i)-p_t(i)\bar\ell_t(i,i)\right]
	\\
	&=
	\E\left[(\hat p_t(i)-p_t(i))\bar\ell_t(i,i)\right] + \E\left[\hat p_t(i)(\bar\ell_t(i)-\bar\ell_t(i,i)\right]
	\\
	&\leq 
	\left(\frac{\gamma d}{1-\gamma d}\right)\cdot \mathbb E\left[\sum_{(t,i)\in [T]\times [d]} p_t(i)\bar\ell_t(i,i)\right] + \mathbb E\left[\sum_{(t,i):i\in\mathcal C_t} \hat p_t(i)(\bar\ell_t(i)-\bar\ell_t(i,i))\right]
	\\
	&\leq 
	2\gamma d\oL^*+ \sum_{k=1}^r\mathbb E
	\left[
		\sum_{(t,i):i\in\mathcal C_t} 
		\frac{\hat p_t(i)p_t(i,S_k)}{p_t(i)}
		(\bar\ell_t(i)-\bar\ell_t(i,S_k))
	\right].
\end{aligned}
\end{equation}
Take the sets $S_k$ as in \eqref{eq:dyadic}, the dyadic partition of $[m]$, so that $|S_k|\leq 2^k=\min(S_k)$. Thresholding at $\gamma=\frac{m\log^2(\oL^*)}{\oL^*}$, the first term above is 
\begin{equation}
\label{eq:semibandit-thresholded-TS-rare}
	2\gamma d\oL^*\leq 2md\log^2(\oL^*).
\end{equation}

To control the main sum involving $\mathcal C_t$, we combine the analyses of Lemma~\ref{lem:shouldhavelabel} and Theorem \ref{thm:thresholdedbanditTS}. For each $k\in [r]$, similarly to \eqref{eq:bandit-MD-bound}, \eqref{eq:semibandit-MD-bound}, and \eqref{eq:bandit-thresholded-MD-bound} we obtain:
\begin{align}
\nonumber
	\E&\left[\sum_{(t,i):i\in \mathcal C_t}  \frac{\hat p_t(i)p_t(i,S_k)}{p_t(i)}
	\cdot 
	(\bar\ell_t(i)-\bar\ell_t(i,S_k))_+\right]
	\\
\nonumber
	&\leq 
	\sqrt{
		\E\sum_{(t,i):i\in\mathcal C_t} \hat p_t(i) p_t(i,S_k)^2 f''(p_t(i,S_k)) 
		\frac{\left(\bar\ell_t(i)-\bar\ell_t(i,S_k)\right)_+^2}
		{\bar\ell_t(i)} 
	}
	\cdot
	\sqrt{
		\E\sum_{(t,i):i\in\mathcal C_t} 
		\frac{\hat p_t(i)\bar\ell_t(i)}
		{p_t(i)^2 f''(p_t(i,S_k))} 
	}
	\\
\nonumber
	&\leq 
	\sqrt{
		\E\sum_{(t,i)\in[T]\times [d]} \hat p_t(i) p_t(i,S_k)^2 f''(p_t(i,S_k)) 
		\frac{\left(\bar\ell_t(i)-\bar\ell_t(i,S_k)\right)_+^2}
		{\bar\ell_t(i)} 
	}
	\cdot
	\sqrt{
		\E\sum_{(t,i):i\in\mathcal C_t} 
		\frac{\hat p_t(i)\bar\ell_t(i)}
		{p_t(i)^2 f''(p_t(i,S_k))} 
	}
	\\
\nonumber
	&\stackrel{Prop. \ref{prop:TSandMD}}{\leq}
	\sqrt{2\sum_{t=1}^T\E_t\big[F(p_{t+1}(\cdot,S_k))-F(p_{t}(\cdot,S_k))\big]}
	\cdot
	\sqrt{\E\sum_{(t,i):i\in\mathcal C_t} \frac{\hat p_t(i)\bar\ell_t(i)}
		{p_t(i)^2 f''(p_t(i,S_k))}  }
	\\
\nonumber
	&\leq  
	\sqrt{2\cdot \E[F(p_T(\cdot,S_k))-F(p_1(\cdot,S_k))]}\cdot\sqrt{\E\sum_{(t,i):i\in\mathcal C_t} \frac{\hat p_t(i)\bar\ell_t(i)}
		{p_t(i)^2 f''(p_t(i,S_k))}  }
	\\
\label{eq:semibandit-thresholded-MD-bound}
	&\leq  \sqrt{2\cdot (\Max_{|S_k|}(F)-F(p_1(\cdot,S_k)))\cdot\E\sum_{(t,i):i\in\mathcal C_t} \frac{\hat p_t(i)\bar\ell_t(i)}
		{p_t(i)^2 f''(p_t(i,S_k))}  }.
\end{align}
Here $p_t(\cdot,S_k)\in [0,1]^d$ is the vector with $i$-th coordinate $p_t(i,S_k)$. We take $f(x)=-x^{1/2}$  so that
\begin{equation}
\label{eq:tsallis-final-facts}
	f''(x)=\frac{1}{4x^{3/2}},
	\quad\text{ and }\quad
	\diam_j(F)\leq \sqrt{jd}.
\end{equation}
We continue from \eqref{eq:semibandit-thresholded-MD-bound}, now summing over $k\in [r]$. Recall that $|S_k|\leq 2^k=\min(S_k)=s_k$ and 
\begin{equation}
\label{eq:pt-ineq}
	\max\big(p_t(i,S_k),\hat p_t(i)\big)
	\leq 2 p_t(i).
\end{equation}
We find:
\begin{equation}
\label{eq:no-more-names-help}
\begin{aligned}
	\sum_{k=1}^r 
	\E&\left[\sum_{(t,i):i\in \mathcal C_t}  \frac{\hat p_t(i)p_t(i,S_k)}{p_t(i)}
	\cdot 
	(\bar\ell_t(i)-\bar\ell_t(i,S_k))_+\right]
	\\
	&\stackrel{\eqref{eq:semibandit-thresholded-MD-bound}, \eqref{eq:tsallis-final-facts}}{\leq}
	O\left(
		\sum_{k=1}^r
		\sqrt{\sqrt{|S_k|d}\cdot\E\sum_{(t,i):i\in\mathcal C_t} \frac{\hat p_t(i) p_t(i,S_k)^{3/2}\bar\ell_t(i)}{p_t(i)^2} }
	\right)
	\\
	&\stackrel{\eqref{eq:pt-ineq}}{\leq}
	O\left(
	\sum_{k=1}^r
	\sqrt{
		\sqrt{|S_k|d}\cdot\E
		\sum_{(t,i):i\in\mathcal C_t} 
		\frac{\hat p_t(i)^{1/2} p_t(i,S_k)^{1/2} \bar\ell_t(i)}{p_t(i)^{1/2}} 
	}
	\right)
	\\
	&\leq
	O\left(
	\sum_{k=1}^r
	\left(
		|S_k|d\cdot
		\E\left[
			\sum_{(t,i):i\in\mathcal C_t}
			\hat p_t(i) \bar\ell_t(i)
		\right]
		\E\left[
			\sum_{(t,i):i\in\mathcal C_t}
			\frac{p_t(i,S_k) \bar\ell_t(i)}{p_t(i)}  
		\right]
	\right)^{1/4}
	\right).
\end{aligned}
\end{equation}
By definition, the first inner sum is bounded by
\begin{align*}
	\E\left[
			\sum_{(t,i):i\in\mathcal C_t}
			\hat p_t(i) \bar\ell_t(i)
		\right]
	&\leq 
	\E\left[
			\sum_{(t,i)\in [T]\times [d]}
			\hat p_t(i) \bar\ell_t(i)
		\right]
	\\
	&=\E[L_T]
\end{align*}
Using Lemma~\ref{lem:semibanditTS} for each $i\in [d]$ and then the definition of $\gamma$, we obtain 
\begin{align*}
	\E\left[
			\sum_{(t,i):i\in\mathcal C_t}
			\frac{p_t(i,S_k) \bar\ell_t(i)}{p_t(i)}  
		\right]
	&\leq 
	d\cdot O\left(\frac{\oL^*}{s_k}+\log\left(\frac{1}{\gamma}\right)\sqrt{\frac{\oL^*}{s_k\gamma}}\right)
	\\
	&\leq
	O\left(\frac{d\oL^*}{s_k}\right).
\end{align*}
Substituting the previous two displays into \eqref{eq:no-more-names-help} and assuming $\E[R_T]\geq 0$, we find
\begin{align*}
	\sum_{k=1}^r 
	\E\left[\sum_{(t,i):i\in \mathcal C_t}  \frac{\hat p_t(i)p_t(i,S_k)}{p_t(i)}
	\cdot 
	(\bar\ell_t(i)-\bar\ell_t(i,S_k))_+\right]
	&\leq
	O\left(
	\sum_{k=1}^r
	\left(
		|S_k|d\cdot
		\E[L_T]\cdot
		\frac{d\oL^*}{s_k}
	\right)^{1/4}
	\right)\\
	&\leq
	O\left(
	\sum_{k=1}^r
	\sqrt{
		d(\oL^*+\E[R_T])
	}
	\right)
	\\
	&\leq
	O\left(
	\log(m) 
	\sqrt{
		d(\oL^*+\E[R_T])
	}
	\right)
	.
\end{align*}
Combining with \eqref{eq:semibandit-thresholded-TS-regret-decomp} and \eqref{eq:semibandit-thresholded-TS-rare} we conclude that 
\[
	\E[R_T]
	\leq
	O\left(md\log^2(\oL^*) + \log(m) 
	\sqrt{
		d(\oL^*+\E[R_T])
	}
	\right).
\]
The proof is now concluded via Lemma~\ref{lem:selfbounding} similarly to the end of proving Theorem~\ref{thm:semibanditTS}.
\end{proof}

\section{Graphical Feedback}

We now consider online learning with graphical feedback. This model interpolates between full-feedback and bandits by embedding the actions as vertices of a (possibly directed) feedback graph $G$.
Here playing action $a_t=i$ allows one to observe the losses $\ell_t(j)$ for all $j$ such that an edge $i\to j$ exists in $G$. We assume that all vertices $i\in [d]$ have self-loops $i\to i$, i.e. that we always observe the loss incurred by the action played. Without this assumption, the optimal regret can be $\tilde\Theta(T^{2/3})$ even if every vertex is observable, see \cite{alon2015online}.

Previous work such as \cite{liu2018analysis,tossou2017thompson} analyzed the performance of Thompson Sampling for these tasks, giving $O(\sqrt{T})$-type regret bounds which scale with certain statistics of the graph. However, their analyses only applied for stochastic losses rather than adversarial losses. In this section, we outline why their analysis applies to the adversarial case as well.

Let $G$ be a possibly directed feedback graph on $d$ vertices, with $\alpha=\alpha(G)$ the size of its maximum independent set. We use the following lemma:

\begin{lemma}[\cite{mannor2011bandits}, Lemma 3]
For any probability distribution $\pi$ on $V(G)$ (with the convention $0/0=0$):
\[
	\sum_{i=1}^d 
	\frac{\pi(i)}{\sum_{j\in \{i\}\cup N(i)} \pi(j)} \leq \alpha.
\] 
\end{lemma}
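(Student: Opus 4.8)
The plan is to reproduce the standard greedy/peeling argument: one extracts from the weight vector $\pi$ an independent set of $G$ whose size is at least $\sum_i \pi(i)/\pi(\{i\}\cup N(i))$, so that this sum is bounded by $\alpha(G)$. Throughout I would work with arbitrary nonnegative weights rather than a probability vector, since the argument never uses normalization.

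First I would dispose of the zero-mass vertices. If $\pi(i)=0$ then the $i$-th term is $0$ (under the $0/0=0$ convention), and deleting all such vertices leaves $\pi(\{j\}\cup N(j))$ unchanged for every surviving $j$ while not increasing the independence number; hence it suffices to assume $\pi(i)>0$ for all $i$, in which case $\pi(\{i\}\cup N(i))\ge \pi(i)>0$ and no division is ever by zero.

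The core is an induction on $|V(G)|$ (equivalently, greedy peeling), the base case $V(G)=\emptyset$ being trivial. Pick a vertex $v$ minimizing $m:=\pi(\{v\}\cup N(v))$, and split the sum into the part indexed by $\{v\}\cup N(v)$ and the rest. By minimality of $v$, every $i\in\{v\}\cup N(v)$ has $\pi(\{i\}\cup N(i))\ge m$, so the first part is at most $\frac{1}{m}\sum_{i\in\{v\}\cup N(v)}\pi(i)=1$. Deleting $\{v\}\cup N(v)$ gives a graph $G'$ in which the relevant neighborhoods only shrink, so each remaining term is at most the corresponding term computed in $G'$, and the induction hypothesis bounds their total by $\alpha(G')$. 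Finally, any independent set of $G'$ together with $v$ is an independent set of $G$ (all neighbors of $v$ were deleted), so $\alpha(G')\le\alpha(G)-1$; adding the two parts yields $\sum_i\pi(i)/\pi(\{i\}\cup N(i))\le 1+\alpha(G')\le\alpha(G)$.

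The one delicate point is that $G$ is allowed to be directed. There, to force the independence number to drop when $v$ is adjoined one must delete both the in- and out-neighbors of $v$, and then the contribution of the deleted set is no longer automatically at most $1$ (the out-neighbor mass is not counted in $m$). I would handle this exactly as in the cited reference, which supplies the more careful accounting for directed observation graphs; if only symmetric (undirected) feedback is needed for the application, the argument above already suffices. So the main obstacle is this directed bookkeeping rather than the undirected estimate, which is elementary.
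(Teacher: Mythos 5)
Your proof is correct, and it is the standard greedy peeling argument behind the cited result; the paper itself imports this lemma from \cite{mannor2011bandits} without reproving it. Your handling of the zero-mass vertices, the use of unnormalized weights in the induction, and the explicit restriction to the undirected case (the paper treats directed feedback separately via $mas(G)$ and the logarithmic bound of Alon et al.) are all appropriate.
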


Following \cite{liu2018analysis} we now obtain:

\begin{proposition} 

The coordinate information ratio of Thompson Sampling on an undirected graph $G$ is at most $\alpha(G)$.

\end{proposition}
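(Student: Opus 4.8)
The plan is to run the Russo--Van Roy information-ratio calculation exactly as in the bandit lemma recalled above, but \emph{(i)} with the coordinate entropy $H^c$ in place of the Shannon entropy and \emph{(ii)} with a Cauchy--Schwarz weighting that exposes the graph structure, so that the crude factor $d$ is replaced by $\alpha(G)$ via the quoted Mannor--Shamir lemma. Write $N[i]=\{i\}\cup N(i)$ for the closed neighbourhood. Since $G$ is undirected and every vertex carries a self-loop, playing $a_t$ reveals $\ell_t(i)$ precisely for $i\in N[a_t]$, and $i\in N[a_t]\iff a_t\in N[i]$; hence the probability that the loss of arm $i$ is observed in round $t$ is
\[
\hat p_t(i):=\P_t[a_t\in N[i]]=\sum_{j\in N[i]}p_t(j),
\]
and the self-loop ensures $\hat p_t(i)\ge p_t(i)$.

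First I would apply Lemma~\ref{lem:partialobservationIR} with hidden subset $S=\{a^*\}$ and with the ``played'' set taken to be the observed set $N[a_t]$. In this instance $p_t(i\in S)=p_t(i)$, the observation probabilities are the $\hat p_t(i)$ above, and $\bar\ell_t(i,i\in S)=\bar\ell_t(i,i):=\E_t[\ell_t(i)\mid a^*=i]$, so the lemma yields
\[
\sum_i \hat p_t(i)\,p_t(i)\,\ent[\bar\ell_t(i,i),\bar\ell_t(i)]\ \le\ I_t^c .
\]
Pinsker's inequality for Bernoulli variables gives $\ent[\bar\ell_t(i,i),\bar\ell_t(i)]\ge(\bar\ell_t(i)-\bar\ell_t(i,i))^2$, hence $\sum_i \hat p_t(i)\,p_t(i)\,(\bar\ell_t(i)-\bar\ell_t(i,i))^2\le I_t^c$.

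Next I would bound $r_t=\sum_i p_t(i)(\bar\ell_t(i)-\bar\ell_t(i,i))$ by Cauchy--Schwarz, writing $p_t(i)=\sqrt{p_t(i)/\hat p_t(i)}\cdot\sqrt{p_t(i)\hat p_t(i)}$ (convention $0/0=0$; the split makes sense by the previous paragraph and $p_t(i)/\hat p_t(i)\le 1$):
\[
|r_t|\ \le\ \sqrt{\sum_i \frac{p_t(i)}{\hat p_t(i)}}\ \cdot\ \sqrt{\sum_i p_t(i)\hat p_t(i)\,(\bar\ell_t(i)-\bar\ell_t(i,i))^2}\ \le\ \sqrt{\alpha(G)}\cdot\sqrt{I_t^c}\,,
\]
where the first factor is $\le\alpha(G)$ by the Mannor--Shamir lemma applied to $\pi=p_t$ (for which $\sum_{j\in N[i]}\pi(j)=\hat p_t(i)$), and the second is $\le\sqrt{I_t^c}$ by the previous paragraph. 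Squaring gives $r_t^2\le\alpha(G)\,I_t^c$, i.e. the coordinate information ratio $\Gamma_t^c=r_t^2/I_t^c$ is at most $\alpha(G)$, as claimed. Feeding $\Gamma_t^c\le\alpha(G)$ and the easy bound $H^c(p_1)\le\log d+1$ into the coordinate-entropy analogue of Proposition~\ref{prop:1} then yields an $\tilde O(\sqrt{\alpha(G)\,T})$ Bayesian regret bound.

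The step needing the most care is checking that Lemma~\ref{lem:partialobservationIR} genuinely covers graph feedback: one must justify using the observed set $N[a_t]$ as the ``played'' subset in that lemma and, crucially, the identity $\hat p_t(i)=\sum_{j\in N[i]}p_t(j)$. Undirectedness together with the self-loop assumption is exactly what makes this identity hold and line up with the normalization in the Mannor--Shamir lemma; for a directed graph the arms revealing $i$ form the in-neighbourhood, the weighting in the Cauchy--Schwarz step changes, and the bound degrades, which is why the proposition is stated for undirected $G$ only. The remaining ingredients (Pinsker and Cauchy--Schwarz) are routine.
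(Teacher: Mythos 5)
Your proposal is correct and follows essentially the same route as the paper: define the observation probability $q_t(i)=\sum_{j\in\{i\}\cup N(i)}p_t(j)$, bound $\sum_i p_t(i)q_t(i)(\bar\ell_t(i)-\bar\ell_t(i,i))^2$ by $I_t^c$ via Lemma~\ref{lem:partialobservationIR} and Pinsker, and split the Cauchy--Schwarz so that the Mannor--Shamir lemma controls $\sum_i p_t(i)/q_t(i)\le\alpha(G)$. The paper compresses all of this into a single displayed inequality; your write-up supplies the justification (in particular the point that the lemma applies with the observed set $N[a_t]$ in place of the played set) that the paper leaves implicit.
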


\begin{proof}
Let $p_t(i)$ be as usual for a vertex $i$ and $q_t(i)=\sum_{j\in\{i\}\cup N(i)} p_t(i)$ the probability to observe $\ell_t(i)$. Then:
\[
	\alpha\cdot I_t^c\geq \left(\sum_{i=1}^d \frac{p_t(i)}{q_t(i)}\right)\left(\sum_{i=1}^d p(i)q(i)(\ell_t(i)-\ell_t(i,i))^2\right)\geq R^2.
\]
\end{proof}

In the case of a directed graph, a natural analog of $\alpha(G)$ is the maximum value of 
\[
	\sum_{i=1}^d \frac{\pi(i)}{\sum_{j\in \{i\}\cup N^{in}(i)} \pi(j)}
\]
which is equal to $\mathrm{mas}(G)$, the size of the maximal acyclic subgraph of $G$. However, as noted in \cite{liu2018analysis}, if we assume 
\[
	\pi_t(i)\geq \varepsilon
\]
for all $(t,i)\in [T]\times [d]$, then \cite{alon2015online} gives the upper bound 
\begin{equation}
\label{eq:digraph-eps}
	\sum_{i=1}^d \frac{\pi(i)}{\sum_{j\in \{i\}\cup N^{in}(i)} \pi(j)}\leq 4\left(\alpha\cdot \log\left(\frac{4d}{\alpha\varepsilon}\right)\right).
\end{equation}
Of course, $\varepsilon=(dT)^{-3}$ additional exploration has essentially no effect on the expected regret (as it induces $O(T^{-2})$ total variation distance betwen the two algorithms and hence adds $O(1/T)$ regret). By mixing Thompson sampling with an $\epsilon=(dT)^{-3}$ probability of uniform exploration at each time, the bound \eqref{eq:digraph-eps} thus applies and we obtain a $\alpha$-dependent bound for directed graphs as well.

\begin{theorem}
Thompson Sampling on a sequence $G_t$ of undirected graphs achieves expected regret
\[
	\mathbb E[R_T]=O\left(\sqrt{H^c(p_1)\sum_{t=1}^T \alpha(G_t)}\right).
\]
Moreover Thompson Sampling on a sequence $G_t$ of directed graphs achieves expected regret
\[
	\mathbb E[R_T]=O\left(\sqrt{H^c(p_1)\log(dT)\sum_{t=1}^T \alpha(G_t)}\right).
\]
\end{theorem}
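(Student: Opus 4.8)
The plan is to run the information-ratio argument of Russo and Van Roy with the two modifications already prepared above: the potential is the coordinate entropy $H^c$, so the telescoping input is $\E\big[\sum_t I_t^c\big]\le H^c(p_1)$ (the proof of Proposition~\ref{prop:1} verbatim, with $H$ replaced by $H^c$), and the information ratio is allowed to depend on $t$, bounded at round $t$ by the graph quantity of the preceding Proposition. No structure on the prior (no independence across time or across coordinates) is used.

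\textbf{Undirected graphs.} The preceding Proposition applied to the graph $G_t$ active at round $t$ gives the pathwise bound $r_t^2\le\alpha(G_t)\,I_t^c$. Since the sequence $(G_t)_{t\le T}$ is fixed, the weights $\alpha(G_t)$ are deterministic, so Cauchy-Schwarz over $t$ and then Jensen ($\E[\sqrt{\cdot}]\le\sqrt{\E[\cdot]}$) give
\[
\E[R_T]=\E\Big[\sum_t r_t\Big]\le\Big(\sum_t\alpha(G_t)\Big)^{1/2}\E\Big[\sum_t I_t^c\Big]^{1/2}\le\Big(\sum_t\alpha(G_t)\Big)^{1/2}H^c(p_1)^{1/2},
\]
which is the first claim.

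\textbf{Directed graphs.} Here the Cauchy-Schwarz step in the preceding Proposition produces the factor $\sum_i p_t(i)/q_t(i)$, with $q_t(i)=\sum_{j\in\{i\}\cup N^{in}(i)}p_t(j)$ the probability of observing $i$; for a directed graph this is only controlled by $mas(G_t)$, which can be of order $d$. I would therefore replace Thompson Sampling by its $\varepsilon$-mixed version, which plays from $\tilde p_t=(1-d\varepsilon)p_t+\varepsilon\mathbf 1$ for a sufficiently small inverse polynomial $\varepsilon$ in $dT$ (e.g.\ $\varepsilon=\Theta(d^{-1}T^{-2})$). Then every sampling probability, and hence---using the assumed self-loops---every observation probability $\tilde q_t(i)\ge\tilde p_t(i)\ge\varepsilon$, is at least $\varepsilon$, so the bound of \cite{alon2015online} quoted above applies; since moreover $p_t(i)\le 2\tilde p_t(i)$ when $d\varepsilon\le\frac12$, the factor $\sum_i p_t(i)/\tilde q_t(i)$ is $O(\alpha(G_t)\log(dT))$. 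Rerunning the preceding Proposition's computation with $\tilde q_t$ in place of $q_t$---and with the version of Lemma~\ref{lem:partialobservationIR} in which the event ``$i\in a_t$'' is replaced by ``$i$ is observed'', of probability $\tilde q_t(i)$---yields $r_t^2\le O(\alpha(G_t)\log(dT))\cdot I_t^c$ for the mixed algorithm, and the same Cauchy-Schwarz/entropy-telescoping as above gives $\E[\sum_t r_t]=O\big(\sqrt{\log(dT)\,\big(\sum_t\alpha(G_t)\big)H^c(p_1)}\big)$. Finally, the $\varepsilon$-exploration adds at most $d\varepsilon$ to the instantaneous regret each round and perturbs the law of the entire trajectory by at most $dT\varepsilon$ in total variation, so the regret of the mixed algorithm and of plain Thompson Sampling agree up to $o(1)$ (in particular up to $o(\sqrt{\sum_t\alpha(G_t)})$); note the potential $H^c(p_t)$ is a functional of the posterior of $a^*$, which the mixing leaves untouched, so $H^c(p_1)$ needs no correction. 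This gives the second claim.

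\textbf{Main obstacle.} The undirected case is bookkeeping once the preceding Proposition is in hand. The real work is in the directed case: (i) choosing $\varepsilon$ small enough that the mixing is invisible in the regret yet large enough that $\log(1/\varepsilon)=O(\log(dT))$, so that only a logarithmic---not polynomial---factor of $T$ is lost; (ii) checking that it is exactly the observation probabilities, which are bounded below by $\varepsilon$ precisely because of the self-loop assumption, that feed into the bound of \cite{alon2015online} inside the information-ratio computation; and (iii) confirming that the mixing is negligible both for the regret (a total-variation estimate on trajectory laws) and for the entropy potential. None of these is deep, but (i)--(ii) are where the self-loop hypothesis and the precise exploration rate genuinely enter.
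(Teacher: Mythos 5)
Your proposal is correct and follows essentially the same route as the paper: the undirected bound is the coordinate-entropy information-ratio telescoping combined with the per-round bound $r_t^2\le\alpha(G_t)I_t^c$ from the Mannor--Shamir lemma, and the directed bound uses exactly the paper's device of mixing in $\varepsilon=o(T^{-2})$ uniform exploration so that the \cite{alon2015online} estimate $O(\alpha\log(d/(\alpha\varepsilon)))$ applies, with a total-variation argument showing the mixing is invisible in the regret. If anything, your write-up is more explicit than the paper's about points (i)--(iii), which the paper dispatches in a single sentence.
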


As in \cite{liu2018analysis}, this analysis applies even when the Thompson sampling algorithm does not know the graphs $G_t$, but only observes the relevant neighborhood feedback after choosing each action $a_t$.

\section{Negative Results for Thompson Sampling}

Here we present some negative results. First, Theorem~\ref{thm:nohighprob} states that Thompson Sampling against an arbitrary prior may have $\Omega(T)$ regret a constant fraction of the time (but will therefore also have $-\Omega(T)$ regret a constant fraction of the time). By contrast, there exist algorithms which have low regret with high probability even in the frequentist setting \cite{exp3p}. Bridging this gap with a variant of Thompson Sampling would be very interesting. 

\begin{restatable}{theorem}{nohighprob}
\label{thm:nohighprob}

For all $T\geq T_0$ at least an absolute constant, there exists a prior distribution on $d=2$ arms for which Thompson Sampling incurs at least $\frac{T}{3}$ regret with probability at least $\frac{1}{3}$ (with either full or bandit feedback).

 \end{restatable}

\begin{proof}
We construct such a prior distribution with $2$ arms. First for $t\leq T/3$ we take $\ell_t(1)=1$ and $\ell_t(2)=0$ almost surely. Afterward exactly one of the following two possibilities occurs, each with probability $\frac{1}{2}$.
\begin{enumerate}
	\item For $t>T/3$, we have $\ell_t(1)=\ell_t(2)=0$.
	\item For $t>T/3$, we have $\ell_t(1)=0$ and $\ell_t(2)=1$.
\end{enumerate}

In this construction, Thompson Sampling will pick arm $1$ with probability $\frac{1}{2}$ during each of the first $T/3$ rounds. Hence there is an $1-o_{T\to\infty}(1)$ probability to have $L_T\geq \frac{T}{3}$. On the other hand, $L^*=0$ with probability $\frac{1}{2}$ from the first case above. Therefore $R_T\geq \frac{T}{3}$ with probability $\frac{1}{2}-o_{T\to\infty}(1)$. This completes the proof.
\end{proof}

Recall that even in Theorem~\ref{thm:MDtsallis} there was an additive $d\log(T)$ term in the expected regret. Of course, once the player incurs loss $\oL^*+1$ on arm $i$, Thompson sampling will never play arm $i$ again. Therefore the total loss for Thompson sampling (ordinary or Thresholded) can never be more than $d(\oL^*+1)$. Theorem~\ref{thm:MDtsallis} leaves open the possibility that $\Omega(d\oL^*)$ regret is eventually reached when $T$ is extremely large. In other words, our regret bound for ordinary Thompson sampling becomes trivial for extremely large $T$ when $d$ and $\oL^*$ are fixed. Theorem~\ref{thm:Tdependent} below shows that this reflects reality. Namely, there do exist prior distributions for which $\Omega(d\oL^*)$ expected regret is incurred by Thompson sampling for large $T$.

\begin{restatable}{theorem}{Tdependent}
\label{thm:Tdependent}

Let $d\geq 3$. There exist prior distributions against which Thompson Sampling achieves $\Omega(d\oL^*)$ expected regret for very large $T$ with bandit feedback, even given the value $\oL^*$.

\end{restatable}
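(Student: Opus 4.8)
The plan is to prove this lower bound by exhibiting, for each target pair $(d,\underline L^*)$, an explicit ``adversarial'' prior $\nu$ on loss sequences of some astronomically long horizon $T$ under which $\E^\nu[R_T]=\Omega(d\,\underline L^*)$. The horizon will have to be enormous --- in fact a tower of exponentials of height $\Theta(\underline L^*)$ --- which is both why the phenomenon appears only ``for very large $T$'' and why the associated extra term in the regret behaves like $d\log^*(T)$: a horizon of length $T$ accommodates only about $\log^*(T)$ of the construction's ``stages'', and each stage forces $\Omega(d)$ additional regret.

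Concretely I would build $\nu$ out of $\Theta(\underline L^*)$ stages occupying consecutive time windows $W_1,W_2,\dots$ whose lengths grow tower-fast, with all losses identically $0$ outside the windows so that between stages no new information reaches the player and the posterior $p_t$ on $a^*$ is frozen; this lets each stage be analyzed in isolation. The optimal action $a^*$ is arranged to ever suffer loss $1$ on only $\underline L^*$ rounds in total, which both guarantees $L^*\le\underline L^*$ almost surely and forces all regret to come from TS's play on suboptimal arms. Within a stage the loss pattern of each arm is calibrated so that (i) every arm the player has not yet conclusively eliminated ``looks suspicious'' over a long sub-stretch of the window; (ii) the true optimal arm $a^*$ is, given the player's information, genuinely indistinguishable from these suspicious arms until enough of them have been probed --- crucially, the sparsity of $a^*$'s losses is concealed by spreading them pseudo-randomly, so that no single observation exposes $a^*$; and (iii) by a counting/concentration argument, \emph{any} play rule --- in particular TS, which plays $i$ with probability $p_t(i)=\P_t[i\in a^*]$ --- is forced to incur $\Omega(d)$ expected regret before the stage ends and its posterior collapses onto a smaller candidate set. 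Summing the $\Theta(\underline L^*)$ stages gives $\E^\nu[R_T]=\Omega(d\,\underline L^*)$.

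The main obstacle is the circularity inherent in analyzing a Bayesian player: what TS observes inside a window --- hence how its posterior moves, hence the regret it pays --- depends on which arms it chose to play, which in turn depends on exactly the posterior we are trying to control. I would break this circularity in the spirit of Theorem~\ref{thm:bayesianexplorer}, using that a perfectly Bayesian agent cannot act on information it has not yet received: it cannot single out $a^*$ among the suspicious arms, nor anticipate which rounds carry high loss, any faster than the feedback it has actually collected permits. Making this quantitative while simultaneously respecting the hard constraint $L^*\le\underline L^*$ is the delicate point. Finally, this mechanism is specifically defeated by thresholding: in Theorem~\ref{thm:thresholdedbanditTS} the rare (hence ``suspicious'') arms are simply never played, so no regret accrues inside the windows --- consistent with that theorem's fully $T$-independent bound, and showing that such thresholding is genuinely needed to remove the $T$-dependence.
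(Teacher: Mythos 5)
There is a genuine gap: your proposal is a plan rather than a proof, and the load-bearing step is exactly the part left unspecified. In item (iii) you assert that within each window ``\emph{any} play rule \dots is forced to incur $\Omega(d)$ expected regret,'' but this cannot be true for arbitrary play rules (a rule that ignores the suspicious arms and plays one fixed safe arm pays nothing inside the window); the forcing can only come from the Bayesian consistency of Thompson Sampling, namely that an arm which the posterior cannot yet rule out as optimal retains posterior mass and hence play probability. Making that quantitative --- showing the posterior probability that a suspicious arm is optimal stays bounded away from $0$ until enough loss has been paid on it, despite the circular dependence on the player's own past choices --- is the entire content of the theorem, and your proposal defers it (``calibrated so that \dots,'' ``the delicate point''). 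You also never exhibit the prior: how the losses are ``spread pseudo-randomly'' so that $a^*$ is indistinguishable, how arms become re-suspicious in later stages after having been probed in earlier ones, and how the whole construction keeps $L^*\le\underline L^*$ almost surely are all left open, and these constraints interact nontrivially.

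For contrast, the paper's construction is a single-stage design with no windows: one uniformly random ``good'' arm whose losses are fair coin flips stopped once its total reaches $\underline L^*$, and the remaining arms split into ``bad'' (coin flips stopped at $\underline L^*+1$) and ``terrible'' (coin flips forever). A terrible arm's observed losses are statistically identical to a good arm's until more than $\underline L^*$ loss has been paid on it, so for any fixed number of pulls the Bayes factor between ``arm $i$ is good'' and the truth is bounded; a martingale argument then shows the posterior probability that $i$ is good stays uniformly bounded away from $0$, so over a sufficiently long horizon TS keeps returning to $i$ until it has paid $\underline L^*+1$ there. Summing over the $\Omega(d)$ terrible arms gives the bound. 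This decomposes the $d\cdot\underline L^*$ product as (number of arms) $\times$ (loss each arm must absorb before elimination), rather than your (number of stages) $\times$ (cost per stage); the former avoids having to engineer arms that become suspicious repeatedly, which is where your multi-stage scheme would face the most difficulty. Your closing observation about thresholding is correct and matches the paper's motivation for Theorem~\ref{thm:thresholdedbanditTS}.
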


\begin{proof}
We construct such a prior distribution on $d\geq 3$ arms is as follows. First pick a uniformly random ``good" arm $a^*\in [d]$. For $i\in [d]\backslash\{a^*\}$, set arm $i$ to be either ``bad'' or ``terrible'' uniformly at random, independently over different arms $i$. Denote by $\cB$ and $\cT$ the sets of bad and terrible arms, respectively.

The (random) loss sequence $(\ell_t(i))_{(t,i)\in [T]\times [d]}$ is constructed as follows. First at time $i$, we set
\[
	\ell_t(i)=1_{i\neq a^*},
	\quad\quad 
	i\in [d].
\]
In other words, all arms except $a^*$ receive a loss. Next for $a^*$, every subsequent loss $\ell_t(a^*)$ is uniformly random in $\{0,1\}$ until the first time $\tau$ with total loss $L_{\tau}(a^*)=\oL^*$ is reached. For $t\geq \tau$, we set $\ell_t(a^*)=0$.

For each bad arm $i\in\cB$, we do the same with $\ell_t(i)$ uniformly random in $\{0,1\}$ for $t>1$, but stop at total loss $\oL^*+1$ instead of $\oL^*$.

For a terrible arm $i\in\cT$, we let the losses $\ell_t(i)\in \{0,1\}$ for $t>1$ be uniformly random for all time (so e.g. the total loss grows linearly with $T$).

If $a_1=a^*$, then Thompson sampling will observe $\ell_t(a_1)=0$ and thus infer that $a^*=a_1$. Hence in this case we have $a_t=a^*$ for all $t\geq 1$ and there will be no regret.
However, suppose that $a_1\neq a^*$, which holds with probability $\frac{d-1}{d}$. We claim that on this event, the player will pay loss $\oL^*+1$ on each terrible arm with probability $1-o(1)$ for sufficiently large $T$. This implies the desired result.

Indeed, suppose $i\in\cT$ satisfies $i\neq a_1$ was not played at time $1$. Fix a time $t$ and let $\alpha_i(t)<t$ be the most recent time that $a_{\hat t}=i$ was played. Moreover suppose that $L_t(i)<\oL^*$. Then we claim that $p_t(i)$ is uniformly bounded away from $0$ until the value $\alpha_i(t)$ changes, i.e. until the next time $s>t$ that $a_s=i$.

To do this we consider the alternative hypothesis for the player which differs from the truth in that $a^*\in\cB$ is actually a bad arm, while $i$ is actually the good arm. The former change only affects the distribution of the sequence $(\ell_t(a^*))_{t\geq 1}$ in the value $\ell_1(a^*)$, which was not observed by assumption. Moreover the player only makes Bayesian updates regarding the latter change when $a_t=i$ is played. Finally this evidence is never conclusive until the player has suffered loss
\[
	\sum_{s\leq t} \ell_s(i)\cdot 1_{a_t=i}>\oL^*.
\] 
It follows that while $\alpha_i(t)$ is constant, the posterior likelihood ratio between this alternative hypothesis and the true arm identities is at least $\eps(\alpha_i(t))>0$.

Additionally, with probability $1$ the player's probability assigned to the true arm configuration is bounded away from $0$ uniformly in time. Indeed that probability is a martingale, and if this were false then the probability would have to converge to $0$. But the player's subjective probability of this (true) statement cannot converge to $0$, because revealing more information (i.e. all losses for all times) would then also assign the true statement probability $0$ by the martingale property, a contradiction.

Since for fixed $\alpha_i(t)$ the Bayes factor between the truth and the alternative is bounded, we see that this alternative with arm $i$ as the good arm has probability at least $\eps'(\alpha_i(t))>0$ not depending explicitly on $t$. 

We have just argued that Thompson Sampling with this prior will have a uniformly positive probability to play such an arm $i$ until the nex time it plays $i$ again. Thus, with probability $\frac{d-1}{d}$ (for the first arm not to be good), Thompson Sampling accumulates loss $\oL^*+1$ on every terrible arm except the first arm it plays when run for an infinite amount of time. By countable exhaustion, the same holds for sufficiently large finite $T$ with loss $\oL^*+1-o(1)\geq \oL^*$. This results in $\Omega(d\oL^*)$ regret since the average number of terrible arms is $\frac{d-1}{2}$.
\end{proof}

Finally we show that Thompson sampling does not achieve good small-loss bounds for contextual bandits. Recall that abstractly, contextual bandit is equivalent to graph feedback in which:
\begin{itemize}
	\item The graphs change from round to round.
	\item All graphs are vertex-disjoint unions of at most $K$ cliques.
	\item The losses for a round are constant within cliques.
\end{itemize}

The existence of an algorithm achieving $O(\sqrt{L^*})$ regret for contextual bandits was asked in \cite{coltopen} and resolved positively in \cite{MYGA} with a computationally intractable algorithm, and later in \cite{foster2021efficient} with an efficient algorithm assuming access to a regression oracle. It would be interesting to design a natural Bayesian algorithm matching these guarantees. 

\begin{restatable}{theorem}{contextuallowerbound}
\label{thm:contextuallowerbound}

There exists a prior distribution on which Thompson Sampling achieves, with high probability, regret $\Omega(\sqrt{T})$ for a contextual bandit problem with $L^*=0$ optimal loss, $K=2$ cliques, and $d=O(\sqrt{T})$ total arms.

\end{restatable}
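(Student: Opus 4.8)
The plan is to construct an explicit prior for a contextual bandit instance that fools Thompson Sampling into paying $\Omega(\sqrt T)$ total loss despite $L^*=0$. Recall the contextual-to-graph dictionary: each round the feedback graph is a disjoint union of cliques (the contexts), losses are constant on each clique, and we get $d = O(\sqrt T)$ arms organized as $K=2$ cliques per round. I would set things up so that one clique (``clique $A$'') always has loss $0$ and the other (``clique $B$'') has loss $1$; the contexts cycle deterministically so that over $T$ rounds each of the $\Theta(\sqrt T)$ arms appears in $\Theta(\sqrt T)$ rounds. The prior randomness lies in \emph{which arm within each context is the good one}: independently for each context-arm, or via some low-entropy coupling, the identity of $a^*$ is hidden. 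The point is that the optimal policy (knowing $a^*$) always picks the good arm in each context and pays $0$, so $L^* = 0$ almost surely and even $\underline L^* = 0$ is known.

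The key steps, in order: (1) Fix the context schedule and the deterministic loss structure (clique $A$ loss $0$, clique $B$ loss $1$), and choose the prior on $a^*$ so that, conditioned on the feedback observed up to round $t$ \emph{within a given context}, the posterior on the identity of the good arm in that context is still close to uniform over the not-yet-eliminated arms. Here the fact that Thompson Sampling plays from $p_t$ (the posterior on $a^*$) is what we exploit: until TS happens to sample the good arm in a context, it learns nothing that collapses the posterior, because playing a bad arm in clique $B$ only reveals ``loss $1$'' which is consistent with every remaining candidate. (2) Show that within each context of size $s = \Theta(\sqrt T)$ visited $r = \Theta(\sqrt T)$ times, TS pays expected loss $\Omega(\min(r, s)) = \Omega(\sqrt T)$ before it identifies the good arm --- this is essentially a coupon-collector / random-probing lower bound: each visit TS samples an arm roughly uniformly among survivors, so it takes $\Omega(s)$ visits in expectation to hit the good one, and each miss costs $1$. (3) Sum over the $\Theta(\sqrt T)$ contexts, or rather observe that within a single context-block the bound is already $\Omega(\sqrt T)$, giving total loss $\Omega(\sqrt T)$; promote the in-expectation statement to a ``with high probability'' statement by a concentration/anti-concentration argument across the independent contexts (or across the many visits within one context).

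The main obstacle I expect is step (1): proving that the posterior really does stay near-uniform over surviving candidates under the \emph{actual} TS dynamics. The subtlety is the usual circularity --- the posterior $p_t$ depends on the feedback, which depends on TS's past plays, which depend on $p_{<t}$ --- so I need the prior to be engineered (e.g., with enough independence across contexts, and a symmetric prior within each context) that conditioning on ``the good arm in context $c$ has not yet been played'' leaves the within-context posterior exactly uniform on the unplayed arms, making the per-visit success probability exactly $1/(\text{number of unplayed arms})$. A clean way to get this is to take the hidden good arm in each context to be a uniformly random one of its arms, independent across contexts, and with loss-constant-on-cliques forcing \emph{all} bad arms in clique $B$ to look identical; then the analysis reduces to a transparent urn process and the exchangeability makes the posterior computation exact rather than approximate. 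Once that structural lemma is in place, steps (2)–(3) are routine coupon-collector and Chernoff-type estimates.
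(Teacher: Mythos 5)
Your construction has a genuine gap, and it is concentrated exactly where you predicted trouble plus one place you did not. First, the step you flagged as the ``main obstacle'' hides a false claim: with $K=2$ cliques and losses constant on cliques, playing \emph{any} arm reveals the losses of every arm in its clique, hence the full membership of that clique, hence (by complementation) the membership of the other clique too. So in your design, where clique $B$ deterministically carries loss $1$, the player learns the entire set $B_t$ every round no matter what it plays, and eliminates \emph{all} of $B_t$ from candidacy for the zero-loss arm --- not just the arm it played. The statement that ``playing a bad arm in clique $B$ only reveals loss $1$, which is consistent with every remaining candidate'' is therefore wrong, and the coupon-collector dynamic of step (2) (each miss eliminates one survivor, so $\Omega(s)$ misses are needed) never gets started. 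Worse, the deterministic loss assignment is defeated outright by a potential argument: the conditional expected loss in round $t$ is $\P_t[a_t\in B_t]=\P_t[a^*\in B_t]=:m_t$, which is precisely the fraction of posterior mass on $a^*$ eliminated that round; since the surviving mass renormalizes by $1/(1-m_t)$ and $p_t(a^*)$ can only climb from $p_1(a^*)\ge 1/d$ to at most $1$, the $m_t$ sum (in expectation) to at most $\log d = O(\log T)$. No re-randomization of clique memberships escapes this, because in your scheme the player pays only in rounds where it also kills the corresponding posterior mass.

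The paper's construction avoids both problems by randomizing the loss \emph{values} and making the cliques extremely asymmetric. Time is cut into $S=\sqrt T$ epochs of length $S$; in epoch $j$ the cliques are a tiny random set $C_j$ of $d/(2S)$ arms carrying an unknown bit $b_j\in\{0,1\}$ as its loss, versus its huge complement carrying loss $0$; at the very end a surviving arm is designated optimal so that $L^*=0$. Playing in the big clique is free and reveals nothing about $b_j$, so the posterior probability of probing $C_j$ stays at $\Theta(1/S)$ throughout the epoch; over $S$ rounds this accumulates to a constant probability of one probe, which costs $1$ when $b_j=1$. That gives $\Theta(1)$ expected loss per epoch and $\Theta(\sqrt T)$ in total, while the potential argument above is evaded because the player pays a little every round in expectation but only kills $\Theta(1/S)$ posterior mass on the rare probing events; near-independence across epochs upgrades this to a high-probability statement. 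If you want to repair your write-up, the two ingredients you must import are (i) uncertainty in the loss values themselves, not merely in which arms sit in which clique, and (ii) a small unknown-loss clique facing a large known-safe one, so that the uncertainty survives many rounds of play.
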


\begin{proof}
Set $S=\sqrt{T}$ and fix $d\geq 2S$. Form $S$ distinct \emph{small cliques}, with random but disjoint sets of $\frac{d}{2S}$ arms each. Call these cliques $C_1,\dots, C_{S}$. Also generate independent uniformly random bits $b_1,\dots, b_S \in \{0,1\}$. For each $j\in \{0,1,\dots, \sqrt{T}-1\}$, consider the set of times $\cT_j=\{jS+1,\dots, (j+1)S\}\subseteq [T]$.

For $t\in \cT_j$, we set the feedback graph $G_t$ consist of the clique $C_j$ and the complementary clique on $[d]\backslash C_j$. We take the loss on the small clique $C_j$ to be $b_i$, and $0$ on the complement $[d]\backslash C_j$. Finally, at the last time $T$ pick at random a single arm $a^*$ with no loss so far and make the loss
\[
	\ell_t(i)=1_{i\neq a^*}.
\]
(This corresponds to the trivial clique on $a^*$, and the clique on $[d]\backslash\{a^*\}$.) Then clearly $L^*=0$ for arm $a^*$.

However Thompson Sampling will incur a constant expected loss for each clique $C_{j}$. This is because until observing a loss on $C_j$ during $t\in\cT_j$, there is a $\Theta(T^{-1/2})$ probability that $a^*\in C_j$ eventually holds, and there are $|\cT_j|=\Theta(T^{1/2})$ opportunities for Thompson sampling to choose an arm in $C_j$. In all, Thompson sampling incurs expected loss $\Theta(S)=\Theta(\sqrt{T})$ as claimed.
\end{proof}

\bibliographystyle{alpha}
\bibliography{refs}

\appendix

\section{Proof of Theorem~\ref{thm:bayesianexplorer}}

Here we prove Theorem~\ref{thm:bayesianexplorer}. Recall the statement:

\bayesianexplorer*

We recall the notations from Table~\ref{table:loss-notation}, which feature crucially in our proof.

\begin{center}
\begin{tabular}{ |c|c|c|c|  } 
 \hline
 $\ell_t^{\mathcal R}(i)=\ell_t(i)\cdot 1_{i\in \mathcal R_t}$ 
 & 
 $u_t^{\mathcal R}(i)=\frac{\ell^{\mathcal R}_t(i)\cdot 1_{i\in A_t}}{\gamma_2}$ 
 & 
 $L_t^{\mathcal R}(i)=\sum_{s\leq t}\ell_s^{\mathcal R}(i)$ 
 &  
 $U_t^{\mathcal R}(i)=\sum_{s\leq t}u_s^{\mathcal R}(i)$ \\ 
 \hline 
 $\ell_t^{\mathcal C}(i)=\ell_t(i)\cdot 1_{i\in \mathcal C_t}$ 
 & 
 $u_t^{\mathcal C}(i)=\frac{\ell^{\mathcal C}_t(i)\cdot 1_{i\in A_t}}{\hat p_t(i)}$ 
 & 
 $L_t^{\mathcal C}(i)=\sum_{s\leq t}\ell_s^{\mathcal C}(i)$ 
 & 
 $U_t^{\mathcal C}(i)=\sum_{s\leq t}u_s^{\mathcal C}(i)$  
 \\
 \hline
\end{tabular}
\end{center}

To control the error of the estimators $U_t$ we rely on Freedman's inequality (\cite{freedman1975tail}), a refinement of Hoeffding-Azuma which is more efficient for highly asymmetric summands.

\begin{theorem}[Freedman's Inequality]
\label{thm:freedman}
Let $S_t=\sum_{s\leq t} x_s$ be a martingale sequence, so that for some discrete-time filtration $(\mathcal F_t)_{t\in\mathbb Z_{\geq 0}}$,
\[
	\E[x_s|\mathcal F_{s-1}]= 0.
\] 
Suppose that a uniform and almost-sure one-sided estimate $x_s\leq M$ holds. Also define the conditional variance 
\[
	W_s=Var[X_s|\mathcal F_{s-1}]
\] 
and set $V_t=\sum_{s\leq t}W_s$ to be the total variance accumulated so far.

Then with probability at least $1-e^{-\frac{a^2}{2b+Ma}}$, we have $S_t\leq a$ for all $t$ with $V_t\leq b$.

\end{theorem}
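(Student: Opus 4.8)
The plan is to run the classical exponential-supermartingale (Cram\'er--Chernoff) argument.

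\textbf{Step 1: a one-step conditional MGF bound.} First I would record the elementary fact that $\psi(u):=(e^u-1-u)/u^2$, with $\psi(0):=1/2$, is nondecreasing on $\R$; a short computation shows $\psi'$ has numerator (up to a positive power of $u$) $g(u)=ue^u-2e^u+u+2$ with $g(0)=g'(0)=0$ and $g''(u)=ue^u$, from which $g$ has the sign of $u$ and hence $\psi'\ge 0$. Consequently, for $\lambda>0$ and any real $x\le M$, $e^{\lambda x}-1-\lambda x\le \frac{x^2}{M^2}\bigl(e^{\lambda M}-1-\lambda M\bigr)$. Writing $\phi(\lambda):=\frac{e^{\lambda M}-1-\lambda M}{M^2}\ge 0$ and taking conditional expectations given $\cF_{s-1}$, using $\E[x_s\mid\cF_{s-1}]=0$, $\Var[x_s\mid\cF_{s-1}]=W_s$, and $1+y\le e^y$, yields
\[
\E\bigl[e^{\lambda x_s}\,\big|\,\cF_{s-1}\bigr]\le \exp\bigl(\phi(\lambda)\,W_s\bigr).
\]
Note $e^{\lambda x_s}\le e^{\lambda M}$, so all the expectations involved are finite.

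\textbf{Step 2: the supermartingale and a stopping-time tail bound.} Fix $\lambda>0$ and set $Z_t:=\exp\bigl(\lambda S_t-\phi(\lambda)V_t\bigr)$, $Z_0=1$. Since $V_t$ is $\cF_{t-1}$-measurable and $\phi(\lambda)\ge 0$, Step 1 gives $\E[Z_t\mid\cF_{t-1}]=Z_{t-1}e^{-\phi(\lambda)W_t}\,\E[e^{\lambda x_t}\mid\cF_{t-1}]\le Z_{t-1}$, so $(Z_t)$ is a nonnegative supermartingale. Because $W_s\ge 0$, the sequence $V_t$ is nondecreasing, so the conclusion ``$S_t\le a$ for all $t$ with $V_t\le b$'' fails precisely on the event $\{\tau<\infty,\ V_\tau\le b\}$, where $\tau:=\inf\{t:S_t>a\}$; on that event $Z_\tau\ge\exp(\lambda a-\phi(\lambda)b)$. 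Optional stopping at bounded times gives $\E[Z_{t\wedge\tau}]\le 1$ for every $t$, and since $Z_{t\wedge\tau}\ge 0$ and $Z_{t\wedge\tau}\to Z_\tau$ on $\{\tau<\infty\}$, Fatou's lemma gives $\E[Z_\tau\mathbf 1_{\{\tau<\infty\}}]\le 1$. Hence
\[
\P\bigl(\tau<\infty,\ V_\tau\le b\bigr)\,\exp\bigl(\lambda a-\phi(\lambda)b\bigr)\le \E\bigl[Z_\tau\mathbf 1_{\{\tau<\infty\}}\bigr]\le 1,
\]
i.e. $\P(\text{failure})\le\exp\bigl(\phi(\lambda)b-\lambda a\bigr)$ for every $\lambda>0$.

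\textbf{Step 3: optimizing $\lambda$.} From $k!\ge 2\cdot 2^{k-2}$ for $k\ge 2$ one gets $e^u-1-u=\sum_{k\ge 2}u^k/k!\le\frac{u^2/2}{1-u/2}$ for $0<u<2$, hence $\phi(\lambda)\le\frac{\lambda^2/2}{1-\lambda M/2}$ for $0<\lambda<2/M$. Choosing $\lambda=\frac{a}{b+Ma/2}$ (which satisfies $\lambda<2/M$ since $b>0$), one computes $1-\lambda M/2=\frac{b}{b+Ma/2}$, so $\phi(\lambda)b\le\frac{\lambda^2(b+Ma/2)}{2}=\frac{\lambda a}{2}$, and therefore $\phi(\lambda)b-\lambda a\le-\frac{\lambda a}{2}=-\frac{a^2}{2b+Ma}$. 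Substituting into Step 2 gives exactly $\P(\text{failure})\le\exp\!\bigl(-\tfrac{a^2}{2b+Ma}\bigr)$.

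The routine parts are the two elementary scalar inequalities (monotonicity of $\psi$, and the series bound on $e^u-1-u$) together with the $\lambda$-optimization, all of which are short. The one step needing genuine care is the passage from the per-time supermartingale inequality to the uniform-in-$t$ statement restricted to the level set $\{V_t\le b\}$ (which may never be left): this is precisely where the stopping time $\tau$, the monotonicity of $V_t$ (so that $\{t:V_t\le b\}$ is an initial segment), and the Fatou step allowing $\tau=\infty$ are all used.
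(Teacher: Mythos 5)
Your proof is correct. The paper does not prove this statement at all: it is quoted as a known result and attributed to Freedman (1975), so there is no internal proof to compare against. Your argument is essentially the classical one (and close to Freedman's original): the monotonicity of $u\mapsto (e^{u}-1-u)/u^{2}$ gives the one-step conditional MGF bound $\E[e^{\lambda x_s}\mid\cF_{s-1}]\le e^{\phi(\lambda)W_s}$ using only the one-sided bound $x_s\le M$; the process $Z_t=\exp(\lambda S_t-\phi(\lambda)V_t)$ is a nonnegative supermartingale because $V_t$ is predictable; and the stopping time $\tau=\inf\{t:S_t>a\}$ together with monotonicity of $V_t$ and Fatou converts the bound into the uniform-in-$t$ statement on $\{V_t\le b\}$. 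Your choice of the crude factorial bound $k!\ge 2\cdot 2^{k-2}$, giving $\phi(\lambda)\le\frac{\lambda^{2}/2}{1-\lambda M/2}$, and the choice $\lambda=\frac{a}{b+Ma/2}$ reproduce exactly the exponent $\frac{a^{2}}{2b+Ma}$ stated in the paper (the sharper Bernstein-type bound with denominator $2b+\frac{2Ma}{3}$ would also follow from $k!\ge 2\cdot 3^{k-2}$, but is not needed). The only implicit hypotheses worth flagging are $M>0$ and $b>0$ (so that $\lambda M<2$), which hold in every application in the paper.
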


Martingale concentration is useful to analyze the error of the unbiased estimators $U_t^{\mathcal C}(i)$. For the underbiased estimators it is correspondingly helpful to use \textbf{supermartingale} concentration. Recall that a supermartingale sequence $(S_t)_{t\geq 0}$ relative to a filtration $\mathcal F$ satisfies
\[
	\E[S_t~|~\mathcal F_{t-1}]\leq S_{t-1},
\]
i.e. it decreases on average.
Using a discrete-time Doob-Meyer decomposition (see e.g. \cite[Chapter 1.4]{karatzas2012brownian}) of a bounded supermartingale into the sum of a martingale and a decreasing predictable process, we obtain the following. (Here ``predictable'' means that $D_t$ is $\mathcal F_{t-1}$-measurable.)

\begin{corollary}
\label{cor:freedman}
Let $S_t=\sum_{s\leq t} x_s$ be a supermartingale sequence for $t\geq 1$, so that $\E[x_s|\mathcal F_{s-1}]\leq 0$. Suppose there is a uniform one-sided estimate $x_s-\E[x_s|\mathcal F_{s-1}]\leq M$. Also define the conditional variance 
\[
	W_s=Var[X_s|\mathcal F_{s-1}]
\] 
and set $V_t=\sum_{s\leq t}W_s$ to be the total variance accumulated so far.

Then with probability at least $1-e^{-\frac{a^2}{2b+Ma}}$, we have $S_t\leq a$ for all $t$ with $V_t\leq b$.
\end{corollary}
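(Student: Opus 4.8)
The plan is to reduce the supermartingale case to the martingale case already stated as Freedman's Inequality, via the Doob--Meyer decomposition. First I would write $x_s = y_s + z_s$, where $y_s := x_s - \E[x_s \mid \mathcal F_{s-1}]$ and $z_s := \E[x_s \mid \mathcal F_{s-1}]$. By construction $\E[y_s \mid \mathcal F_{s-1}] = 0$, so $M_t := \sum_{s \le t} y_s$ is a martingale, and $A_t := \sum_{s \le t} z_s$ is a predictable process with $z_s \le 0$ for all $s$ (this is exactly the supermartingale hypothesis $\E[x_s \mid \mathcal F_{s-1}] \le 0$), hence $A_t$ is nonincreasing. Therefore $S_t = M_t + A_t \le M_t$ pathwise for every $t$.

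Next I would check that the martingale $M_t$ satisfies the hypotheses of Freedman's Inequality with the \emph{same} constants $M$ and $b$. The uniform upper bound is immediate: the corollary assumes $x_s - \E[x_s \mid \mathcal F_{s-1}] = y_s \le M$, which is precisely the one-sided bound required of the increments of $M_t$. For the variance, note that $\Var[y_s \mid \mathcal F_{s-1}] = \Var[x_s - \E[x_s\mid\mathcal F_{s-1}] \mid \mathcal F_{s-1}] = \Var[x_s \mid \mathcal F_{s-1}] = W_s$, since subtracting the conditional mean (an $\mathcal F_{s-1}$-measurable quantity) does not change the conditional variance. Hence the accumulated variance $\sum_{s \le t} \Var[y_s \mid \mathcal F_{s-1}] = V_t$ is unchanged, and the event $\{V_t \le b\}$ is the same for both processes.

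Finally I would combine these two observations: applying Freedman's Inequality to $M_t$ gives that with probability at least $1 - e^{-a^2/(2b + Ma)}$ we have $M_t \le a$ for all $t$ with $V_t \le b$; and on that event, since $S_t \le M_t$ pathwise, we also have $S_t \le a$ for all such $t$. This is exactly the claimed bound. I do not expect any genuine obstacle here --- the only point requiring a moment's care is the assertion that the Doob--Meyer compensator of a supermartingale is nonincreasing, which follows directly from discrete-time predictability together with $\E[x_s \mid \mathcal F_{s-1}] \le 0$; everything else is bookkeeping.
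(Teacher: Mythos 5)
Your proposal is correct and follows exactly the route the paper intends: the paper's proof consists of the single sentence that the result is ``immediate by taking the Doob--Meyer decomposition of a supermartingale as a martingale plus a decreasing predictable process,'' and your write-up simply fills in the bookkeeping (the centered increments form the martingale, the compensator is nonincreasing so $S_t\leq M_t$ pathwise, and the constants $M$ and $V_t$ are unchanged). Nothing further is needed.
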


\begin{proof}
Write $S_t=M_t+D_t$ as the sum of a martingale $M_t$ and a decreasing predictable process $D_t$ with $D_1=0$. Explicitly, 
\begin{align*}
	M_t&=\sum_{1\leq s\leq t} S_s - \sum_{1\leq s\leq t-1}\E[S_{s+1}~|~\mathcal F_{s}];
	\\
	D_t&= \sum_{1\leq s\leq t-1} (S_s-\E[S_{s+1}~|~\mathcal F_{s}]).
\end{align*}
Then apply Theorem~\ref{thm:freedman} to $M_t$ and observe that $S_t\leq M_t$ almost surely for all $t$.
\end{proof}

Towards proving the two claims in Theorem~\ref{thm:bayesianexplorer} we first prove two lemmas. They follow directly from proper applications of Freedman's Theorem or its corollary. The second was used previously in the main body as well.

\begin{lemma}
\label{lem:bayesianexplorerR}
In the context of Theorem~\ref{thm:bayesianexplorer}, with probability at least $1-\frac{2}{T^2}$, for all $t$ with $L_t^{\mathcal R}(i)\leq \oL^*$ it holds that \[U_t^{\mathcal R}(i)\leq 2\oL^*+\frac{8\log T}{\gamma_2}.\]

\end{lemma}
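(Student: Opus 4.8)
The plan is to fix an arm $i$, condition on the entire loss sequence $(\ell_1,\dots,\ell_T)$, and then apply Freedman's inequality (Corollary~\ref{cor:freedman}) to the supermartingale $S_t:=U_t^{\mathcal R}(i)-L_t^{\mathcal R}(i)$, with $\mathcal F_s$ the $\sigma$-algebra generated by the player's first $s$ actions (all expectations and variances below are understood conditionally on the fixed loss sequence as well). The reason for conditioning on the loss sequence first is that the partition $[d]=\mathcal R_s\cup\mathcal C_s$, being determined by the posterior $p_s$ and hence by the feedback observed through time $s-1$, becomes \emph{predictable}; consequently $\ell_s^{\mathcal R}(i)=\ell_s(i)\,1_{i\in\mathcal R_s}$ is $\mathcal F_{s-1}$-measurable, and the only quantity still random at time $s$ is the indicator $1_{i\in a_s}$, whose conditional mean is $\hat p_s(i)$.

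First I would check that $x_s:=u_s^{\mathcal R}(i)-\ell_s^{\mathcal R}(i)$ is a supermartingale difference: $\E[x_s\mid\mathcal F_{s-1}]=\bigl(\tfrac{\hat p_s(i)}{\gamma_2}-1\bigr)\ell_s^{\mathcal R}(i)$, which is $\le 0$ for $i\in\mathcal R_s$ since $\hat p_s(i)\le\gamma_2$ by the rare-arm hypothesis of Theorem~\ref{thm:bayesianexplorer}, and is $0$ for $i\in\mathcal C_s$ because both $u_s^{\mathcal R}(i)$ and $\ell_s^{\mathcal R}(i)$ vanish. Next, since $u_s^{\mathcal R}(i)\le\ell_s^{\mathcal R}(i)/\gamma_2\le 1/\gamma_2$, the increments obey $x_s-\E[x_s\mid\mathcal F_{s-1}]\le u_s^{\mathcal R}(i)\le 1/\gamma_2=:M$, while the conditional variance is
\[
W_s=\mathrm{Var}[x_s\mid\mathcal F_{s-1}]=\frac{(\ell_s^{\mathcal R}(i))^2}{\gamma_2^2}\,\hat p_s(i)\bigl(1-\hat p_s(i)\bigr)\le\frac{(\ell_s^{\mathcal R}(i))^2}{\gamma_2}\le\frac{\ell_s^{\mathcal R}(i)}{\gamma_2},
\]
using $\hat p_s(i)\le\gamma_2$ (for $i\in\mathcal R_s$; otherwise the term is $0$) and $\ell_s(i)\in[0,1]$. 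Summing, the accumulated variance satisfies $V_t\le L_t^{\mathcal R}(i)/\gamma_2$. This last point is the main thing to get right, and it is precisely where conditioning on the loss sequence pays off: because $\ell_s^{\mathcal R}(i)$ is a constant given $\mathcal F_{s-1}$, it factors out of the variance and $V_t$ is controlled by the \emph{actual} rare loss $L_t^{\mathcal R}(i)$ rather than by its posterior expectation $\sum_{s\le t}\E[\ell_s^{\mathcal R}(i)\mid\mathcal F_{s-1}]$; without the conditioning one is stuck with the latter, which does not match the hypothesis $L_t^{\mathcal R}(i)\le\underline{L}^*$ and would force an extra self-bounding martingale argument.

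It then remains to apply Corollary~\ref{cor:freedman} with $a=\underline{L}^*+\tfrac{8\log T}{\gamma_2}$ and $b=\underline{L}^*/\gamma_2$. Since $t\mapsto L_t^{\mathcal R}(i)$ is nondecreasing, $\{t:L_t^{\mathcal R}(i)\le\underline{L}^*\}\subseteq\{t:V_t\le b\}$, so the corollary yields $S_t\le a$ — i.e. $U_t^{\mathcal R}(i)\le L_t^{\mathcal R}(i)+a\le 2\underline{L}^*+\tfrac{8\log T}{\gamma_2}$ — simultaneously for all such $t$, outside an event of probability $e^{-a^2/(2b+Ma)}$. Finally one checks the exponent: because $a\ge\underline{L}^*$ we have $2b=\tfrac{2\underline{L}^*}{\gamma_2}\le 2Ma$, hence $2b+Ma\le 3Ma=3a/\gamma_2$ and $\tfrac{a^2}{2b+Ma}\ge\tfrac{a\gamma_2}{3}\ge\tfrac{8}{3}\log T$, so the failure probability is at most $T^{-8/3}\le 2/T^2$. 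Since the whole argument runs verbatim, with the same probability bound, conditionally on every realization of $(\ell_1,\dots,\ell_T)$, it holds unconditionally, which is exactly the statement of the lemma.
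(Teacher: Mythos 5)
Your proposal is correct and follows essentially the same route as the paper: the same supermartingale $U_t^{\mathcal R}(i)-L_t^{\mathcal R}(i)$, the same trick of conditioning on the loss sequence so that $\ell_s^{\mathcal R}(i)$ is predictable and $V_t\le L_t^{\mathcal R}(i)/\gamma_2$, and the same application of Freedman's inequality. The only (immaterial) difference is the parametrization: the paper takes $a=\frac{4\log T}{\gamma_2}+4\sqrt{\underline{L}^*\log T/\gamma_2}$ and finishes with an AM--GM step, whereas you take $a=\underline{L}^*+\frac{8\log T}{\gamma_2}$ directly and verify the exponent by hand; both yield the stated bound.
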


\bayesianexplorerC*

\begin{remark}

Lemma~\ref{lem:bayesianexplorerC} has no dependence on $\oL^*$ and holds with $\oL^*=\infty$. For proving Theorem~\ref{thm:bayesianexplorer} we will simply take $\tilde L=\oL^*$. However it is necessary to apply Lemma~\ref{lem:bayesianexplorerC} with $\tilde L\neq \oL^*$ to analyze the semi-bandit setting.

\end{remark}

\begin{proof}{of Lemma~\ref{lem:bayesianexplorerR}:}\\ 

We analyze the (one-sided) error in the underestimate $U_t^{\mathcal R}(i)$ for $L_t^{\mathcal R}(i)$. Define the supermartingale $S_t=\sum_{s\leq t}x_s$ for 
\[
	x_s=x_s(i):=u_s^{\mathcal R}(i)-\ell_s^{\mathcal R}(i).
\]
We apply Corollary~\ref{cor:freedman} to this supermartingale, taking 
\[
	(a,b,M)=\left(\frac{4\log T}{\gamma_2}+4\sqrt{\frac{\oL^*\log T}{\gamma_2}},\frac{\oL^*}{\gamma_2}, \frac{1}{\gamma_2}\right).
\] 
For the filtration, we take the loss sequence $(\ell_t(i))_{t\in [T]}$ as known from the start so that the only randomness is from the player's choices. Equivalently, we act as the observing adversary; note that $S_t$ is still a supermartingale with respect to this filtration. Crucially, this means the conditional variance is bounded by $W_t\leq \frac{\ell_t^{\mathcal R}(i)}{\gamma_2}$. Therefore $V_t\leq \frac{L_t^{\mathcal R}(i)}{\gamma_2}$. Note also that with these parameters,
\[
	e^{-\frac{a^2}{2b+Ma}}\leq e^{-\frac{a^2}{4b}}+e^{-\frac{a}{2M}}\leq \frac{1}{T^2}+\frac{1}{T^2}=\frac{2}{T^2}.
\] 
Therefore by Freedman's inequality, with probability $1-\frac{2}{T^2}$, for all $t$ with $L_t^{\mathcal R}(i)\leq \oL^*$ we have 
\[
	S_t\leq a=\frac{4\log T}{\gamma_2}+4\sqrt{\frac{\oL^*\log T}{\gamma_2}}
\] 
and hence 
\begin{align*}
	U_t^{\mathcal R}(i)
	&\leq 
	L_t^{\mathcal R}(i)+\frac{4\log T}{\gamma_2}+4\sqrt{\frac{\oL^*\log T}{\gamma_2}}
	\\
	&\leq 
	\oL^*+\frac{4\log T}{\gamma_2}+4\sqrt{\frac{\oL^*\log T}{\gamma_2}}
	\\
	&\leq 
	2\oL^*+\frac{8\log T}{\gamma_2}.
\end{align*}
\end{proof}

\begin{proof}{of Lemma~\ref{lem:bayesianexplorerC}:}\\

As discussed previously we use the estimator
\[
	U_t^{\mathcal C}(i)=\sum_{s\leq t}  \frac{\ell^{\mathcal C}_s(i)\cdot1_{i_s=i}}{\hat p_s(i)}.
\]
for $L_t^{\mathcal C}(i)$.
We will again apply Freedman's inequality from the point of view of the adversary, this time to the martingale sequence $S_t=\sum_{s\leq t}x_s$ for 
\[
x_s=x_s(i):=\left(\frac{u_s^{\mathcal C}(i)}{\hat p_s(i)}-\ell_s^{\mathcal C}(i)\right).
\]

We have $x_s\leq \frac{1}{\gamma_1}=M$ and $V_t\leq \frac{L_t^{\mathcal C}(i)}{\gamma_1}$. We use the parameters $b=\frac{\tilde L}{\gamma_1}$ and $a=\lambda\sqrt{\frac{\tilde L}{\gamma_1}}$. Using $\gamma\geq\frac{1}{\tilde L}$ in the penultimate inequality and then $\lambda\geq 2$ yields the estimate:
\[
	e^{-\frac{a^2}{2b+Ma}}\leq e^{-\frac{a^2}{4b}}+e^{-\frac{a}{2M}}\leq e^{-\frac{\lambda^2}{4}}+e^{-\frac{\lambda^2\sqrt{\tilde L\gamma_1}}{2}}\leq e^{-\frac{\lambda^2}{4}}+e^{-\frac{\lambda}{2}}\leq 2e^{-\frac{\lambda}{2}}.
\]
Freedman's inequality implies that with probability at least $1-2e^{-\lambda/2}$, for all $t$ with $L_t^{\mathcal C}(i)\leq \tilde L$,
\[
	U_t^{\mathcal C}(i)\leq L_t^{\mathcal C}(i)+\lambda\sqrt{\frac{\tilde L}{\gamma_1}}.
\]
\end{proof}

Now we use these lemmas to prove Theorem~\ref{thm:bayesianexplorer}. In both halves, the main idea is that if something holds with high probability for any loss sequence, then the player must assign it high probability on average.

\begin{proof}{of Theorem~\ref{thm:bayesianexplorer}A:}\\

Let $E$ be the event that for all $t$ with $L_t^{\mathcal R}(i)\leq \oL^*$ we have \[U_t^{\mathcal R}(i)\leq 2\oL^*+\frac{8\log T}{\gamma_2}.\] By Lemma~\ref{lem:bayesianexplorerR}, $\mathbb P[E]\geq 1-\frac{2}{T^2}$ for any fixed loss sequence. The player does not know what the true loss sequence is, but his prior is a mixture of possible loss sequences, and so the player also assigns $E$ a probability at least $1-\frac{2}{T^2}$ at the start of the game. Let $F$ denote the event that
\[
	\mathbb P_t[E]\geq 1-\frac{1}{T},\quad \forall t\in [T].
\]
Since $\mathbb P_t[E]$ is a martingale, Doob's inequality implies
\[
	\mathbb P[F]\geq 1-\frac{2}{T}.
\] 

Assume now that $F$ holds, so that $\mathbb P_t[E]\geq 1-\frac{1}{T}$ at all times. Let $\tau$ be the first time at which 
\[
	U_{\tau}^{\mathcal R}(i)>2\oL^*+\frac{8\log T}{\gamma_2}.
\]
(If no such time exists, set $\tau=+\infty$.) Then as long as $E$ holds we must have $L_t^{\mathcal R}(i)>\oL^*$ and so $a^*\neq i$. Therefore, if $F$ holds then for all $t\geq\tau$,
\begin{align*}
	\mathbb P_t[i\in A_t]
	&=
	p_t(i)
	\\
	&=
	\mathbb P_t[i\in A^*]
	\\
	&\leq
	\mathbb P_t\left[L_t^{\mathcal R}(i)\leq \oL^*\right]
	\\
	&\leq
	1-\mathbb P_t[E]
	\\
	&\leq
	1/T.
\end{align*}
It follows that
\begin{equation}
\label{eq:F-true}
	1_{F}\cdot
	\sum_{t=\tau+1}^T
	p_t(i)
	\leq 
	1.
\end{equation}
On the other hand, since $\mathbb P[F]\geq 1-\frac{2}{T}$ the leftover contribution from $F$ being false is bounded by
\begin{equation}
\label{eq:F-false}
	\mathbb E
	\left[
		(1-1_{F})\cdot
		\sum_{t=\tau+1}^T
		p_t(i)
	\right]
	\leq 
	2.
\end{equation}

To finish, note that 
\[
	\gamma_2U_t^{\mathcal R}(i)
	=
	\sum_{s\leq t} \ell_s^{\mathcal R}(i)\cdot 1_{i\in A_s}
\]
is exactly the total loss paid by the player from arm $i$ while $i\in\mathcal R_t$ is rare. Therefore $\tau$ is the smallest value satisfying 
\[
	\gamma_2 U_{\tau}^{\mathcal R}(i)
	> 
	\gamma_2\left(2\oL^*+\frac{8\log T}{\gamma_2}\right)=2\gamma_2 \oL^*+8\log T.
\] 
Since the increments of $U_t^{\mathcal R}(i)$ are bounded by $1/\gamma_2$, we have almost surely 
\begin{align*}
	\sum_{t\leq \tau} \ell_t^{\mathcal R}(i)1_{i\in A_t}
	&=
	\gamma_2 U_{\tau}^{\mathcal R}(i)
	\\
	&\leq 
	2\gamma_2 \oL^*+8\log(T)+1.
\end{align*}
Combining with \eqref{eq:F-true} and \eqref{eq:F-false} we finally obtain
\begin{align*}
	\E\left[
		\sum_{t\in [T]:\text{ }i\in \mathcal R_t} \hat p_t(i)\ell_t(i)
	\right]
	&=
	\E\left[
		\sum_{t\in [T]:\text{ }i\in \mathcal R_t}\ell_t(i)1_{i\in A_t}
	\right]
	\\
	&=
	\E\left[
		\sum_{t\in [T]}\ell_t^{\mathcal R}(i)1_{i\in A_t}
	\right]
	\\
	&\leq 
	\E\left[
		\sum_{t\leq \tau}\ell_t^{\mathcal R}(i)1_{i\in A_t}
	\right]
	+
	\E\left[
		1_F \sum_{t=\tau+1}^T\ell_t^{\mathcal R}(i)1_{i\in A_t}
	\right]
	\\
	&\quad\quad\quad
	+
	\E\left[
		(1-1_F) \sum_{t=\tau+1}^T\ell_t^{\mathcal R}(i)1_{i\in A_t}
	\right]
	\\
	&\leq
	2\gamma_2 \oL^*+8\log T + 4.
\end{align*}
\end{proof}

\begin{proof}{of Theorem~\ref{thm:bayesianexplorer}B:}\\

For $\lambda\geq 0$, let $E_{\lambda}$ be the event that for all $t$ with $L_t^{\mathcal C}(i)\leq \oL^*$,
\[
	U_t^{\mathcal C}(i)\leq L_t^{\mathcal C}(i)+\lambda\sqrt{\frac{\oL^*}{\gamma_1}}.
\]
We apply Lemma~\ref{lem:bayesianexplorerC} with $\tilde L=\oL^*$, obtaining 
\[
	\mathbb P[E_{\lambda}]\geq 1-2e^{-\lambda/2},
	\quad\quad
	\forall\lambda>2.
\]
Let $\tau_{\lambda}$ be the first time such that 
\[
	U_{\tau_{\lambda}}^{\mathcal C}(i)
	>
	\oL^*+\lambda\sqrt{\frac{\oL^*}{\gamma_1}}.
\] 
(If no such time exists, take $\tau_{\lambda}=+\infty$.) As before, note that at the start we have
\[
	\mathbb P_1[E]\geq 1-2e^{-\lambda/2}
\]
since the initial prior is some mixture of loss sequences. By definition, if $E_{\lambda}$ holds and $\tau_{\lambda}<\infty$ then $i\notin A^*$. Hence 
\begin{align*}
	\E[p_{\tau_{\lambda}\wedge T}(i)]
	&\leq
	\E[1-\mathbb P_{\tau_{\lambda}}[E_{\lambda}]]
	\\
	&=
	1-\mathbb P[E_{\lambda}]
	\\
	&\leq 
	2e^{-\lambda/2}
\end{align*}
by optional stopping (on the martingale $p_t(i)$) since $U_t^{\mathcal C}(i)$ is computable by the player (i.e. adapted to the player's filtration). By Doob's inequality applied to the same martingale,
\begin{align*}
	\mathbb P\left[\sup_{t\in[\tau_{\lambda},T]} p_t(i) >\gamma_1\right]
	&\leq
	\frac{\E[p_{\tau_{\lambda}\wedge T}(i)]}{\gamma_1}
	\\
	&\leq
	\frac{2e^{-\lambda/2}}{\gamma_1}
	\\
	&= 
	2e^{-\frac{\lambda-2\log(1/\gamma_1)}{2}}.
\end{align*}
Now, let $\lambda^*$ be such that $U_t^{\mathcal C}(i)=\oL^*+\lambda^*\sqrt{\frac{\oL^*}{\gamma_1}}$ at the last time $t$ when $p_t(i)>\gamma_1$.
What we have just shown is equivalent to 
\[
	\mathbb P[\lambda^*>\lambda]\leq 2e^{-\frac{\lambda-2\log(1/\gamma_1)}{2}}.
\] 
In other words, $\lambda^*$ has tail bounded above by an exponential random variable with half-life $2\log(2)$ starting at $2\log(1/\gamma_1)+2\log(2)$, and therefore \[\E[\lambda^*]\leq 2\log(1/\gamma_1)+10.\] However, we always have $U_T^{\mathcal C}(i)=\oL^*+\lambda^*\sqrt{\frac{\oL^*}{\gamma_1}}$ since after the last time $t$ with $p_t(i)>\gamma_1$, the value of $U_t^{\mathcal C}(i)$ cannot change. Recall also that $U_T^{\mathcal C}(i)$ is an unbiased estimator for $L_T^{\mathcal C}(i)$. Combining completes the proof:
\begin{align*}
	\E[L_T^{\mathcal C}(i)]
	&=\E[U_T^{\mathcal C}(i)]
	\\
	&= \oL^*+\E[\lambda^*]\sqrt{\frac{\oL^*}{\gamma_1}}
	\\
	&\leq \oL^*+2\left(\log\left(\frac{1}{\gamma_1}\right)+10\right)\sqrt{\frac{\oL^*}{\gamma_1}}.
\end{align*}
\end{proof}

\end{document}